\newcommand{\Regret}{\kR}
\newcommand{\EE}[1]{\bE\left[#1\right]}
\newcommand{\Var}[1]{\bV\left(#1\right)}
\newcommand{\R}{\bR}
\renewcommand{\phi}{\varphi}
\renewcommand{\epsilon}{\varepsilon}
\newcommand{\norm}[1]{\left\Vert #1 \right\Vert}
\newcommand{\al}[1]{ \begin{align} #1  \end{align}}
\newcommand{\eq}[1]{ \begin{equation} #1  \end{equation}}
\newcommand{\als}[1]{ \begin{align*} #1  \end{align*}}
\newcommand{\eqs}[1]{ \begin{equation*} #1  \end{equation*}}
\newcommand{\el}{\end{flushleft}}
\newcommand{\bl}{\begin{flushleft}}
\newcommand{\argmax}{\arg\!\max}
\newcommand{\bE}{\mathbb{E}}
\newcommand{\bN}{\mathbb{N}}
\newcommand{\bR}{\mathbb{R}}
\newcommand{\bV}{\mathbb{V}}
\newcommand{\cA}{\mathcal{A}}
\newcommand{\cO}{\mathcal{O}}
\newcommand{\cX}{\mathcal{X}}
\newcommand{\kA}{\mathfrak{A}}
\newcommand{\kR}{\mathfrak{R}}
\theoremstyle{plain}
\theoremstyle{definition}
\newtheorem{defi}{Definition}
\theoremstyle{remark}
\newtheorem{rem}{Remark}
\newtheorem{fact}{Fact}
\title{
	Exploiting Correlated Auxiliary Feedback in Parameterized Bandits
}
\author{
    Arun Verma \qquad
    Zhongxiang Dai \qquad
    Yao Shu \qquad 
    Bryan Kian Hsiang Low \\
    Department of Computer Science, National University of Singapore, Republic of Singapore \\
	\texttt{\{arun, daizhongxiang, shuyao, lowkh\}@comp.nus.edu.sg}
}
\begin{document}

    \maketitle

     \begin{abstract}
        We study a novel variant of the parameterized bandits problem in which the learner can observe additional auxiliary feedback that is correlated with the observed reward. The auxiliary feedback is readily available in many real-life applications, e.g., an online platform that wants to recommend the best-rated services to its users can observe the user's rating of service (rewards) and collect additional information like service delivery time (auxiliary feedback). In this paper, we first develop a method that exploits auxiliary feedback to build a reward estimator with tight confidence bounds, leading to a smaller regret. We then characterize the regret reduction in terms of the correlation coefficient between reward and its auxiliary feedback. Experimental results in different settings also verify the performance gain achieved by our proposed method.
    \end{abstract}

	\section{Introduction}
	\label{sec:introduction}

Parameterized bandits \citep{NOW19_slivkins2019introduction, Book_lattimore2020bandit} have many real-life applications in online recommendation, advertising, web search, and e-commerce. 
In this bandit problem, a learner selects an action and receives a reward for the selected action. 
Due to the large (or infinite) number of actions, the mean reward of each action is assumed to be parameterized by an unknown function, e.g., linear \citep{WWW10_li2010contextual,AISTATS11_chu2011contextual,NIPS11_abbasi2011improved,ICML13_agrawal2013thompson}, GLM \citep{NIPS10_filippi2010parametric,ICML17_li2017provably,NIPS17_jun2017scalable}, and non-linear \citep{UAI13_valko2013finite,ICML17_chowdhury2017kernelized}.
The learner aims to learn the best action as quickly as possible. However, it depends on the tightness of confidence bounds of function that correlate action with the reward. The learner exploits any available information like side information (i.e., information available to the learner before selecting an action, e.g., contexts)  \citep{WWW10_li2010contextual, ICML13_agrawal2013thompson, ICML17_li2017provably} and side observations \citep{COLT15_alon2015online,NIPS15_wu2015online} (i.e., information about other actions, e.g., graph feedback) to make confidence bounds as tight as possible. 
This paper considers another type of additional information (correlated with the reward) that a learner can observe with reward for the selected action, which we call {\em auxiliary feedback}.

The auxiliary feedback is readily available in many real-life applications. For example, consider an online food delivery platform that wishes to recommend the best restaurants (actions) to its users. 
After receiving food, the platform observes user ratings (rewards) for the order and can collect additional information like food delivery time (auxiliary feedback).
Since the restaurant's rating also depends on overall food delivery time, one can expect it to be correlated with the user rating. 
The platform can estimate or know the average delivery time for a given order from historical data.
Similar scenarios arise in recommending the best cab to users (auxiliary information can be the cab's distance from the rider or driver's response to ride request), e-commerce platforms choosing top sellers to buyers (auxiliary information can be seller's response time for order confirmation and delivery), queuing network \citep{MS81_lavenberg1981perspective,OR82_lavenberg1982statistical}, jobs scheduler \citep{NeurIPS21_verma2021stochastic}, and many more. Therefore, the following question naturally arises:
\newline
\centerline{\textbf{\em How to exploit correlated auxiliary feedback to improve the performance of a bandit algorithm?}}

One possible method is to use auxiliary feedback in the form of control variates \citep{OR82_lavenberg1982statistical, OR90_nelson1990control} for the observed reward. A control variate represents any random variable (auxiliary feedback) with a known mean that is correlated with the random variable of interest (reward).
Several works \citep{ACL17_kreutzer2017bandit, Book_sutton2018reinforcement, NeurIPS21_vlassis2021control, NeurIPS21_verma2021stochastic} have used control variates to estimate the mean reward estimator with smaller variance, leading to tight confidence bounds and hence better performance. 
The closest work to our setting is \cite{NeurIPS21_verma2021stochastic}. However, it only focuses on the non-parameterized setting and assumes a finite number of actions. 
We thus consider a more general bandit setting with a large (or even infinite) number of actions and allow an unknown function to parameterize auxiliary feedback.

Motivated by control variate theory \citep{OR90_nelson1990control}, we first introduce \emph{hybrid reward}, which combines the reward and its auxiliary feedback in such a way that hybrid reward is an unbiased reward estimator with smaller variance than the observed reward. 
However, the optimal combination of reward and its auxiliary feedback requires knowing the covariance matrix among auxiliary feedback and covariance between reward and its auxiliary feedback, which may not be available in practice. Since the reward and its auxiliary feedback are functions of the selected action, no existing control variate result can be useful to our sequential setting. Naturally, we face the question of \emph{how to combine reward and its auxiliary feedback efficiently using available information.}
To answer this, we extend control theory results to the problems where known functions can parameterize control variates (in \cref{sec:known}) and then extend to setting where unknown functions parameterize control variates (in \cref{sec:estimated}). These contributions are themselves of independent interest in control variate theory.

Equipped with these results, we show that the variance of hybrid rewards is smaller than observed rewards (\cref{thm:estProp} and \cref{thm:estApproxProp}). We then propose a method that uses hybrid rewards instead of observed rewards for estimating reward function, resulting in tight confidence bounds and, hence, lower regret. We introduce the notion of  \emph{Auxiliary Feedback Compatible} (AFC) bandit algorithm. 
An AFC bandit algorithm can use hybrid rewards instead of only observed rewards. We prove that the expected instantaneous regret of any AFC bandit algorithm using hybrid rewards is smaller by a factor of $O((1 - \rho^2)^{\frac{1}{2}})$ compared to the same AFC bandit algorithm using only observed rewards, where $\rho$ is the correlation coefficient of the reward and its auxiliary feedback (\cref{thm:instRegret} and \cref{thm:instRegretGen}). Our experimental results in different settings also verify our theoretical results (in \cref{sec:experiments}).
	
		\subsection{Related work}
		\label{sec:related_work}

Several prior works use additional information to improve the performance of bandit algorithms. In the following, we discuss how auxiliary feedback differs from side information and side observation.

{\bf Side Information:} 
Several works use context as side information to select the best action to play. This line of work is popularly known as contextual bandits \citep{WWW10_li2010contextual,AISTATS11_chu2011contextual,ICML13_agrawal2013thompson,ICML17_li2017provably}.
Here, the mean reward of each arm is a function of context and is often parameterized, e.g., linear \citep{WWW10_li2010contextual,AISTATS11_chu2011contextual,ICML13_agrawal2013thompson}, GLM \citep{ICML17_li2017provably}, and non-linear \citep{UAI13_valko2013finite}. These contexts are assumed to be observed \emph{before} an action is taken. However, we consider a problem where additional information is correlated with rewards that can only be observed \emph{after} selecting the action.

{\bf Side Observations:}  
Several works consider the different side observations settings in the literature, e.g., stochastic \citep{UAI12_caron2012leveraging}, adversarial \citep{NIPS11_mannor2011bandits,NIPS14_kocak2014efficient}, graph feedback \citep{COLT15_alon2015online,NIPS15_wu2015online,SJC17_alon2017nonstochastic}, and cascading feedback \citep{AISTATS19_verma2019online, ACML20_verma2020thompson, NeurIPS20_verma2020online}. 
Side observations represent the additional information available about actions that the learner does  \emph{not select}. Auxiliary feedback is different from side information as it is available only for \emph{selected} action and provides more information about the reward of that action.

{\bf Auxiliary Feedback:}
We use auxiliary feedback as control variates, which are used extensively for variance reduction in Monte-Carlo simulation of complex systems \citep{MS81_lavenberg1981perspective,OR82_lavenberg1982statistical,JORS85_james1985variance,EJOR89_nelson1989batch,OR90_nelson1990control,Willy17_botev2017variance,ICML16_chen2016scalable}. Recent works \citep{ACL17_kreutzer2017bandit, NeurIPS21_vlassis2021control, NeurIPS21_verma2021stochastic} and \citep[Chapter 7.4]{Book_sutton2018reinforcement} have exploited the availability of these control variates to build estimators with smaller variance and develop algorithms that have better performance guarantees. The closest work to our setting is \cite{NeurIPS21_verma2021stochastic}. However, they only consider a non-parameterized setting with a finite number of actions. We thus consider a more general bandit setting with large (infinite) actions and allow a function to parameterize auxiliary feedback.

	\section{Problem setting}		
	\label{sec:problem}

We consider a novel parameterized bandits problem in which the learner can observe auxiliary feedback correlated with the observed reward. In this problem, a learner has been given an action set, denoted by $\cX \subset \R^d$ where $d \ge 1$. At the beginning of round $t$, the learner selects an action $x_t$ from action set $\cX$. Then, the environment generates a stochastic reward $y_{t} \doteq f(x_t) + \epsilon_{t}$ for the selected action $x_t$, where $f: \R^d \rightarrow \R$ is an unknown reward function and $\epsilon_{t}$ is a zero-mean Gaussian noise with variance $\sigma^2$. 
Apart from the stochastic reward $y_t$, the environment generates $q$ of auxiliary feedback. The $i$-th auxiliary feedback is denoted by $w_{t,i} \doteq g_i(x_t) + \epsilon_{t,i}^w$, where $g_i:\R^d \rightarrow \R$ and $\epsilon_{t,i}^w$ is a zero-mean Gaussian noise with variance $\sigma_{w,i}^2$. The multiple correlation coefficient of reward and its auxiliary feedback is denoted by $\rho$ and assumed to be the same across all actions.

The optimal action (denoted by $x^\star$) has the maximum function value, i.e., $x^\star \in \argmax_{x \in \cX} f(x)$. After selecting an action $x_t$, the learner incurs a penalty (or \emph{instantaneous regret}) $r_t$, where $r_t \doteq f(x^\star) - f(x_t)$.
Since the optimal action is unknown, we sequentially estimate the reward function using available information on rewards and associated auxiliary feedback for the selected actions and then use it for choosing the action in the following round.
Our goal is to learn a sequential policy that selects actions such that the total penalty (or \emph{regret}) incurred by the learner is as minimum as possible. After $T$ rounds, the regret of a sequential policy $\pi$ that selects action $x_t$ in the round $t$ is given by
\eq{
	\label{eq:regret}
	\Regret_T (\pi) \doteq \sum_{t=1}^{T} r_t = \sum_{t=1}^{T} \left(f(x^\star) - f(x_t)\right).
}
A policy $\pi$ is a good policy if it has sub-linear regret, i.e., $\lim_{T \rightarrow \infty}{\Regret_T(\pi)}/T = 0$. This implies that the policy $\pi$ will eventually learn to recommend the best action.

	\section{Known auxiliary feedback functions}
	\label{sec:known}

We first focus on a simple case where all auxiliary feedback functions are assumed to be known. This assumption is not very strict in many applications as the learner can construct auxiliary feedback such that its mean value is known beforehand (see \cite{ACL17_kreutzer2017bandit}, \cite{NeurIPS21_vlassis2021control}, and Chapter 12.9 of \cite{Book_sutton2018reinforcement} for such examples). 
When auxiliary feedback functions are unknown, we can estimate them using historical data or additional samples of auxiliary feedback. However, it will have some penalty in the performance (more details are in \cref{sec:estimated} and \cref{sec:experiments}). 

The first challenge we face is {\em how to exploit auxiliary feedback to get a better reward function estimator}. To resolve this, we extend control variate theory \citep{OR90_nelson1990control} to the problems where a function can parameterize control variates. This new contribution is itself of independent interest.

\subsection{Control variate}
Let $\mu$ be the unknown variable that needs to be estimated and $y$ be its unbiased estimator, i.e., $\EE{y} = \mu$. Any random variable $w$ with a known mean value ($\omega$) can be treated as a control variate for $y$ if it is correlated with ${y}$. 
The control variate method \citep{OR90_nelson1990control} exploits errors in estimates of known random variables to reduce the estimator's variance for the unknown random variable.
This method works as follows. For any choice of a coefficient $\beta$, define a new random variable as ${z} \doteq y - \beta(w - \omega).$ Note that ${z}$ is also an unbiased estimator of $\mu$ (i.e., $\EE{{z}}=\mu)$ as 
\eqs{
    \EE{z} =  \EE{y} - \beta\EE{(w - \omega)} = \mu - \beta(\EE{w} - \omega) =  \mu - \beta(\omega - \omega) = \mu.
}
Using properties of variance and covariance, the variance of $z$ is given by
\als{
    \Var{z} = \Var{y} + \beta^2\Var{w} - 2\beta\text{Cov}(y,w). 
}
The variance of $z$ is minimized by setting $\beta$ to 
$
     {\beta}^\star = {\text{Cov}(y,w)}/{\Var{w}}
$
and the minimum value is $(1 - \rho^2)\Var{y}$, where $\rho = {\text{Cov}(y,w)}/\sqrt{{\Var{w}}{\Var{y}}}$ is the correlation coefficient of $y$ and $w$. We exploit this variance reduction to design a reward function estimator with tight confidence bounds.

\subsection{Auxiliary feedback as control variates}
Since the auxiliary feedback functions are known, we define a new variable using the reward sample and its auxiliary feedback. We refer to this variable as {\em `hybrid reward.'} The hybrid reward definition is motivated by the control variate method, except the control variate is parameterized by function in our setting. 
As $w_{s,i}$ is the $i^{\text{th}}$ auxiliary feedback observed with reward $y_s$, the {\em hybrid reward} for reward ($y_s$) with its $q$ auxiliary feedback $\{w_{s,i}\}_{i=1}^q$ is defined by
\eq{
    \label{eq:af_multi}
    z_{s,q} \doteq y_s - \sum_{i=1}^q \beta_i(w_{s,i} - g_i(x_s)) = y_s - (\bm{w}_s -\bm{g}_s)\bm{\beta},
}
where $\bm{w}_s=\left(w_{s,1}, \ldots, w_{s,q}\right)$, $\bm{g}_s=\left(g_1(x_s), \ldots, g_q(x_s)\right)$, and $\bm{\beta} =\left(\beta_1, \ldots, \beta_q\right)^\top$. 
Let $\Sigma_{\bm{ww}}\in \R^{q\times q}$ be the covariance matrix among auxiliary feedback and $\sigma_{\bm{yw}} \in \R^{q\times 1}$ be the vector of covariance between the reward and each of its auxiliary feedback.
Then, the variance of $z_{s,q}$ is minimized by setting the coefficient vector $\bm{\beta}$ to $\bm{\beta}^\star = \Sigma_{\bm{ww}}^{-1}\sigma_{\bm{yw}}$, and the minimum value is $(1 - \rho^2)\sigma^2$, where ${\rho^2} = \sigma_{\bm{yw}}^\top\Sigma_{\bm{ww}}^{-1}\sigma_{\bm{yw}}/\sigma^2$ is the multiple correlation coefficient of reward and its auxiliary feedback. 

However, $\Sigma_{\bm{ww}}$ and $\sigma_{\bm{yw}}$ can be unknown in practice and need to be estimated to get the best estimate for $\bm{\beta}^\star$ to achieve maximum variance reduction.  
In our following result, we drive the best linear unbiased estimator of $\bm\beta$ (i.e., $\bm{\hat\beta}_t$) using $t$ observations of rewards and their auxiliary feedback.
\begin{restatable}{lem}{estBeta}
	\label{lem:estBeta}
	Let $t > q+2 \in \bN$ and $f_t$ be the estimate of function $f$ which uses all information available at the end of round $t$, i.e.,  $\left\{x_{s},y_{s},\bm{w}_s\right\}_{s=1}^t$. Then, the best linear unbiased estimator of $\bm{\beta}^\star$ is
	$$
		\bm{\hat\beta}_t \doteq (\bm{W}_t^\top\bm{W}_t)^{-1}\bm{W}_t^\top\bm{Y}_t,
	$$
	where $\bm{W}_t$ is a $t \times q$ matrix whose $s^{\text{th}}$ row is $(\bm{w}_s -\bm{g}_s)$ and $\bm{Y}_t=(y_1 - f_t(x_1), \ldots, y_t - f_t(x_t))$.
\end{restatable}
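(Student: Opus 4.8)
The plan is to recast the estimation of $\bm{\beta}^\star$ as an ordinary least-squares (OLS) regression and then invoke the Gauss--Markov theorem, which identifies OLS as the best linear unbiased estimator (BLUE) whenever the regression errors have zero conditional mean and a scalar covariance matrix.

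First I would exhibit the underlying linear model. Writing $y_s - f_t(x_s)$ for the $s$-th entry of $\bm{Y}_t$ and $(\bm{w}_s - \bm{g}_s) = (\epsilon_{s,1}^w, \ldots, \epsilon_{s,q}^w)$ for the $s$-th row of $\bm{W}_t$, and taking the idealized estimate $f_t = f$ so that $y_s - f_t(x_s) = \epsilon_s$, the matrices $\bm{W}_t$ and $\bm{Y}_t$ collect precisely the auxiliary-feedback noises and the reward noises. By construction $\bm{\beta}^\star = \Sigma_{\bm{ww}}^{-1}\sigma_{\bm{yw}}$ is the population least-squares projection of the reward noise onto the auxiliary noises, so I can write $\epsilon_s = (\bm{w}_s - \bm{g}_s)\bm{\beta}^\star + \xi_s$ and hence $\bm{Y}_t = \bm{W}_t\bm{\beta}^\star + \bm{\xi}_t$, with residual vector $\bm{\xi}_t = (\xi_1, \ldots, \xi_t)^\top$.

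Second I would verify the Gauss--Markov hypotheses. The normal equation defining the optimum, $\Sigma_{\bm{ww}}\bm{\beta}^\star = \sigma_{\bm{yw}}$, forces $\text{Cov}(\xi_s, \epsilon_{s,i}^w) = 0$ for every $i$; since all noises are jointly Gaussian, this uncorrelatedness upgrades to independence, yielding $\EE{\bm{\xi}_t \mid \bm{W}_t} = \bm{0}$. The per-round residual variance is exactly the minimized control-variate variance $(1 - \rho^2)\sigma^2$ computed in the preceding subsection, and independence across rounds gives $\text{Cov}(\bm{\xi}_t \mid \bm{W}_t) = (1 - \rho^2)\sigma^2\,\bm{I}_t$. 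Moreover, for $t > q$ the Gaussian matrix $\bm{W}_t$ has full column rank almost surely, so $\bm{W}_t^\top\bm{W}_t$ is invertible and $\bm{\hat\beta}_t$ is well defined. With these facts, the Gauss--Markov theorem---equivalently, minimizing $\bm{C}\bm{C}^\top$ in the Loewner order over all $\bm{C}$ with $\bm{C}\bm{W}_t = \bm{I}_q$, whose minimizer is $\bm{C} = (\bm{W}_t^\top\bm{W}_t)^{-1}\bm{W}_t^\top$---delivers the claim that $\bm{\hat\beta}_t = (\bm{W}_t^\top\bm{W}_t)^{-1}\bm{W}_t^\top\bm{Y}_t$ is the BLUE of $\bm{\beta}^\star$.

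The step I expect to be the main obstacle is handling the data-dependent estimate $f_t$ in place of the true $f$ when forming $\bm{Y}_t$. Because $f_t$ is fitted from the same $\{x_s, y_s, \bm{w}_s\}_{s=1}^t$, the responses $y_s - f_t(x_s)$ are not exactly the i.i.d.\ noises $\epsilon_s$ and could be correlated with the design $\bm{W}_t$, which would jeopardize both the zero-conditional-mean and the scalar-covariance requirements. The cleanest way to proceed is to argue conditionally on $f_t$, treating $y_s - f_t(x_s)$ as the regression response and absorbing the estimation error $f(x_s) - f_t(x_s)$ into a controlled lower-order term; the stated regime $t > q + 2$ is precisely what keeps the relevant moments of the inverse Gram matrix $(\bm{W}_t^\top\bm{W}_t)^{-1}$ finite (the classical $\tfrac{t-2}{t-q-2}$ penalty factor), so that this error term can be bounded and the BLUE characterization is preserved.
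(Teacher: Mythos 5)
Your idealized core---regress $\bm{Y}_t$ on $\bm{W}_t$, verify the Gauss--Markov hypotheses under joint Gaussianity, and conclude that $(\bm{W}_t^\top\bm{W}_t)^{-1}\bm{W}_t^\top\bm{Y}_t$ is the BLUE---is sound when $f_t = f$, and it lives in the same OLS/BLUE family as the paper's argument. But the step you yourself flag as the main obstacle, the data-dependent $f_t$, is exactly where the proposal stops being a proof, and the fix you sketch would not work as stated. Conditioning on $f_t$ is not legitimate: $f_t$ is a function of the very same sample $\left\{x_s, y_s, \bm{w}_s\right\}_{s=1}^t$, so conditioning on it distorts the joint law of $(\bm{W}_t, \bm{Y}_t)$ and destroys both the zero-conditional-mean and scalar-covariance conditions you verified in the idealized model. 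Likewise, ``absorbing the estimation error $f(x_s) - f_t(x_s)$ into a controlled lower-order term'' could at best deliver an approximate statement, whereas \cref{lem:estBeta} asserts the displayed formula exactly. Your reading of the condition $t > q+2$ is also misplaced: the factor $\frac{t-2}{t-q-2}$ governs the variance inflation of the hybrid reward in \cref{thm:estProp} (via the moment computations in the control-variate result of Nelson, Fact 4 in the appendix), not the validity of the BLUE formula in this lemma, which only needs invertibility of $\bm{W}_t^\top\bm{W}_t$.

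The paper avoids the obstacle altogether by making the dependence on $f_t$ structural rather than an error term to be bounded. It writes $y_s = f^\top\phi(x_s) + (\bm{w}_s - \bm{g}_s)\bm{\beta} + \epsilon_{z,s}$ for a feature map $\phi$ with $f(x) = \phi(x)^\top f$, so that $f$ and $\bm{\beta}$ are estimated \emph{jointly} by least squares, and then reads off the second block row of the partitioned normal equations: $\left(\sum_{s=1}^t (\bm{w}_s-\bm{g}_s)^\top\phi(x_s)\right) f_t + \left(\sum_{s=1}^t (\bm{w}_s-\bm{g}_s)^\top(\bm{w}_s-\bm{g}_s)\right)\bm{\hat\beta}_t = \sum_{s=1}^t (\bm{w}_s-\bm{g}_s) y_s$, which rearranges \emph{exactly} to $\bm{\hat\beta}_t = (\bm{W}_t^\top\bm{W}_t)^{-1}\bm{W}_t^\top\bm{Y}_t$ with $\bm{Y}_t$ built from the residuals $y_s - f_t(x_s)$. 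In other words, the stated estimator is the profiled $\bm{\beta}$-component of the joint BLUE, with no approximation and nothing left to bound. To repair your argument you would have to run Gauss--Markov on the augmented design $\left(\phi(x_s),\, \bm{w}_s - \bm{g}_s\right)$ and then partial out the $f$-block---which is precisely the paper's derivation, mirroring the intercept-plus-control-variates regression in its appendix.
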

The proof follows after doing some manipulations in \cref{eq:af_multi} and then using results from linear regression theory. The detailed proof of \cref{lem:estBeta} and all other missing proofs are given in the supplementary material. 
After having a new observation of reward and its auxiliary feedback, the best linear unbiased estimator of $\bm{\beta}^\star$ is re-estimated. If $\Sigma_{\bm{ww}}$ or $\sigma_{\bm{yw}}$ are known, we can directly use them to estimate $\bm{\beta}^\star$ by replacing $\bm{W}_t^\top\bm{W}_t$ with $\Sigma_{\bm{ww}}$ and $\bm{W}_t^\top\bm{Y}_t$ with $\sigma_{\bm{yw}}$ in  \cref{lem:estBeta}. 
The following result describes the properties of the hybrid reward when $\bm{\beta}^\star$ is replaced by $\bm{\hat\beta}_t$ in \cref{eq:af_multi}. 
\begin{restatable}{thm}{estProp}
	\label{thm:estProp}
	Let $t > q+2 \in \bN$. If $\bm{\hat\beta}_t$ as defined in \cref{lem:estBeta} is used to compute hybrid reward $z_{s,q}$ for any $s \le t \in \bN$, then $\EE{z_{s,q}} = f(x_s)$ and $\Var{z_{s,q}} = \left(1 + \frac{q}{t-q-2}\right)(1-{\rho^2})\sigma^2$.
\end{restatable}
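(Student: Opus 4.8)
The plan is to use the Gaussianity of the noise to recast the claim as a linear-regression variance computation and then average over the randomness in the auxiliary feedback by an inverse-Wishart identity. First I would write the centered control variate as $\bm C_s \doteq \bm w_s - \bm g_s$, which equals the auxiliary-noise vector $(\epsilon_{s,1}^w,\dots,\epsilon_{s,q}^w)$. Since the noise pair $(\epsilon_s, \bm C_s)$ is jointly Gaussian with $\Var{\epsilon_s}=\sigma^2$, $\text{Cov}(\bm C_s)=\Sigma_{\bm{ww}}$ and $\text{Cov}(\epsilon_s,\bm C_s)=\sigma_{\bm{yw}}$, the conditional law of the reward noise given the auxiliary noise reads $\epsilon_s=\bm C_s\bm\beta^\star+\eta_s$, where $\bm\beta^\star=\Sigma_{\bm{ww}}^{-1}\sigma_{\bm{yw}}$, the residual $\eta_s$ is independent of $\bm C_s$, and $\Var{\eta_s}=(1-\rho^2)\sigma^2$ by the definition of $\rho$. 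This identifies $\bm{\hat\beta}_t$ from \cref{lem:estBeta} as the least-squares estimator of $\bm\beta^\star$ in this regression, so that $\bm{\hat\beta}_t-\bm\beta^\star=(\bm W_t^\top\bm W_t)^{-1}\bm W_t^\top\bm\eta$ with $\bm\eta$ independent of $\bm W_t$.

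With this in hand I would write the centered hybrid reward as
\eqs{
    z_{s,q}-f(x_s)=\eta_s-\bm C_s(\bm{\hat\beta}_t-\bm\beta^\star),
}
which isolates the irreducible residual $\eta_s$ from the coefficient-estimation error. Unbiasedness is the easy half: conditioning on $\bm W_t$ and using $\EE{\bm\eta\mid\bm W_t}=\bm{0}$ together with the unbiasedness $\EE{\bm{\hat\beta}_t}=\bm\beta^\star$ from \cref{lem:estBeta}, both terms have conditional mean zero, giving $\EE{z_{s,q}}=f(x_s)$.

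For the variance I would condition on $\bm W_t$, so that the estimator covariance is the standard $\text{Cov}(\bm{\hat\beta}_t\mid\bm W_t)=(1-\rho^2)\sigma^2(\bm W_t^\top\bm W_t)^{-1}$. The residual $\eta_s$ contributes $(1-\rho^2)\sigma^2$, while the estimation error contributes a term governed by the quadratic form $\bm C_s(\bm W_t^\top\bm W_t)^{-1}\bm C_s^\top$, whose expectation I must evaluate. Because the rows of $\bm W_t$ are i.i.d. centered Gaussian with covariance $\Sigma_{\bm{ww}}$, the Gram matrix $\bm W_t^\top\bm W_t$ follows a Wishart law; its inverse-Wishart mean, once the effective degrees of freedom of the (mean-adjusted) design are accounted for, equals $\Sigma_{\bm{ww}}^{-1}/(t-q-2)$, so the quadratic form has expectation $\text{tr}(\Sigma_{\bm{ww}}\Sigma_{\bm{ww}}^{-1})/(t-q-2)=q/(t-q-2)$. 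The requirement $t>q+2$ is exactly the condition that keeps this mean finite, and assembling the two contributions delivers $\Var{z_{s,q}}=\big(1+\tfrac{q}{t-q-2}\big)(1-\rho^2)\sigma^2$.

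The step I expect to be the main obstacle is the exact degrees-of-freedom bookkeeping that yields $t-q-2$ in the denominator together with the in-sample dependence, since the evaluation sample $(y_s,\bm C_s)$ is itself one of the $t$ rows used to form $\bm{\hat\beta}_t$. The extra lost degree of freedom traces back to replacing the unknown $f(x_s)$ by the fitted reward function $f_t(x_s)$ inside $\bm Y_t$: this effectively estimates the mean level and centers the regression, lowering the Wishart degrees of freedom by one (from $t-q-1$ to $t-q-2$). Making this reduction rigorous, rather than pretending the residuals equal the idealized $y_s-f(x_s)$, and checking that the in-sample correlation between $\bm C_s$ and $\bm{\hat\beta}_t$ does not change the leading quadratic-form expectation, are the places where the joint Gaussian structure is indispensable and where I would spend most of the effort.
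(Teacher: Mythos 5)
Your overall architecture is essentially a from-scratch reconstruction of the machinery the paper simply imports: the paper's proof centers the variables ($y_s^c = y_s - f(x_s)$, $\bm{w}_s^c = \bm{w}_s - \bm{g}_s$), identifies $\bm{\hat\beta}_t$ with the OLS slope estimator of \cref{lem:estBeta}, and then invokes Nelson's control-variate theorem (\cref{fact:normalEstProp}) ``for a single sample'' before shifting back by $f(x_s)$. Your conditional-Gaussian decomposition $\epsilon_s = \bm{C}_s\bm{\beta}^\star + \eta_s$ with $\Var{\eta_s} = (1-\rho^2)\sigma^2$, together with the inverse-Wishart mean $\Sigma_{\bm{ww}}^{-1}/(t-q-2)$ after one degree of freedom is lost to estimating the mean level through $f_t$, is exactly the computation inside that cited theorem, and your bookkeeping correctly explains both the $t-q-2$ denominator and the requirement $t>q+2$. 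The unbiasedness half of your argument is fine.

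However, the step you defer as a check --- that the in-sample correlation between $\bm{C}_s$ and $\bm{\hat\beta}_t$ ``does not change the leading quadratic-form expectation'' --- is not a technicality, and a literal check refutes it. Conditional on the design, writing $h_s \doteq \bm{C}_s(\bm{W}_t^\top\bm{W}_t)^{-1}\bm{C}_s^\top$ for the leverage of row $s$, you get $\Var{\bm{C}_s(\bm{\hat\beta}_t - \bm{\beta}^\star)\mid \bm{W}_t} = (1-\rho^2)\sigma^2 h_s$ but \emph{also} $\text{Cov}\bigl(\eta_s,\,\bm{C}_s(\bm{\hat\beta}_t - \bm{\beta}^\star)\mid \bm{W}_t\bigr) = (1-\rho^2)\sigma^2 h_s$, since $\eta_s$ itself enters the fit; the honest per-sample variance is therefore $(1-\rho^2)\sigma^2(1 + h_s - 2h_s) = (1-\rho^2)\sigma^2(1-h_s)$ with $\EE{h_s}=q/t$ --- a \emph{deflation}, not the inflation factor $1+\frac{q}{t-q-2}$ in the statement. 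Your computation lands on the stated formula only because you implicitly treated $\bm{C}_s$ as independent of the fitted coefficients, i.e., an out-of-sample evaluation, which is precisely where $\EE{\bm{C}_s(\bm{W}_t^\top\bm{W}_t)^{-1}\bm{C}_s^\top} = q/(t-q-2)$ holds. The paper sidesteps this by never doing a per-sample calculation: it adapts Nelson's result for the aggregated estimator $\hat\mu_{z,t}$, whose variance $\frac{t-2}{t-q-2}(1-\rho^2)\sigma^2/t$ already absorbs the positive covariances among distinct hybrid rewards sharing the common $\bm{\hat\beta}_t$, so the stated $\Var{z_{s,q}}$ is the effective per-sample variance $t\,\Var{\hat\mu_{z,t}}$ in that aggregate sense. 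To close your proof you would either have to adopt this aggregate reading, or fit $\bm{\hat\beta}_t$ leaving observation $s$ out so that the independence your Wishart computation relies on actually holds.
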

The key takeaways from \cref{thm:estProp} are as follows. First, the hybrid reward using  $\bm{\hat\beta}_t$ is an unbiased estimator of reward function and hence, we can still use it to estimate the reward function $f$. 
Second, there is a less reduction in variance (i.e., by a factor $({t-2})/({t-q-2})$ of maximum possible variance reduction) when $\bm{\hat\beta}_t$ is used for constructing hybrid reward in \cref{eq:af_multi}.

The variance reduction (given in \cref{thm:estProp}) depends on the auxiliary feedback via two terms: $\frac{t-2}{t-q-2}$ and $\rho^2$ (defined in Line 142). Setting $q=1$ gives the minimum value for $\frac{t-2}{t-q-2}$, but $\rho^2$ for $q=1$ will also be small as it only considers one auxiliary feedback, and hence maximum variance reduction will not be achieved. As we increase the number of auxiliary feedback, $\rho^2$ increases, leading to more variance reduction. However, the term $\frac{t-2}{t-q-2}$ increases at the same time, which can negate the variance reduction achieved by smaller $\rho^2$.
Hence, keeping the number of auxiliary feedback used for hybrid reward small is important. A simple method for selecting a subset of auxiliary feedback \citep{OR82_lavenberg1982statistical} works as follows: select the auxiliary feedback whose sample correlation coefficient with reward is the largest. Then, select the next auxiliary feedback whose sample partial correlation coefficient with reward was the largest given the first auxiliary feedback selected. Keep repeating the process until there is a variance reduction using additional auxiliary feedback.
		
		\subsection{Linear bandits with known auxiliary feedback functions}
		\label{sec:linear}

To highlight the main ideas, we restrict to the linear bandit setting in which the reward and auxiliary feedback functions are linear. In this setting, a learner selects an action $x_t$ and observes a reward $y_{t} = x_{t}^\top \theta^\star + \epsilon_t$, where $\theta^\star \in \R^d$ ($d \ge 1$) is unknown and $\epsilon_t$ is the zero-mean Gaussian noise with known variance $\sigma^2$. 
The learner also observes $q$ auxiliary feedback, where $i$-th auxiliary feedback is denoted by $w_{t,i} = x_{t}^\top \theta_{w,i}^\star + \epsilon_{t,i}^w$. Here, $\theta_{w,i}^\star \in \R^d$ is known and $\epsilon_{t,i}^w$ is the zero-mean Gaussian noise with unknown variance $\sigma_{w,i}^2$. 
Our goal is to learn a policy that minimizes regret as defined in \cref{eq:regret}. 
We later extend our method for non-linear reward and auxiliary feedback functions in \cref{sec:general}. 

Let $I$ be the $d \times d$ identity matrix, $V_t \doteq \sum_{s=1}^{t-1} x_sx_s^\top$, and $\overline{V}_t \doteq V_t + \lambda I$, where $\lambda>0$ is the regularization parameter that ensures matrix $\overline{V}_t$ is a positive definite matrix. The notation $\norm{x}_A$ denotes the weighted $l_2$-norm of vector $x \in \R^d$ with respect to a positive definite matrix $A \in \R^{d\times d}$.

As shown in \cref{thm:estProp}, the hybrid rewards is an unbiased reward estimator with a smaller variance than observed rewards. Thus, hybrid rewards lead to tighter confidence bounds for parameter $\theta^\star$ than observed rewards. We propose a simple but effective method to exploit correlated auxiliary feedback, i.e., using hybrid rewards to estimate reward function instead of observed rewards. 

Using this method, we adapt the well-known linear bandit algorithm OFUL \citep{NIPS11_abbasi2011improved} to our setting and named this algorithm \ref{alg:OFUL-AF}. This algorithm works as follows. It takes $\lambda >0$ as input and then initializes $\overline{V}_1 = \lambda I$ and sets $\hat\theta^z_1 = 0_{\R^d}$ as initial estimate of parameter $\theta^\star$. The superscript `$z$' in $\hat\theta^z_1$ implies that hybrid rewards are used for estimating $\theta^\star$.
At the beginning of round $t$, the algorithm selects an action $x_t$ that maximizes the upper confidence bound of the action's reward, which is a sum of the estimated reward  for the action ($x^\top \hat\theta^z_t$) and a confidence bonus $\alpha^\sigma_t \norm{x}_{\overline{V}_t^{-1}}$. 
In the confidence bonus, the first term ($\alpha^\sigma_t$) is a slowly increasing function in $t$ whose value is given in \cref{thm:instRegret}, and the second term ($\norm{x}_{\overline{V}_t^{-1}}$) decreases to zero as $t$ increases.

\begin{algorithm}[!ht] 
	\renewcommand{\thealgorithm}{OFUL-AF}
	\floatname{algorithm}{}
	\caption{Algorithm for Linear Bandits with Auxiliary Feedback}
	\label{alg:OFUL-AF}
	\begin{algorithmic}[1]
		\STATE \textbf{Input:} $\lambda > 0$
		\STATE \textbf{Initialization:} $\overline{V}_1 = \lambda I$ and $\hat\theta^z_1 = 0_{\R^d}$
		\FOR{$t=1, 2, \ldots $}
		\STATE Select action $x_t = \argmax_{x \in \cX} \left(x^\top \hat\theta^z_t + \sigma\alpha_t \norm{x}_{\overline{V}_t^{-1}}\right)$
		\STATE Observe reward $y_t$ and its auxiliary feedback $\{w_{t,i}\}_{i=1}^q$
		\STATE If $t \vspace{-0.5mm}>\vspace{-0.5mm} q+2$, compute upper bound of hybrid reward's sample variance ($\bar\nu_{z,t}$) or else $\bar\nu_{z,t} \vspace{-0.5mm}=\vspace{-0.5mm} \sigma^2$
		\STATE If $t \le q+2$ or $\bar\nu_{z,t} \ge \sigma^2$, set $\bm{\hat\beta}_t=0$ or else compute $\bm{\hat\beta}_t$ using \cref{lem:estBeta}
		\STATE $\forall s \le t \in \bN:$ compute $z_{s,q}$ using $\boldsymbol{\hat\beta}_t$ in \cref{eq:af_multi}
		\STATE Set $\overline{V}_{t+1} = \overline{V}_t + x_tx_t^\top$, $\hat\theta^z_{t+1} = \overline{V}_{t+1}^{-1}\sum_{s=1}^t x_sz_{s,q}$
		\ENDFOR
	\end{algorithmic}
\end{algorithm}

\vspace{-1mm}
After selecting an action $x_t$, the algorithm observes the reward $y_t$ with its associated auxiliary feedback $\{w_{t,i}\}_{i=1}^q$. It computes the upper bound on sample variance of hybrid reward (denoted by $\bar\nu_{z,t}$\footnote{Let $\hat\nu_{z,t}$ be the sample variance estimate of hybrid rewards (details in \cref{ssec:var_est_hybrid_reward}). Then, $\bar\nu_{z,t} = \frac{(t-2)\hat\nu_{z,t}}{\chi^2_{1-\delta,t}}$, where $\chi^2_{1-\delta, t}$ denotes $100(1-\delta)^{\text{th}}$ percentile value of the chi-squared distribution with $t-2$ degrees of freedom.}) if $t>q+2$ or else it is set to $\sigma^2$ and then checks two conditions. 
The first condition (i.e., $t \le q+2$) guarantees the sample variance is well-defined. Whereas the second condition (i.e., $\bar\nu_{z,t} > \sigma^2$) ensures the algorithm at least be as good as OFUL because $\bar\nu_{z,t}$ can be larger than $\sigma^2$ due to the overestimation in initial rounds. 
If both conditions $t \le q+2$ and $\bar\nu_{z,t} \ge \sigma^2$ fail, the value of $\bm{\hat\beta}_t$ is re-computed as defined in \cref{lem:estBeta}.  
The updated $\bm{\hat\beta}_t$ is then used to update all hybrid rewards, i.e., $z_{s,q}, ~\forall s\le t \in \bN$. Finally, the values of $\overline{V}_{t+1}$ and $\hat\theta_{t+1}$ are updated as $\overline{V}_{t+1} = \overline{V}_t + x_tx_t^\top$ and $\hat\theta^z_{t+1} = \overline{V}_{t+1}^{-1}\sum_{s=1}^t x_sz_{s,q}$, which are then used to select the action in the following round. 
When $\bm{\hat\beta}_t = 0$ for all hybrid rewards, hybrid rewards are the same as the observed rewards, and hence \ref{alg:OFUL-AF} is the same as OFUL.

The regret analysis of any bandit algorithm hinges on bounding the instantaneous regret for each action. The following result gives an upper bound on the instantaneous regret of \ref{alg:OFUL-AF}.
\begin{restatable}{thm}{instRegret} 
	\label{thm:instRegret}
	With a probability of at least $1-2\delta$, the instantaneous regret of \textnormal{\ref{alg:OFUL-AF}} in round $t$ is
	$$
		r_t(\text{\textnormal{\ref{alg:OFUL-AF}}}) \le 2\left(\alpha_t^{\sigma} + \lambda^{1/2}S\right)\norm{x_{t}}_{\overline{V}_{t}^{-1}},
	$$
	\vspace{-2mm}
	where $\alpha_t^\sigma = \sqrt{\min\left(\sigma^2, \bar\nu_{z,t-1}\right)}~\alpha_t$, $\norm{\theta^\star}_2 \le S$, and $\alpha_t = \sqrt{d\log\left( \frac{1+tL^2/\lambda}{\delta}\right)}$.
	For $t>q+2$ and $\bar\nu_{z,t} < \sigma^2$, 
	$
		 \EE{r_t(\text{\textnormal{\ref{alg:OFUL-AF}}})} \le \widetilde{O}\left( \left(\frac{(t-3)(1-\rho^2)}{t-q-3}\right)^{\frac{1}{2}}r_t(\text{\textnormal{OFUL}}) \right).
	$
	Here, $\widetilde{O}$ hides constant terms.
\end{restatable}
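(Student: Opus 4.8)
The plan is to follow the two-stage structure of the standard OFUL analysis \citep{NIPS11_abbasi2011improved}, with the crucial modification that the least-squares estimate $\hat\theta^z_t$ is built from the hybrid rewards $z_{s,q}$ rather than the observed rewards $y_s$. Since the auxiliary feedback functions are known in this setting, the control-variate terms $w_{s,i} - g_i(x_s) = \epsilon_{s,i}^w$ are pure noise, so each hybrid reward has the form $z_{s,q} = x_s^\top\theta^\star + \eta_s$ with residual noise $\eta_s = \epsilon_s - (\bm w_s - \bm g_s)\bm{\hat\beta}_t$. The first goal is therefore to show that, with probability at least $1-2\delta$, the parameter $\theta^\star$ lies in the confidence ellipsoid $\norm{\hat\theta^z_t - \theta^\star}_{\overline{V}_t} \le \alpha_t^\sigma + \lambda^{1/2}S$, after which the regret bound follows by optimism.

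To establish this ellipsoid I would apply a self-normalized concentration inequality (Theorem~1 of \citet{NIPS11_abbasi2011improved}) to the sequence $\{\eta_s\}$. Two ingredients are needed. First, the regularization contributes the $\lambda^{1/2}S$ offset through the decomposition $\hat\theta^z_t - \theta^\star = \overline{V}_t^{-1}\sum_s x_s\eta_s - \lambda\overline{V}_t^{-1}\theta^\star$ together with $\lambda\norm{\overline{V}_t^{-1}\theta^\star}_{\overline{V}_t} \le \lambda^{1/2}\norm{\theta^\star}_2 \le \lambda^{1/2}S$. Second, the self-normalized deviation of $\sum_s x_s\eta_s$ must be controlled by a variance proxy; here I would use $\min(\sigma^2,\bar\nu_{z,t-1})$, arguing that $\bar\nu_{z,t-1}$ is, with high probability, a valid upper bound on the per-round variance of $\eta_s$ by construction of the chi-squared quantile in its definition, and that capping at $\sigma^2$ guarantees the bound is never worse than plain OFUL since $\eta_s$ reduces to $\epsilon_s$ whenever $\bm{\hat\beta}_t = 0$. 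Given the ellipsoid, the instantaneous regret bound $r_t \le 2(\alpha_t^\sigma + \lambda^{1/2}S)\norm{x_t}_{\overline{V}_t^{-1}}$ follows from the textbook argument: optimism gives $(x^\star)^\top\hat\theta^z_t + \alpha_t^\sigma\norm{x^\star}_{\overline{V}_t^{-1}} \le x_t^\top\hat\theta^z_t + \alpha_t^\sigma\norm{x_t}_{\overline{V}_t^{-1}}$, and two applications of Cauchy--Schwarz with the ellipsoid close the gap.

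For the expected bound I would compare the high-probability bound from the first part to the corresponding OFUL bound $r_t(\text{OFUL}) \le 2(\sigma\alpha_t + \lambda^{1/2}S)\norm{x_t}_{\overline{V}_t^{-1}}$, whose variance proxy is the full $\sigma^2$. On the event $\bar\nu_{z,t} < \sigma^2$ with $t>q+2$, the two bounds share the same $\norm{x_t}_{\overline{V}_t^{-1}}$ factor and the same $\alpha_t$, differing only through $\sqrt{\bar\nu_{z,t-1}}$ versus $\sigma$. It therefore suffices to bound $\EE{\sqrt{\bar\nu_{z,t-1}}}$ by the square root of the hybrid-reward variance in \cref{thm:estProp} evaluated at round $t-1$, namely $\frac{t-3}{t-q-3}(1-\rho^2)\sigma^2$. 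Since $\hat\nu_{z,t-1}$ is the sample variance of the residuals $\eta_s$ and is unbiased for that quantity, Jensen's inequality gives $\EE{\sqrt{\bar\nu_{z,t-1}}} \le \sqrt{\EE{\bar\nu_{z,t-1}}}$, while the normalizing quantile $\chi^2_{1-\delta,t-1}$ concentrates around $t-3$ and contributes only a constant factor absorbed into $\widetilde O$. Dividing by the OFUL bound and absorbing the $\lambda^{1/2}S$ and logarithmic terms then yields the claimed factor $\bigl(\tfrac{(t-3)(1-\rho^2)}{t-q-3}\bigr)^{1/2}$.

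I expect the main obstacle to be justifying the variance proxy rigorously, because $\bm{\hat\beta}_t$ is re-estimated from all observations up to round $t$ and is shared across every hybrid reward $z_{s,q}$, so the residuals $\eta_s$ are no longer a martingale-difference sequence adapted to the natural filtration. Closing this gap requires either conditioning on $\bm{\hat\beta}_t$ and controlling the estimation error separately, or verifying directly that the chi-squared-corrected quantity $\bar\nu_{z,t-1}$ dominates the true conditional noise level with the required probability. The careful accounting of this correction, and its interaction with the Jensen step in the expected bound, is the delicate part of the argument.
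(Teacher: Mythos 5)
Your proposal matches the paper's proof essentially step for step: the same substitution of $\min(\sigma^2,\bar\nu_{z,t-1})$ into the Abbasi-Yadkori confidence ellipsoid (union bound giving $1-2\delta$), the same optimism/Cauchy--Schwarz argument for the instantaneous bound, and the same expected-regret chain via conditioning on the actions, unbiasedness of $\hat\nu_{z,t-1}$ combined with \cref{thm:estProp} at round $t-1$, Jensen's inequality, and absorbing the chi-squared factor $\sqrt{(t-2)/\chi^2_{1-\delta,t}}$ and the residual $\lambda^{1/2}S$ term into $\widetilde{O}$. The obstacle you flag---that the shared $\bm{\hat\beta}_t$ correlates the hybrid-reward residuals so they are not a martingale-difference sequence---is genuine, but the paper addresses it only implicitly through the AFC property that the self-normalized bound of \citet{NIPS11_abbasi2011improved} does not require independent reward samples, so your treatment is no less rigorous than the paper's own.
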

The proof follows by bounding the estimation error of the parameter $\theta^\star$ when the estimation method uses auxiliary feedback. This result shows that auxiliary feedback leads to a better instantaneous regret upper bound and a better regret (as defined in \cref{eq:regret}) than the vanilla OFUL algorithm.
Since the improvement in instantaneous regret increase with $t$, having a single constant to compare with regret of OFUL may lead to weaker regret upper bound than the sum of all instantaneous regret.

	\section{Estimated auxiliary feedback functions}
	\label{sec:estimated}

Auxiliary feedback functions may be unknown in many real-life problems. However, the learner can construct an unbiased estimator for the auxiliary feedback function using historical data or acquiring more samples of auxiliary feedback. But these estimated functions offer a lower variance reduction than known auxiliary functions. To study the effect of using the estimated auxiliary feedback functions on the performance of bandit algorithms, we borrow some techniques from approximate control variate theory \citep{JCP20_gorodetsky2020generalized,JUQ22_pham2022ensemble} as we discussed next.

\subsection{Approximate control variates}
Let $y$ be an unbiased estimator of an unknown variable $\mu$ and a random variable $w$ with a known estimated mean ($\omega_e$) be a control variate of $y$. As long as the known estimated mean is an unbiased estimator of $w$, one can use it to reduce the variance of $y$ as follows. For any choice of a coefficient $\beta_e$, define a new random variable as $z_e \doteq y - \beta_e\bar{w}$, where $\bar{w} = w- \omega_e$. Since $\omega_e$ is an unbiased estimator of $w$, it is straightforward to show that $z_e$ is also an unbiased estimator of $y$.

By using properties of variance and covariance, the variance of $z_e$ is given by 
$$
    \text{Var}(z_e) = \text{Var}(y) + \beta^2_e\text{Cov}(\bar{w}, \bar{w}) - 2\beta_e\text{Cov}(y,\bar{w}).
$$
The variance of $z_e$ is minimized by setting $\beta_e$ to 
$
     \beta^\star_e = \text{Cov}(\bar{w}, \bar{w})^{-1}\text{Cov}(y,\bar{w})
$
and the minimum value of $\text{Var}(z_e)$ is $(1 - \rho_e^2)\text{Var}(y)$, where 
$\rho_e = \text{Cov}(y,\bar{w})\left(\text{Cov}(\bar{w}, \bar{w})^{-1}/\text{Var}(y)\right) \text{Cov}(y,\bar{w})$.

\subsection{Auxiliary feedback with unknown functions as approximate control variates}
We now introduce a new definition of {\em hybrid reward} that uses estimated auxiliary feedback functions. 
Let $w_{s,i}$ be the $i^{\text{th}}$ auxiliary feedback observed with reward $y_s$ and $g_{e,i}$ be the unbiased estimator of function $g_i$. Then, the {hybrid reward} with $q$ estimated auxiliary feedback functions is defined by
\eq{
    \label{eq:eaf_multi}
    z_{e,s,q} = y_s - \sum_{i=1}^q \beta_{e,i}(w_{s,i} - g_{e,i}(x_t)) = y_s - (\bm{w}_s - \bm{g}_{e,s})\bm{\beta}_e.
}
where $\bm{w}_s=\left(w_{s,1}, \ldots, w_{s,q}\right)$, $\bm{g}_{e,s}=\left(g_{e,1}(x_s), \ldots, g_{e,q}(x_s)\right)$, and $\bm{\beta}_e =\left(\beta_{e,1}, \ldots, \beta_{e,q}\right)^\top$. 
Let $\Sigma_{\bm{\bar{w}\bar{w}}}\in \R^{q\times q}$ denote the covariance matrix among centered auxiliary feedback (i.e., $\bm{\bar{w}_s} = \bm{w}_s - \bm{g}_{e,s}$), and $\sigma_{\bm{y\bar{w}}} \in \R^{q\times 1}$ denote the vector of covariance between reward and its centered auxiliary feedback.
Then, the variance of $z_{e,s,q}$ is minimized by setting the $\bm{\beta}_e$ to $\bm{\beta}^\star_e = \Sigma_{\bm{\bar{w}\bar{w}}}^{-1}\sigma_{\bm{y\bar{w}}}$, and the minimum value of $\text{Var}(z_{e,s,q})$ is $(1 - \rho_e^2)\sigma^2$, where ${\rho_e^2} = \sigma_{\bm{y\bar{w}}}^\top\Sigma_{\bm{\bar{w}\bar{w}}}^{-1}\sigma_{\bm{y\bar{w}}}/\sigma^2$.

The definition of hybrid reward given in \cref{eq:eaf_multi} is very flexible and allows different estimators to estimate auxiliary feedback functions. The only difference among these estimators is how they partition the available samples of auxiliary feedback to estimate auxiliary function $g_i$. 
As no optimal partitioning strategy is known, we adopt the Independent Samples (IS) and Multi-Fidelity (MF) sampling strategy for our setting where finite samples of auxiliary feedback are available. Both strategies are proven to be asymptotically optimal \citep{JCP20_gorodetsky2020generalized}, implying the variance reduction is asymptotically the same as if auxiliary feedback functions are known.

\paragraph{IS and MF sampling strategy:} 
Let $s$ and $s_i \supset s$ be the sample sets used for estimating functions $f$ and $g_i$, respectively. Then, for the IS sampling strategy, $(s_i\setminus s) \cap (s_j\setminus s) = \varnothing$ for $i \ne j$, i.e., the extra samples used for estimating the function $g_i$ are unique. 
Whereas, for the MF sampling strategy, $s_i = s\cup_{j=1}^i s^\prime_j$ and $s_i^\prime \cap s_j^\prime = \varnothing$ for $i \ne j$, i.e., the estimation of function $g_i$ uses the samples that were used for estimating function $g_{i-1}$ with some additional samples. 
Refer to \cref{fig:sampling} for the visual representation of both sampling strategies.
\begin{figure}[!ht]
    \centering
    \includegraphics[width=0.53\linewidth]{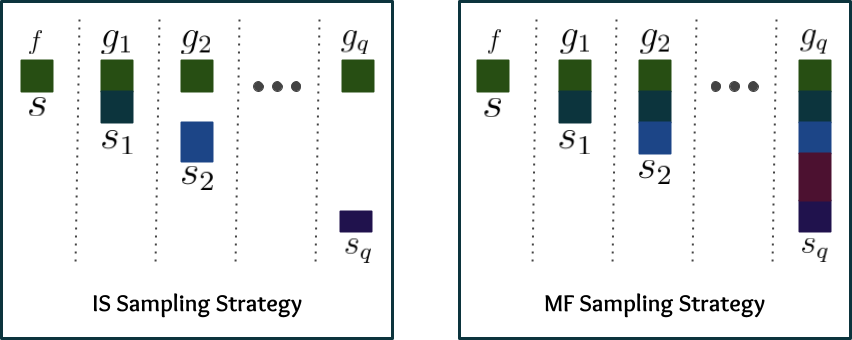}
    \quad
     \includegraphics[width=0.439\linewidth]{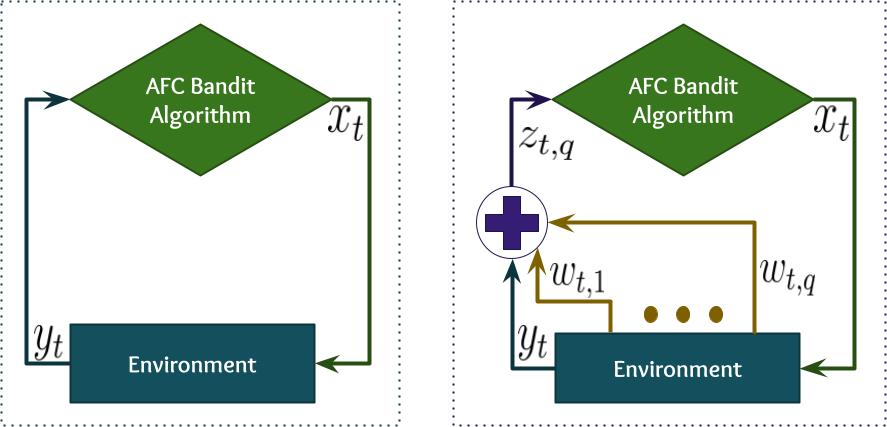}
    \caption{
    	\textbf{Left two figures:} Visualization of IS and MF sampling strategies. Each column represents samples used for estimating function (written at the top), and the same color is used to show shared samples among auxiliary function estimation. \textbf{Right two figures:} Interaction between AFC bandit algorithm and environment. AFC bandit algorithm that only uses observed rewards (second from right), and AFC bandit algorithm that also uses auxiliary feedback as hybrid rewards (rightmost).
    }
    \label{fig:sampling}
\end{figure}
After adopting Theorem 3 and Theorem 4 from \cite{JCP20_gorodetsky2020generalized} to our setting, we can further simplify $\Sigma_{\bm{\bar{w}\bar{w}}}$ and $\sigma_{\bm{y\bar{w}}}$ when IS and MF sampling strategies (denoted by $e$) are used for estimating auxiliary feedback functions as follows:
$$
	\Sigma_{\bm{\bar{w}\bar{w}}} = ({\Sigma_{\bm{{w}{w}}} \circ \bm{F}_e})/ {t} \text{ and } \sigma_{\bm{y\bar{w}}} =({\text{diag}\left(\bm{F}_e\right) \circ \sigma_{\bm{y{w}}}})/ {t}, 
$$
where $t$ denotes the number of reward observations with its auxiliary feedback, $\text{diag}(A)$ represents a vector whose elements are the diagonal of the matrix $A$, and $\circ$ denotes an element-wise product. The $ij$-th element of matrix $\bm{F}_e \in \R^{q\times q}$ is
\als{ f_{e, ij} = 
    \left\{
        \begin{array}{ll}
             (({r_i -1})({r_j -1}))/({r_i}{r_j})  & \mbox{if } i \ne j \mbox{ and } e= \mbox{IS} \\
            ({\min(r_i,r_j) -1})/{\min(r_i,r_j)}  & \mbox{if } i \ne j \mbox{ and } e= \mbox{MF} \\
            ({r_i -1})/{r_i} & \mbox{otherwise},
        \end{array}
    \right.
}
where $r_i \in \R^+$ is the ratio between the total number of samples used for estimating function $g_i$ by sampling strategy $e$ and the total number of samples used for estimating $f$. 

Since $\Sigma_{\bm{\bar{w}\bar{w}}}$ and $\sigma_{\bm{y\bar{w}}}$ may be unknown, they must be estimated to get the best estimate for $\bm{\beta}^\star$. 
Our following result gives the best linear unbiased estimator of $\bm{\beta}_e$ (i.e., $\bm{\hat\beta}_{e,t}$) that uses $t$ observations of rewards and their auxiliary feedback with estimated auxiliary feedback functions.
\begin{restatable}{lem}{estApproxBeta}
	\label{lem:estApproxBeta}
	Let $t > q+2 \in \bN$, $e$ is the sampling strategy,  and $f_t$ be the estimate of function $f$ at the end of round $t$ which uses $\left\{x_{s},y_{s}, \bm{w}_s\right\}_{s=1}^t$. Then, the best linear unbiased estimator of $\bm{\beta}^\star_e$ is
	$$
		\bm{\hat\beta}_{e,t} = (\bm{W}_t^\top\bm{W}_t \circ \bm{F}_e)^{-1} \left(\text{diag}\left(\bm{F}_e\right) \circ \bm{W}_t^\top \bm{Y}_t\right),
	$$
	where $\bm{W}_t$ is a $t \times q$ matrix whose $s^{\text{th}}$ row is $\bm{w}_s - \bm{g}_{e,s}$ and $\bm{Y}_t=(y_1 - f_t(x_1), \ldots, y_t - f_t(x_t))$.
\end{restatable}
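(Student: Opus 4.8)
The plan is to mirror the derivation of \cref{lem:estBeta} while accounting for the deterministic rescaling $\bm{F}_e$ that the sampling strategy $e$ induces on the covariance quantities. I would start from the closed form $\bm{\beta}^\star_e = \Sigma_{\bm{\bar{w}\bar{w}}}^{-1}\sigma_{\bm{y\bar{w}}}$ and substitute the simplified expressions obtained by adapting Theorems 3 and 4 of \citep{JCP20_gorodetsky2020generalized}, namely $\Sigma_{\bm{\bar{w}\bar{w}}} = (\Sigma_{\bm{ww}} \circ \bm{F}_e)/t$ and $\sigma_{\bm{y\bar{w}}} = (\text{diag}(\bm{F}_e) \circ \sigma_{\bm{yw}})/t$. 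The crucial structural fact I would exploit is that $\bm{F}_e$ is a fixed matrix depending only on the known sample-size ratios $r_i$, so it acts as a deterministic element-wise weighting of the normal equations rather than a random quantity.

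Next I would replace the unknown population quantities $\Sigma_{\bm{ww}}$ and $\sigma_{\bm{yw}}$ by the sample estimators arising from the residual regression of $\bm{Y}_t$ on $\bm{W}_t$, i.e. $\widehat{\Sigma}_{\bm{ww}} = \tfrac{1}{t}\bm{W}_t^\top\bm{W}_t$ and $\widehat{\sigma}_{\bm{yw}} = \tfrac{1}{t}\bm{W}_t^\top\bm{Y}_t$; these are exactly the BLUE-consistent sample moments underlying \cref{lem:estBeta}. Using that the Hadamard product with the fixed $\bm{F}_e$ commutes with scalar multiplication, this yields $\widehat{\Sigma}_{\bm{\bar{w}\bar{w}}} = \tfrac{1}{t^2}(\bm{W}_t^\top\bm{W}_t \circ \bm{F}_e)$ and $\widehat{\sigma}_{\bm{y\bar{w}}} = \tfrac{1}{t^2}(\text{diag}(\bm{F}_e) \circ \bm{W}_t^\top\bm{Y}_t)$.

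I would then assemble $\bm{\hat\beta}_{e,t} = \widehat{\Sigma}_{\bm{\bar{w}\bar{w}}}^{-1}\widehat{\sigma}_{\bm{y\bar{w}}}$ and observe that the two factors of $1/t^2$ cancel (one inverted, one not), producing exactly the claimed expression $(\bm{W}_t^\top\bm{W}_t \circ \bm{F}_e)^{-1}(\text{diag}(\bm{F}_e) \circ \bm{W}_t^\top\bm{Y}_t)$. To close the best-linear-unbiased claim, I would argue that, conditionally on the design $\bm{W}_t$, the estimator is a linear map of $\bm{Y}_t$ and that scaling the normal equations by the fixed $\bm{F}_e$ leaves the Gauss--Markov optimality argument of \cref{lem:estBeta} intact; invertibility of $\bm{W}_t^\top\bm{W}_t \circ \bm{F}_e$ is guaranteed once $t > q+2$ and the diagonal of $\bm{F}_e$ is strictly positive (which holds for $r_i > 1$).

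The main obstacle I anticipate is the best-linear-unbiased justification in the presence of the Hadamard product: unlike a standard generalized-least-squares weighting by a positive-definite matrix, the element-wise rescaling by $\bm{F}_e$ is nonstandard, so I must verify carefully that it preserves both unbiasedness and minimum variance rather than merely preserving the point estimate. A secondary, more bookkeeping obstacle is checking that the simplified identities for $\Sigma_{\bm{\bar{w}\bar{w}}}$ and $\sigma_{\bm{y\bar{w}}}$ specialize correctly to the IS and MF strategies, i.e. that the shared-sample structure translates exactly into the stated entries $f_{e,ij}$.
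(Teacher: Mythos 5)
Your plug-in derivation does recover the stated point formula (the $1/t$ factors indeed cancel), but it misses the step that carries the lemma's actual claim: the estimator is asserted to be the \emph{best linear unbiased} estimator of $\bm{\beta}^\star_e$, and a moment-substitution argument of the form $\widehat{\Sigma}^{-1}\widehat{\sigma}$ establishes only the point estimate, not Gauss--Markov optimality. You flag this yourself as the ``main obstacle,'' but the resolution you sketch --- verifying that an element-wise rescaling of the normal equations preserves minimum variance --- is the wrong framing and would fail as stated, because $\bm{F}_e$ is not a weighting imposed on the normal equations at all. The paper's proof instead sets up the joint linear regression $y_s = f^\top\phi(x_s) + (\bm{w}_s - \bm{g}_{e,s})\bm{\beta}_e + \epsilon_{z,s}$ exactly as in \cref{lem:estBeta}, extracts from the normal equations the ordinary least-squares solution $\bm{\hat\beta}_{e,t} = (\bm{W}_{e,t}^\top\bm{W}_{e,t})^{-1}\bm{W}_{e,t}^\top\bm{Y}_t$ with $\bm{W}_{e,t}$ built from the estimated-centered residuals, so that the BLUE property follows directly from the standard Gauss--Markov fact (\cref{fact:estmProp}) with no Hadamard-product optimality argument needed; only then does it invoke the identities $\bm{W}_{e,t}^\top\bm{W}_{e,t} = \bm{W}_t^\top\bm{W}_t \circ \bm{F}_e$ and $\bm{W}_{e,t}^\top\bm{Y}_t = \text{diag}(\bm{F}_e)\circ\bm{W}_t^\top\bm{Y}_t$, adapted from Appendix D and E of \citet{JCP20_gorodetsky2020generalized} via the matrix manipulations of \citet{JUQ22_pham2022ensemble}, to \emph{rewrite} the Gram matrices in the stated form. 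In other words, $\bm{F}_e$ enters as an identity satisfied by the cross-moment matrices under the IS/MF sample-sharing structure, not as a deterministic re-weighting whose compatibility with Gauss--Markov must be re-verified.

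A second, related defect is that your substitution $\widehat{\Sigma}_{\bm{ww}} = \tfrac{1}{t}\bm{W}_t^\top\bm{W}_t$ conflates true-centered and estimated-centered quantities. If $\bm{W}_t$ has rows $\bm{w}_s - \bm{g}_{e,s}$, then $\tfrac{1}{t}\bm{W}_t^\top\bm{W}_t$ is a sample moment of the \emph{centered} feedback whose expectation already absorbs the estimation error of $\bm{g}_{e}$ that $\bm{F}_e$ is designed to encode, so applying $\circ\,\bm{F}_e$ on top double-counts that correction; if instead $\bm{W}_t$ has rows $\bm{w}_s - \bm{g}_s$, the moment is not computable, since the true $g_i$ are unknown in this section. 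The identities above are precisely what bridge the computable estimated-centered Gram matrices and the $\bm{F}_e$-adjusted true-centered ones, and your argument needs them in that role rather than as a rescaling of population covariances. (A minor point: invertibility of $\bm{W}_t^\top\bm{W}_t \circ \bm{F}_e$ does not follow from a positive diagonal of $\bm{F}_e$ alone; you would need the Schur product theorem together with positive definiteness of $\bm{F}_e$, which does hold for the IS and MF structures when $r_i > 1$.)
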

After adopting matrix manipulation tricks from \cite{JUQ22_pham2022ensemble} to our setting, the proof follows similar steps as the proof of \cref{lem:estBeta}.
We now characterize the properties of the hybrid reward that uses either IS or MF sampling strategy for estimating auxiliary feedback functions.
\begin{restatable}{thm}{estApproxProp}
	\label{thm:estApproxProp}
	Let $t > q+2 \in \bN$ and $e$ is the sampling strategy. If $\bm{\hat\beta}_{e,t}$ as defined in \cref{lem:estApproxBeta} is used to compute hybrid reward $z_{e,s,q}$ for any $s \le t \in \bN$, then $\EE{z_{e,s,q}} = f(x_s)$ and $\Var{z_{e,s,q}} = \left(1 + \frac{a(e)q}{t-q-2}\right)(1-{\rho_e^2})\sigma^2$, where $a(\text{IS}) = 1$, $a(\text{MF}) = \frac{r-1}{r}$ if $r_i=r, ~\forall i \in \{1,2,\ldots, q\}$ when using MF sampling strategy for estimating auxiliary feedback functions.
\end{restatable}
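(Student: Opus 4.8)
The plan is to follow the proof of \cref{thm:estProp} essentially verbatim, replacing the known centered feedback $\bm{w}_s - \bm{g}_s$ by its estimated counterpart $\bm{w}_s - \bm{g}_{e,s}$ and the estimator $\bm{\hat\beta}_t$ by $\bm{\hat\beta}_{e,t}$, and then tracking how the matrix $\bm{F}_e$ introduced by the IS/MF partition propagates through the variance calculation. First I would prove $\EE{z_{e,s,q}} = f(x_s)$. Since $g_{e,i}$ is an unbiased estimator of $g_i$, the centered feedback satisfies $\EE{\bm{w}_s - \bm{g}_{e,s}} = \bm{0}$; combining this with the unbiasedness of $\bm{\hat\beta}_{e,t}$ for $\bm{\beta}^\star_e$ (from \cref{lem:estApproxBeta}) and with $\EE{y_s} = f(x_s)$ gives the claim. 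The one place requiring care is that $\bm{g}_{e,s}$ appears both inside $\bm{W}_t$ and inside $\bm{\hat\beta}_{e,t}$, so the relevant cross term is not a product of independent factors; here I would use that under both sampling strategies the extra samples estimating $g_{e,i}$ are drawn independently of the noise $\epsilon_{s}, \epsilon^w_{s,i}$ on the main index set $s$, which decouples the expectation.

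For the variance, I would write $z_{e,s,q}$ as the idealized hybrid reward built from the optimal coefficient $\bm{\beta}^\star_e = \Sigma_{\bm{\bar{w}\bar{w}}}^{-1}\sigma_{\bm{y\bar{w}}}$ plus a perturbation driven by the estimation error $\bm{\hat\beta}_{e,t} - \bm{\beta}^\star_e$. The idealized part contributes exactly the minimum variance $(1-\rho_e^2)\sigma^2$ recorded just before \cref{lem:estApproxBeta}. The perturbation is the source of the inflation factor: under the Gaussian noise assumption of the model, the residuals $\bm{Y}_t$ and the design $\bm{W}_t$ are jointly normal, so $\bm{\hat\beta}_{e,t}$ fluctuates about $\bm{\beta}^\star_e$ with an inverse-Wishart-type covariance. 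Taking expectations of the resulting quadratic form and invoking the same degrees-of-freedom accounting that produced the factor $q/(t-q-2)$ in \cref{thm:estProp} yields an excess variance of the form $c(e)\,q\,(1-\rho_e^2)\sigma^2/(t-q-2)$, where the constant $c(e)$ carries all the dependence on the sampling strategy.

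Evaluating $c(e) = a(e)$ is the crux, and it is where the explicit form of $\bm{F}_e$ enters through $\Sigma_{\bm{\bar{w}\bar{w}}} = (\Sigma_{\bm{ww}}\circ\bm{F}_e)/t$, $\sigma_{\bm{y\bar{w}}} = (\text{diag}(\bm{F}_e)\circ\sigma_{\bm{yw}})/t$, and the form of $\bm{\hat\beta}_{e,t}$ in \cref{lem:estApproxBeta}. I expect the Hadamard products with $\bm{F}_e$ inside the trace to collapse to a single scalar under each strategy. For MF with equal ratios $r_i = r$, every entry of $\bm{F}_e$ equals $(r-1)/r$, so $\bm{F}_e = \frac{r-1}{r}\bm{1}\bm{1}^\top$ and the factor pulls out of all Hadamard products (using $\Sigma_{\bm{ww}}\circ\bm{1}\bm{1}^\top = \Sigma_{\bm{ww}}$), giving $a(\text{MF}) = (r-1)/r$. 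For IS, setting $a_i = (r_i-1)/r_i$ gives the rank-one-plus-diagonal structure $\bm{F}_e = \bm{a}\bm{a}^\top + \text{diag}(a_i(1-a_i))$, whose interaction with $\Sigma_{\bm{ww}}^{-1}$ inside the trace is what I expect to cancel to $a(\text{IS}) = 1$, independently of the individual ratios $r_i$.

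The hardest part will be this last trace computation: showing rigorously that the Hadamard-modified covariance and cross terms reduce to the scalar $a(e)$ rather than a genuinely matrix-weighted quantity, and that the degrees-of-freedom count stays $t-q-2$ after the estimated functions are substituted. This requires aligning the sample inverse $(\bm{W}_t^\top\bm{W}_t\circ\bm{F}_e)^{-1}$ and the diagonally weighted cross term in $\bm{\hat\beta}_{e,t}$ with their population analogues, and leaning on the asymptotic optimality of the IS and MF partitions established in \cite{JCP20_gorodetsky2020generalized}. Once the trace identity is in hand, the two stated values of $a(e)$ follow by direct substitution.
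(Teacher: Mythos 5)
There is a genuine gap, and it sits exactly where you flagged it. The paper's own proof of \cref{thm:estApproxProp} does not carry out the computation you propose: it repeats the reduction-to-centered-variables argument of \cref{thm:estProp} and then replaces the invocation of \cref{fact:normalEstProp} (Nelson's Theorem 1) with part (b) of Theorem 4 of \citet{JUQ22_pham2022ensemble}, which is precisely the statement that the estimated-coefficient approximate control variate estimator has variance $\left(1 + \frac{a(e)q}{t-q-2}\right)(1-\rho_e^2)\sigma^2$ under the IS/MF partitions. Your plan amounts to re-proving that cited theorem from scratch, and at its crux --- showing that the Hadamard interaction of $\bm{F}_e$ with $\Sigma_{\bm{ww}}$ and $\sigma_{\bm{yw}}$ collapses to the scalar $a(e)$, in particular that the rank-one-plus-diagonal structure $\bm{F}_e = \bm{a}\bm{a}^\top + \mathrm{diag}(a_i(1-a_i))$ yields $a(\mathrm{IS})=1$ independently of the $r_i$ and of $\Sigma_{\bm{ww}}$ --- you offer only ``I expect \ldots to cancel.'' That cancellation is the entire content of the result; without it the proof is a restatement of the claim. (Your MF computation, by contrast, is fine: with $r_i = r$ every entry of $\bm{F}_e$ equals $(r-1)/r$, so the factor genuinely pulls out of all Hadamard products.)

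Moreover, the specific mechanism you propose for the inflation factor does not transfer as directly as you suggest. The $\frac{q}{t-q-2}$ in \cref{thm:estProp} comes from an expected inverse-Wishart quadratic form, which requires the rows of the design matrix to be an IID Gaussian sample. Under IS and MF the centered rows $\bm{w}_s - \bm{g}_{e,s}$ are \emph{not} independent across $s$: the same estimated functions $g_{e,i}$ (built from shared auxiliary samples) appear in every row, inducing cross-row correlations, so $\bm{W}_t^\top\bm{W}_t$ is not Wishart and the conditional-on-design argument needs the ensemble-specific analysis of \citet{JUQ22_pham2022ensemble} (building on \citet{JCP20_gorodetsky2020generalized}) rather than the degrees-of-freedom accounting you import from \cref{thm:estProp}. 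The same issue mildly affects your unbiasedness step: independence of the extra auxiliary samples from the main-index noise is not by itself the decoupling mechanism; what is actually used is conditional-on-design unbiasedness of the regression coefficient under joint normality, as in the centered-variable argument of \cref{thm:estProp}. That part is repairable; the variance computation is the missing idea, and the honest fix is either to invoke Theorem 4(b) of \citet{JUQ22_pham2022ensemble} as the paper does, or to actually carry out the trace identity and the non-IID-design variance analysis you deferred.
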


The key takeaways from \cref{thm:estApproxProp} are as follows. First, the hybrid reward with estimated auxiliary feedback is still an unbiased estimator, so one can use it to estimate the reward function $f$. Second, there is a potential loss in variance reduction as it has an extra multiplicative factor $a(e)$ and $\rho_e^2 \le \rho^2$.

\vspace{2mm}
\begin{rem}
    \label{rem:approx_convergence}
	As samples for estimating auxiliary functions increase compared to the reward function, the variance reduction from IS and MF sampling strategy converges to the reduction achieved using known auxiliary functions. 
	As $\forall i: r_i \rightarrow \infty$, then $\bm{F}_e \rightarrow \bm{1}_{q\times q}$. It is now straightforward to see that $\Sigma_{\bm{\bar{w}\bar{w}}}$ will become $\Sigma_{\bm{{w}{w}}}$, $\sigma_{\bm{y\bar{w}}}$ will become $\sigma_{\bm{y{w}}}$, and hence $\rho_e^2 = \rho^2$ as $\forall i: r_i \rightarrow \infty$.
\end{rem}

\vspace{2mm}
\begin{rem}
	The IS and MF sampling strategies are shown to be asymptotically optimal \citep{JCP20_gorodetsky2020generalized}, i.e., the variance reduction achieved by both strategies is asymptotically the same as if auxiliary feedback functions are known. However, both sampling strategies are useful in different applications, e.g., the IS sampling strategy suits the problems in which different auxiliary feedback can be independently sampled. In contrast, the MF sampling suits the problems where auxiliary feedback can not be sampled independently.
\end{rem}
	
		\subsection{Parameterized bandits with estimated auxiliary feedback functions}
		\label{sec:general}

We now consider the parameterized bandit setting described in \cref{sec:problem}, where the reward and auxiliary feedback function can be non-linear. To exploit the available auxiliary feedback in linear bandits, we propose a method in \cref{sec:linear} that uses hybrid reward in place of rewards to get tight upper confidence bound for the estimator of an unknown reward function and hence smaller regret as compared to the vanilla OFUL due to the smaller variance of the hybrid rewards. 
We generalize this observation and introduce the notion of {\em Auxiliary Feedback Compatible (AFC)} bandit algorithm.
\begin{defi}[\textbf{AFC Bandit Algorithm}]
    Any bandit algorithm $\kA$ is {\em Auxiliary Feedback Compatible} if: (i) $\kA$ can use correlated reward samples to construct upper confidence bound for reward function and (ii) with probability $1-\delta$, its estimated reward function $f_t^{\kA}$ has the following property:
    $$
        |f_t^{\kA}(x) - f(x)| \le \sigma h(x, \cO_t) + l(x, \cO_t),
    $$
    where $x \in \cX$, $\sigma^2$ is the variance of Gaussian noise in observed reward, and $\cO_t$ denotes the observations of actions and their rewards with the parameters of $\kA$ at the beginning of round $t$. 
\end{defi}

As the estimated coefficient vector uses all past samples, the resultant hybrid rewards become correlated due to using this estimated coefficient vector. Bandit algorithms like UCB1 \citep{ML02_auer2002finite} and kl-UCB \citep{AS13_cappe2013kullback} are not AFC as they need independent reward samples to construct upper confidence bounds. In contrast, bandit algorithms like OFUL \citep{NIPS11_abbasi2011improved}, Lin-UCB \citep{AISTATS11_chu2011contextual}, UCB-GLM \citep{ICML17_li2017provably}, IGP-UCB, and GP-TS \citep{ICML17_chowdhury2017kernelized} are AFC as they all use techniques proposed in \cite{NIPS11_abbasi2011improved} for building the upper confidence bound, which does not need reward samples to be independent. 

As AFC bandit algorithms use the noise variance of observed reward for constructing the confidence upper bound, they can also exploit available auxiliary feedback by replacing reward with its respective hybrid reward as shown in \cref{fig:sampling} (rightmost figure). We next give an upper bound on the instantaneous regret for any AFC bandit algorithm that uses hybrid rewards instead of observed rewards.
\begin{restatable}{thm}{instRegretGen}
    \label{thm:instRegretGen}
    Let $\kA$ be an AFC bandit algorithm with $|f^\kA_t(x) - f(x)|  \le \sigma h(x, \cO_t) + l(x, \cO_t)$ and $\bar\nu_{e,z,t}$ be the upper bound on sample variance of hybrid reward, whose value is set to $\sigma^2$ for $t \le q+2$. Then, with a probability of at least $1-2\delta$, the instantaneous regret of $\kA$ after using hybrid rewards (named $\kA$-\textnormal{AF}) for reward function estimation in round $t$ is
    $$
        r_t(\kA\textnormal{-AF}) \le 2\min(\sigma,(\bar\nu_{e,z,t})^{\frac{1}{2}})h(x, \cO_t) + l(x, \cO_t),
    $$
    where $e =\{\text{IS, MF, KF}\}$, and KF denotes the case where auxiliary functions are known. 
	For $t>q+2$ and $\bar\nu_{e,z,t} < \sigma^2$, 
	$
	\EE{r_t(\kA\textnormal{-AF})} \le \widetilde{O}\left(\left(\left(\frac{t-(1-a(e))q-3}{t-q-3}\right)(1-{\rho_e^2})\right)^{\frac{1}{2}}r_t(\kA) \right), 
	$
	where $a(\text{KF}) = 1$.
\end{restatable}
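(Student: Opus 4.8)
The plan is to lift the self-normalized confidence argument that underlies \cref{thm:instRegret} from the specific OFUL-AF algorithm to the abstract AFC interface, so that the only thing that changes when hybrid rewards replace observed rewards is the effective noise scale fed into the confidence construction. First I would invoke the two defining properties of an AFC algorithm. Property (i) guarantees that $\kA$ builds its confidence region from \emph{correlated} reward samples; this is exactly what is needed, because the shared estimator $\bm{\hat\beta}_{e,t}$ makes the hybrid rewards $\{z_{e,s,q}\}_{s\le t}$ correlated across rounds. Property (ii) gives, with probability $1-\delta$, the decomposition $|f_t^{\kA}(x)-f(x)|\le \sigma h(x,\cO_t)+l(x,\cO_t)$ in which the noise-dependent width is isolated in the factor $\sigma$ multiplying $h$, while $l$ carries the noise-free (regularization/bias) contribution. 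The key observation is that this factor $\sigma$ is precisely the square root of the per-sample noise variance used in the construction, so feeding hybrid rewards replaces it by the square root of the hybrid-reward variance.

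Next I would substitute the variance of the hybrid reward. By \cref{thm:estProp} (case KF) and \cref{thm:estApproxProp} (cases IS and MF), $z_{e,s,q}$ is an unbiased estimator of $f(x_s)$ with variance $\bigl(1+\tfrac{a(e)q}{t-q-2}\bigr)(1-\rho_e^2)\sigma^2$, so it can legitimately replace $y_s$; moreover the algorithm replaces the unknown true variance by its high-probability sample-variance upper bound $\bar\nu_{e,z,t}$ and, for safety, by $\min(\sigma^2,\bar\nu_{e,z,t})$. Substituting this effective noise scale into property (ii) yields $|f_t^{\kA\text{-AF}}(x)-f(x)|\le \min(\sigma,\sqrt{\bar\nu_{e,z,t}})\,h(x,\cO_t)+l(x,\cO_t)$ on the intersection of the confidence event and the variance-bound event, which holds with probability at least $1-2\delta$ by a union bound (the second $\delta$ being the failure probability of the chi-squared upper bound defining $\bar\nu_{e,z,t}$). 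The first displayed inequality then follows from the standard optimism argument: since $x_t$ maximizes the upper confidence index, $f(x^\star)-f(x_t)$ is bounded by the sum of the confidence widths at $x^\star$ and $x_t$, giving $r_t(\kA\text{-AF})\le 2\min(\sigma,(\bar\nu_{e,z,t})^{1/2})h(x,\cO_t)+l(x,\cO_t)$.

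For the expected-regret ratio I would compare the variance-dependent parts of $r_t(\kA\text{-AF})$ and $r_t(\kA)=2\sigma h(x,\cO_t)+l(x,\cO_t)$, whose $l$-terms coincide. Taking expectations of the first bound, applying Jensen's inequality to pull the expectation inside the concave square root, and using that $\bar\nu_{e,z,t}$ concentrates on the true hybrid-reward variance (evaluated with the $t-1$ samples available when the index for round $t$ is formed, consistent with the $\bar\nu_{z,t-1}$ appearing in \cref{thm:instRegret}) gives $\EE{\sqrt{\bar\nu_{e,z,t}}}\lesssim \sigma\sqrt{(1+\tfrac{a(e)q}{t-q-3})(1-\rho_e^2)}$. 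The required algebraic identity is $1+\tfrac{a(e)q}{t-q-3}=\tfrac{t-(1-a(e))q-3}{t-q-3}$, after which dividing by the variance term of $r_t(\kA)$ and absorbing the common $l$-term together with all logarithmic and constant factors into $\widetilde O$ produces the stated ratio, uniformly over $e\in\{\text{IS},\text{MF},\text{KF}\}$ with $a(\text{KF})=1$ and $\rho_{\text{KF}}=\rho$.

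The step I expect to be the main obstacle is the first one: justifying that the abstract AFC confidence bound, which property (ii) asserts only for observed rewards with noise scale $\sigma$, remains valid verbatim with $\sigma$ replaced by the hybrid-reward scale. This requires arguing that the correlated, non-Gaussian hybrid rewards (whose noise is a data-dependent linear combination of the original Gaussian noises) still satisfy the self-normalized tail conditions that every AFC construction relies on, and that the variance bounds of \cref{thm:estProp,thm:estApproxProp} enter the confidence width only through the scalar $\sigma$-slot. Handling the randomness and high-probability validity of $\bar\nu_{e,z,t}$ simultaneously with the confidence event is the secondary technical point.
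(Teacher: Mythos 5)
Your proposal is correct and follows essentially the same route as the paper's proof: a union bound combining the AFC confidence event with the chi-squared variance bound $\bar\nu_{e,z,t}$ to obtain the $1-2\delta$ guarantee with $\sigma$ replaced by $\min(\sigma,(\bar\nu_{e,z,t})^{1/2})$, the standard optimism chain for the first displayed inequality, and then expectations via unbiasedness of the sample variance together with \cref{thm:estProp} and \cref{thm:estApproxProp} and the identity $1+\frac{a(e)q}{t-q-3}=\frac{t-(1-a(e))q-3}{t-q-3}$ for the expected-regret ratio. The ``main obstacle'' you flag is handled in the paper exactly as you suggest---by the AFC definition itself, engineered so that correlated hybrid rewards enter the self-normalized confidence construction only through the scalar noise-variance slot (and your explicit Jensen step is left implicit in the paper when passing from $\EE{\sqrt{\hat\nu_{e,z,t-1}}}$ to the true hybrid-reward variance).
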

After using \cref{thm:estProp} and \cref{thm:estApproxProp} to replace the variance of hybrid reward, the proof follows similar steps as the proof of \cref{thm:instRegret}. 
We have given more details about the values of $h(x, \cO_t)$ and $l(x, \cO_t)$ for different AFC bandit algorithms in \cref{table}.

\begin{table}[H]
	\caption{Values of $h(x, \cO_t)$ and $l(x, \cO_t)$ for different AFC bandit algorithms}
	\label{table}
	\centering
	\setlength{\arrayrulewidth}{0.25mm}
	\setlength{\tabcolsep}{2pt}
	\renewcommand{\arraystretch}{2}		
	\begin{tabular}{|c|c|c|}
		\hline
		AFC bandit algorithm     & $h(x, \cO_t)$     & $l(x, \cO_t)$ \\
		\hline
		OFUL \citep{NIPS11_abbasi2011improved} & $\sqrt{d\log\left( \frac{1+tL^2/\lambda}{\delta}\right)}\norm{x}_{\overline{V}_{t}^{-1}}$ & $\lambda^{\frac{1}{2}}S\norm{x}_{\overline{V}_{t}^{-1}}$     \\
		Lin-UCB (OFUL for contextual setting)    & $\sqrt{d\log\left( \frac{1+tL^2/\lambda}{\delta}\right)}\norm{x}_{\overline{V}_{t}^{-1}}$ & $\lambda^{\frac{1}{2}}S\norm{x}_{\overline{V}_{t}^{-1}}$     \\
		GLM-UCB \citep{ICML17_li2017provably} & $\sqrt{\frac{d}{2} \log(1 + 2t/d) + \log(1/\delta)} \frac{\norm{x}_{{V}_{t}^{-1}}}{\kappa}$ & 0 \\
		IGP-UCB \citep{ICML17_chowdhury2017kernelized}     & $\sqrt{2(\gamma_{t-1} + 1 +  \log(1/\delta))}\sigma_{t-1}(x)$       & $B\sigma_{t-1}(x)$ \\
		\hline
	\end{tabular}
	
\end{table}

	\section{Experiments}
	\label{sec:experiments}

To validate our theoretical results, we empirically demonstrate the performance gain due to auxiliary feedback in different settings of parameterized bandits. We repeat all our experiments $50$ times and show the regret as defined in \cref{eq:regret} with a $95\%$ confidence interval (vertical line on each curve shows the confidence interval). 
Due to space constraints, the details of used problem instances are given in \cref{ssec:more_experiment} of the supplementary material.

\paragraph{Comparing regret with benchmark bandit algorithms:} 
We considered three bandit settings for our experiments: linear bandits, linear contextual bandits, and non-linear contextual bandits (details are given in \cref{ssec:more_experiment}). The formal setting of a contextual bandits with auxiliary feedback is given in \cref{ssec:contextual}. 
We used the following existing bandit algorithms for these settings: OFUL \citep{NIPS11_abbasi2011improved} for linear bandits, Lin-UCB \citep{AISTATS11_chu2011contextual} for linear contextual bandits, and Lin-UCB with the polynomial kernel (which we named NLin-UCB) for non-linear contextual bandits. We compare the performance of these benchmark bandit algorithms with four different variants of our algorithms. The first variant assumes the auxiliary feedback functions are known (highlighted by adding `-AF' to the benchmark algorithms). When auxiliary feedback functions are unknown, we use IS and MF sampling strategy while maintaining $r=2$ (i.e., getting one extra sample of auxiliary feedback in each round). The IS and MF sampling strategies are the same when only one auxiliary feedback exists. Since we only use one auxiliary feedback in our experiments, we highlight this variant by adding `-IS/MF' to the benchmark algorithms.
When IS and MF sampling strategies are used, one needs to update the auxiliary feedback functions in each round to get better estimators. However, it leads to the re-computation of all variables that are needed for updating the hybrid rewards, which is not needed when auxiliary feedback functions are fixed. Therefore, we consider two more computationally efficient variants for the unknown auxiliary functions setting. One variant assumes the knowledge of biased auxiliary feedback, i.e., $g_i(x) + \epsilon_g$ is available instead of $g_i(x)$ (highlighted by adding `-BE' to the benchmark algorithms). Another variant assumes that some initial samples of auxiliary feedback are available, which are used to get the auxiliary feedback function estimator. We highlight this variant by adding `-EH' to the benchmark algorithms. 
All variants with given parameters perform better than benchmark bandit algorithms (see \cref{fig:oful}, \cref{fig:lin_ucb}, and \cref{fig:nlin_ucb}). We observe the expected performance among these variants as the variant with a known auxiliary feedback function outperforms all other variants. At the same time, IS/MF sampling strategy-based variant outperforms the other two heuristic variants for the setting of unknown auxiliary feedback function.

\begin{figure*}
	\centering
	\subfloat[Linear Bandit]{\label{fig:oful}
		\includegraphics[scale=0.31]{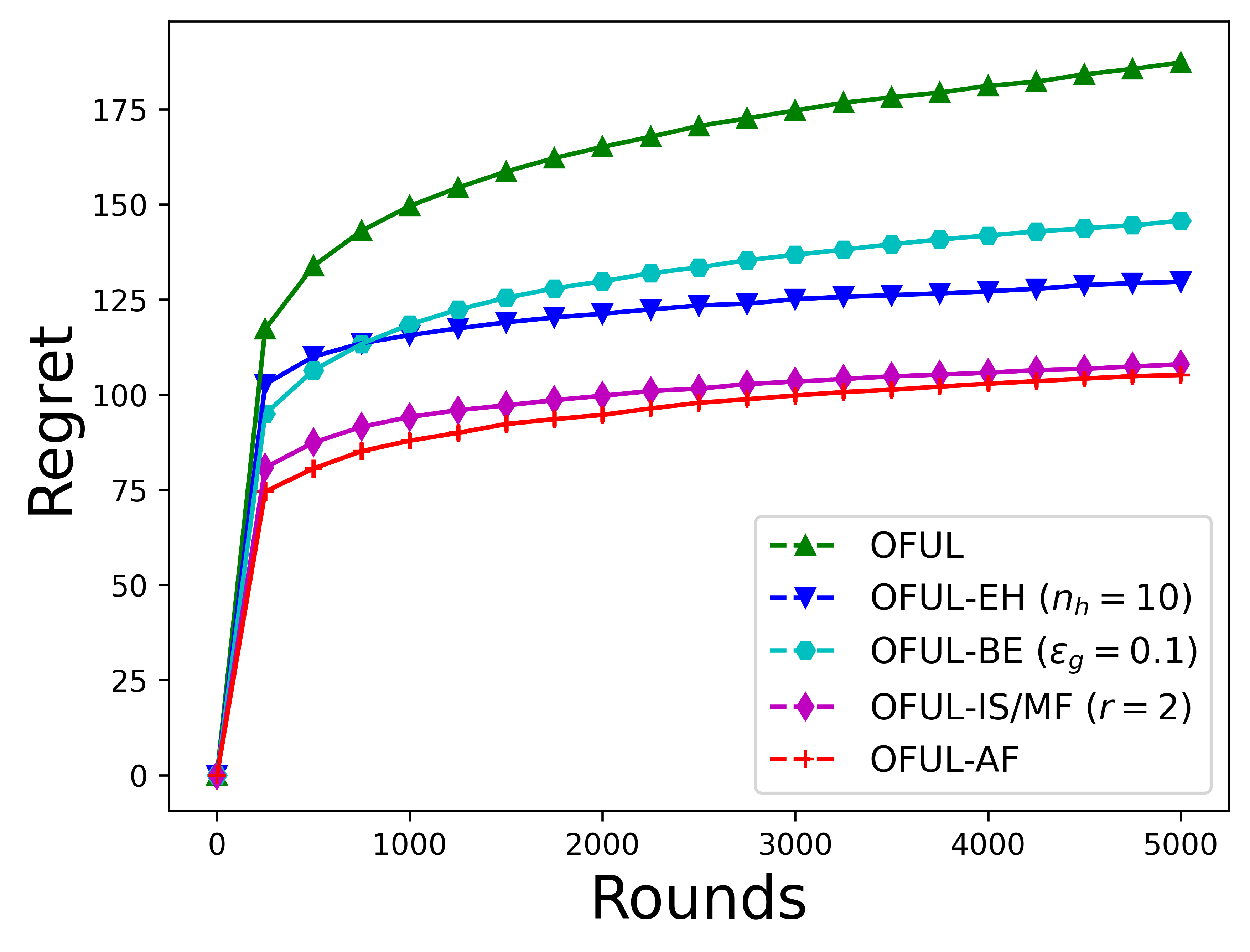}}
	\subfloat[Linear Contextual Bandits]{\label{fig:lin_ucb}
		\includegraphics[scale=0.31]{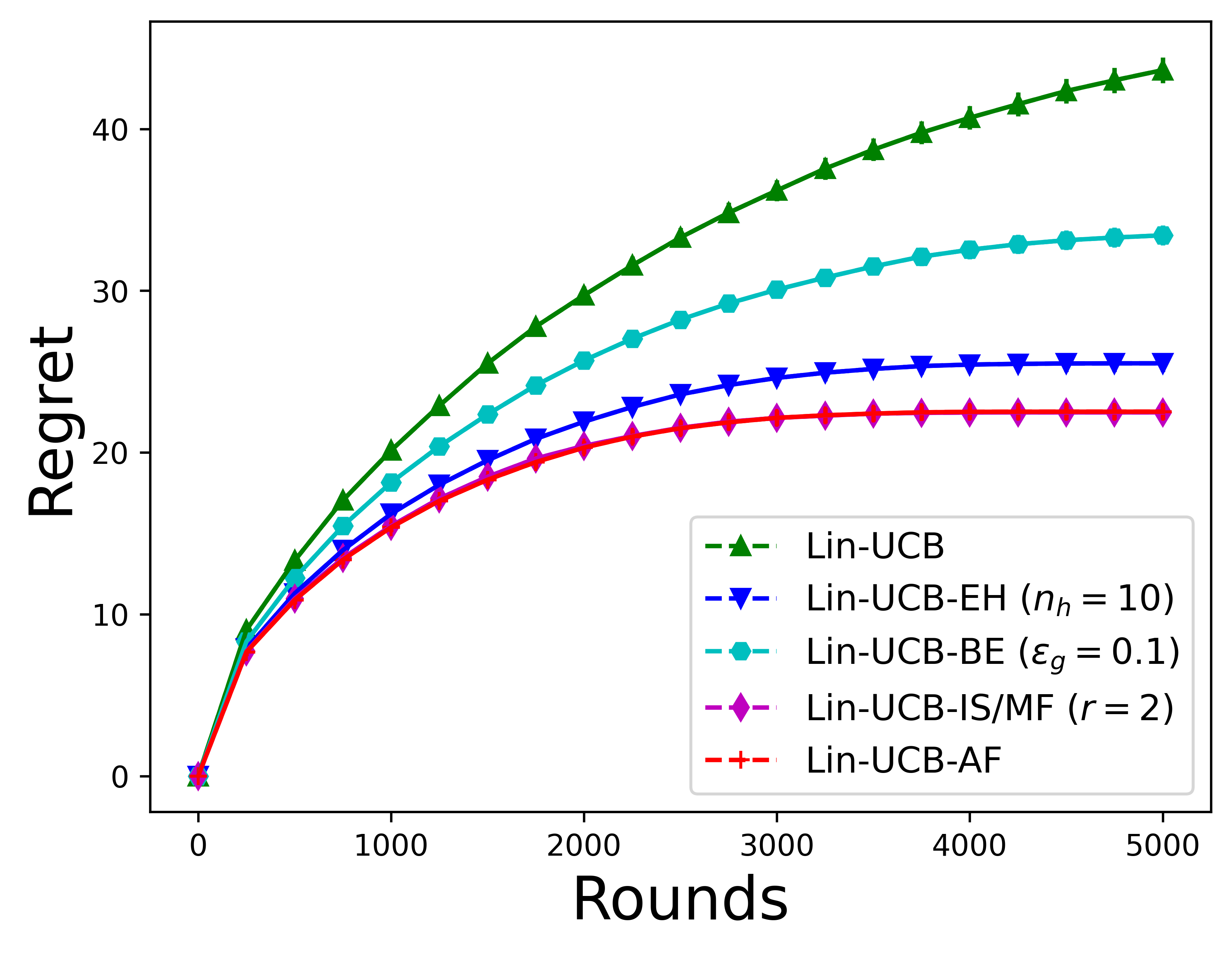}}
	\subfloat[Non-linear Contextual Bandits]{\label{fig:nlin_ucb}
		\includegraphics[scale=0.31]{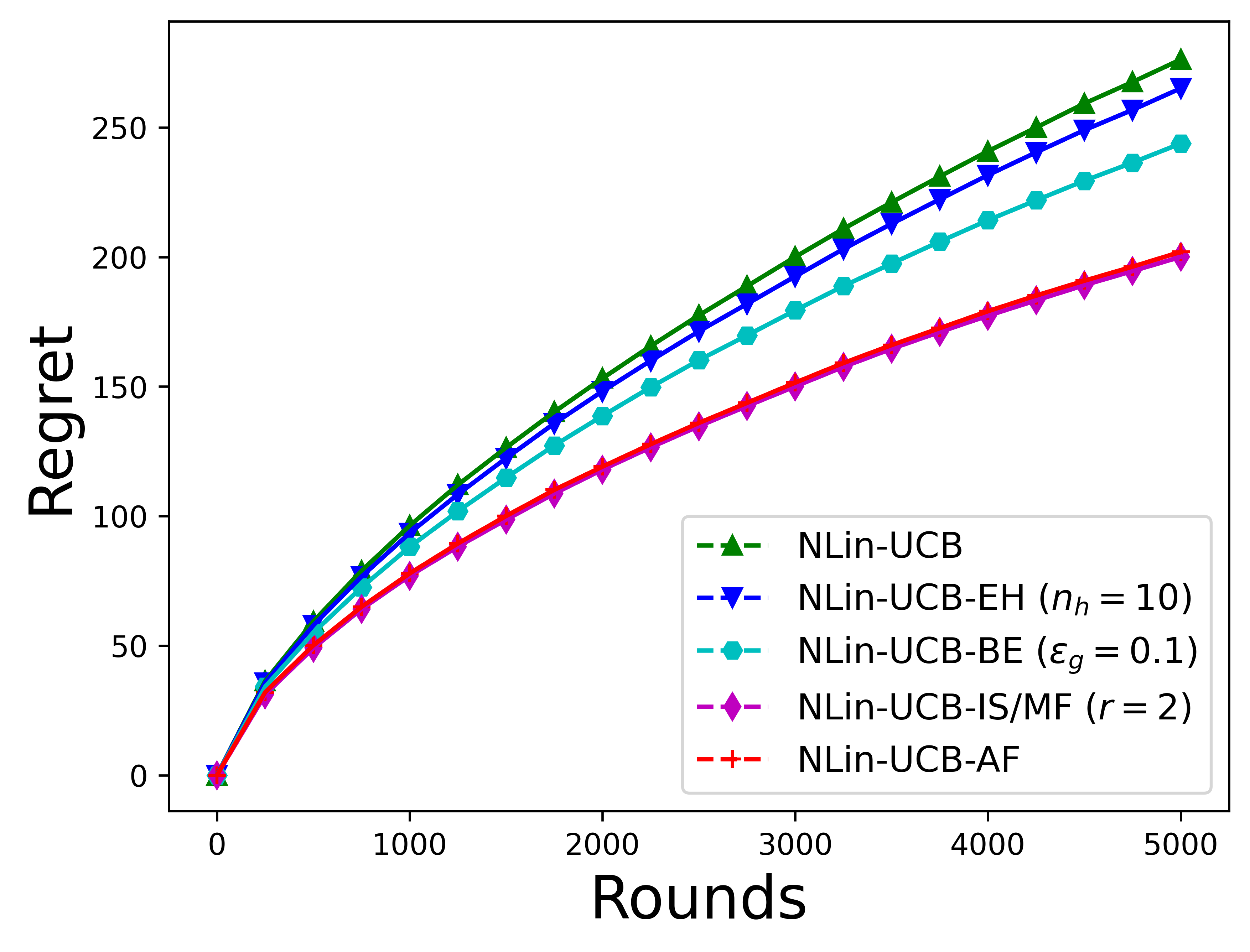}} \\
	\subfloat[Biased Estimator]{\label{fig:biased}
		\includegraphics[scale=0.31]{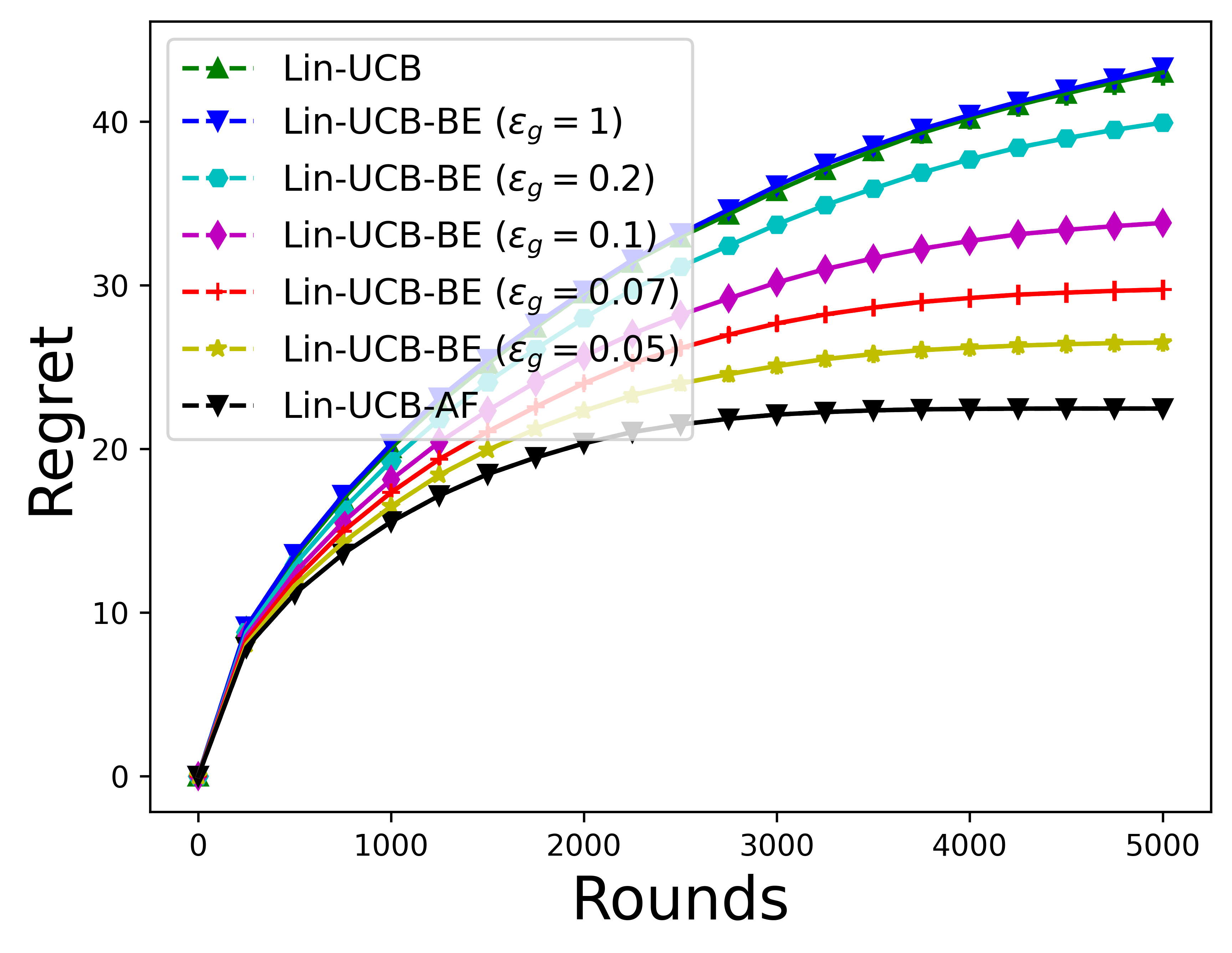}}
	\subfloat[Estimator from historical data]{\label{fig:history}
		\includegraphics[scale=0.31]{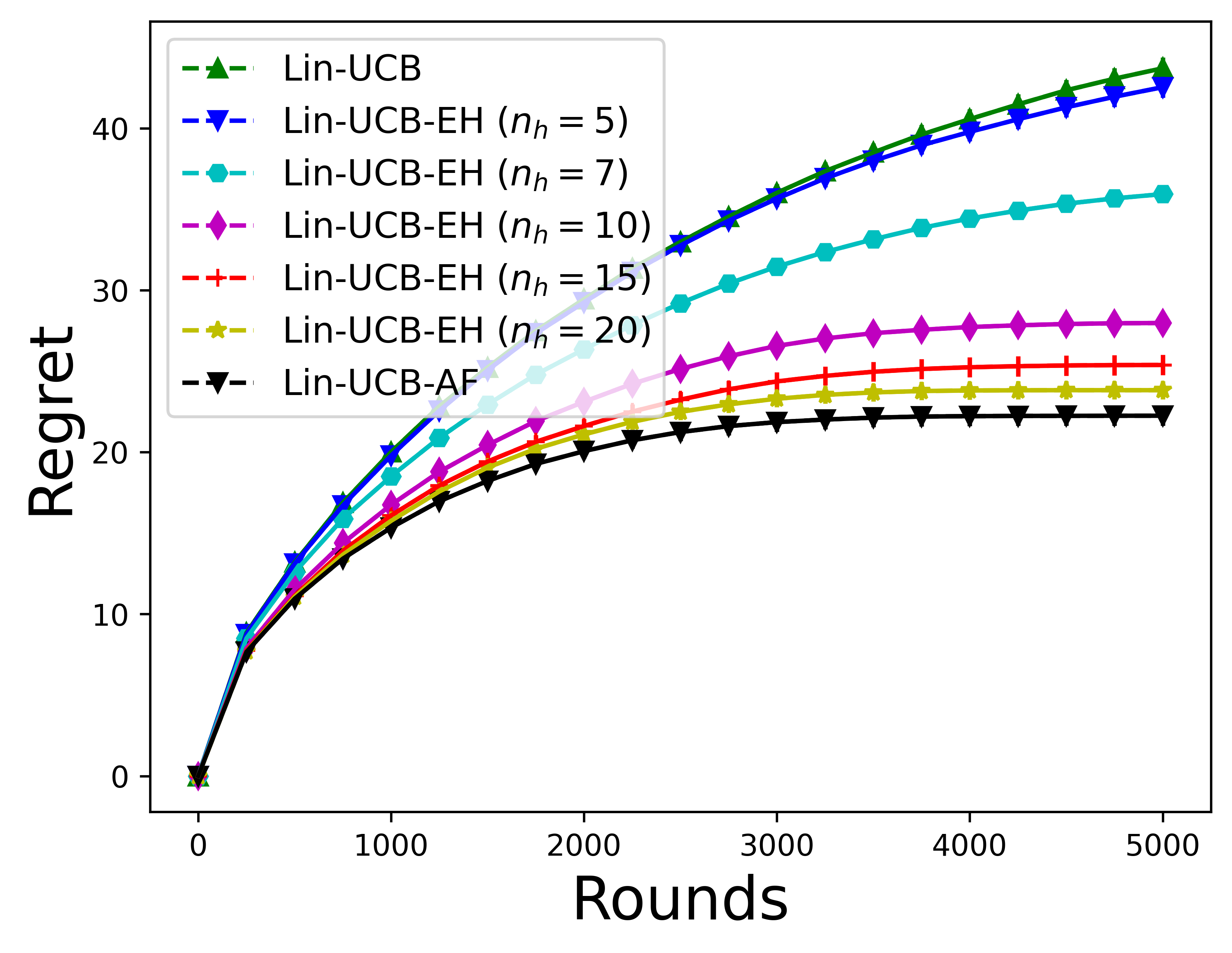}}
	\subfloat[Influence of correlation]{\label{fig:correlation}
		\includegraphics[scale=0.31]{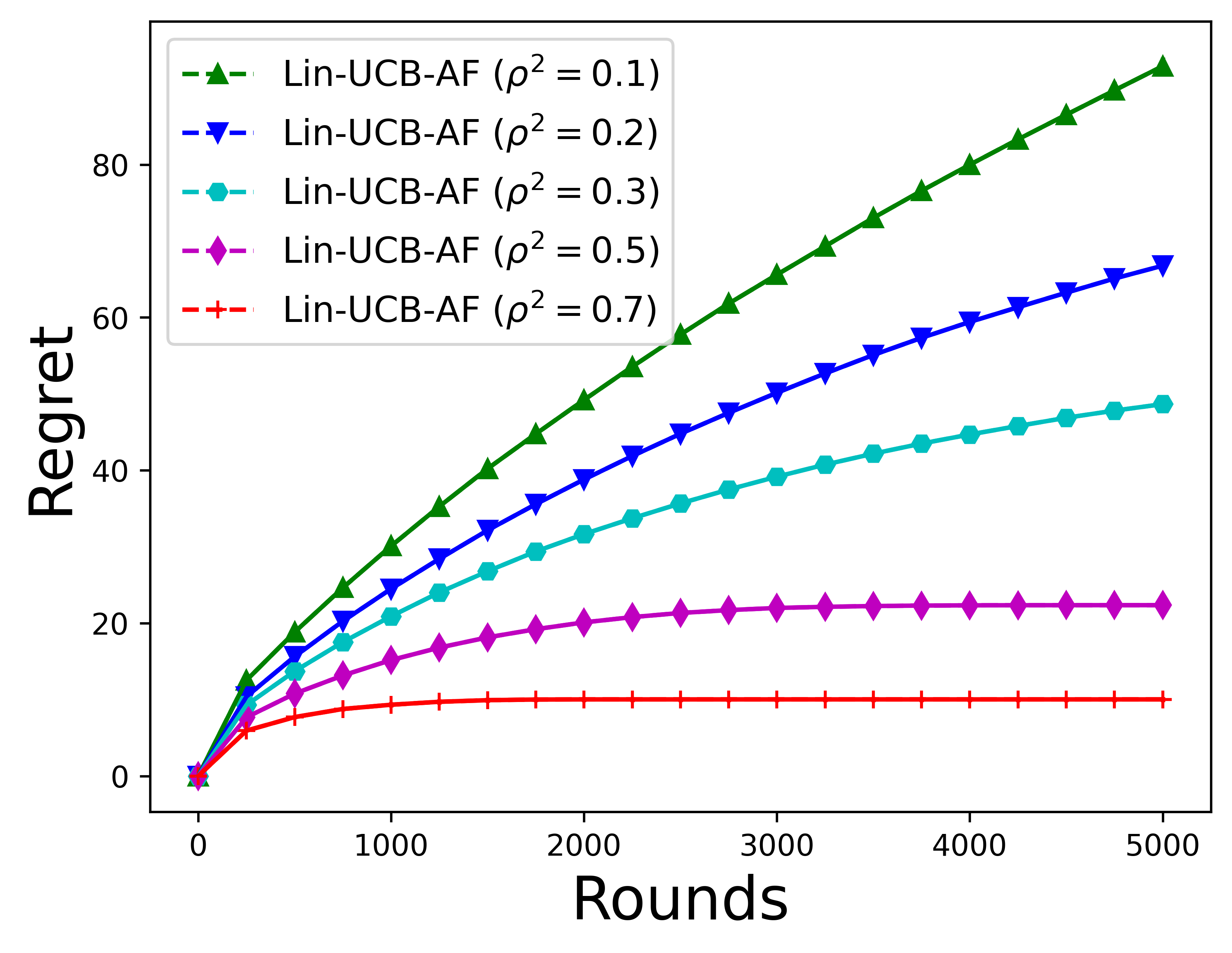}}
	\caption{
		\textbf{Top row:} Comparing regret of different variants with their benchmark bandit algorithms in different settings.
		 \textbf{Bottom row: } Regret vs. different biases in Lin-UCB-BE (left figure), regret vs. number of historical samples of auxiliary feedback in Lin-UCB-EH (middle figure), and regret of Lin-UCB-AF vs. varying correlation coefficients of reward and its auxiliary feedback (right figure). 
	}
	\label{fig:differentBenaviour}
	\vspace{-5.5mm}
\end{figure*}

\textbf{Regret vs. different biased estimator:} 
To know the effect of bias in auxiliary feedback (i.e., $\epsilon_g)$ in the mean value of auxiliary feedback, we run an experiment with the same linear contextual bandits experiment setup mentioned above. To see the variation in regret, we set $\epsilon_g =\{1, 0.2, 0.1, 0.07, 0.05\}$. As shown in \cref{fig:biased}, the regret increases with an increase in bias and even starts performing poorly than Lin-UCB. This experiment demonstrates that as long as the bias in auxiliary feedback is within a limit, there will be an advantage to using this computationally efficient variant.

\textbf{Regret vs. number of historical samples of auxiliary feedback :}
Increasing the number of historical samples of auxiliary feedback for estimating the auxiliary feedback function reduces the error in its estimation, leading to better performance. To observe this, we use estimators using different numbers of auxiliary feedback samples, i.e., $n_h =\{5,7,10,15,20\}$ in linear contextual bandits setting. As expected, the regret decreases with an increase in auxiliary feedback samples, but using an estimator with a few samples even performs poorly than Lin-UCB, as shown in \cref{fig:history}. 

\textbf{Regret vs. correlation coefficient:}
As theoretical results imply that the regret decreases when the correlation between reward and auxiliary feedback increases. To validate this, we used problem instances with different correlation coefficients in linear contextual bandits setting. As expected, we observe that the regret decreases as the correlation coefficient increases, as shown in \cref{fig:correlation}.

	\section{Conclusion}
	\label{sec:conclusion}

This paper studies a novel parameterized bandit problem in which a learner observes auxiliary feedback correlated with the observed reward. We first introduce the notion of `hybrid reward,' which combines the reward and its auxiliary feedback. To get the maximum benefit from hybrid reward, we treat auxiliary feedback as a control variate and then extend control variate theory to a setting where a function can parameterize control variates. Equipped with these results, we show that the variance of hybrid rewards is smaller than observed rewards. We then use these hybrid rewards to estimate the reward function, leading to tight confidence bounds and hence smaller regret. We have proved that the expected instantaneous regret of any AFC bandit algorithm after using hybrid rewards is improved by a factor of $O((1 - \rho^2)^{\frac{1}{2}})$, where $\rho$ is the correlation coefficient of the reward and its auxiliary feedback. Our experiments also validate our theoretical results. An interesting future direction is to extend these results to bandit settings with heteroscedastic and non-Gaussian noise.

    \begin{ack}
        DesCartes: this research is supported by the National Research Foundation, Prime Minister’s Office, Singapore under its Campus for Research Excellence and Technological Enterprise (CREATE) programme.
    \end{ack}

	\bibliographystyle{unsrtnat}
	\bibliography{ref}

\begin{thebibliography}{38}
\providecommand{\natexlab}[1]{#1}
\providecommand{\url}[1]{\texttt{#1}}
\expandafter\ifx\csname urlstyle\endcsname\relax
  \providecommand{\doi}[1]{doi: #1}\else
  \providecommand{\doi}{doi: \begingroup \urlstyle{rm}\Url}\fi

\bibitem[Slivkins(2019)]{NOW19_slivkins2019introduction}
Aleksandrs Slivkins.
\newblock {Introduction to Multi-Armed Bandits}.
\newblock \emph{Foundations and Trends{\textregistered} in Machine Learning},
  2019.

\bibitem[Lattimore and Szepesv\'ari(2020)]{Book_lattimore2020bandit}
Tor Lattimore and Csaba Szepesv\'ari.
\newblock \emph{{Bandit Algorithms}}.
\newblock Cambridge University Press, 2020.

\bibitem[Li et~al.(2010)Li, Chu, Langford, and
  Schapire]{WWW10_li2010contextual}
Lihong Li, Wei Chu, John Langford, and Robert~E Schapire.
\newblock {A Contextual-Bandit Approach to Personalized News Article
  Recommendation}.
\newblock In \emph{Proc. WWW}, pages 661--670, 2010.

\bibitem[Chu et~al.(2011)Chu, Li, Reyzin, and
  Schapire]{AISTATS11_chu2011contextual}
Wei Chu, Lihong Li, Lev Reyzin, and Robert~E Schapire.
\newblock {Contextual Bandits with Linear Payoff Functions}.
\newblock In \emph{Proc. AISTATS}, pages 208--214, 2011.

\bibitem[Abbasi-Yadkori et~al.(2011)Abbasi-Yadkori, P{\'a}l, and
  Szepesv{\'a}ri]{NIPS11_abbasi2011improved}
Yasin Abbasi-Yadkori, D{\'a}vid P{\'a}l, and Csaba Szepesv{\'a}ri.
\newblock {Improved Algorithms for Linear Stochastic Bandits}.
\newblock In \emph{Proc. NeurIPS}, pages 2312--2320, 2011.

\bibitem[Agrawal and Goyal(2013)]{ICML13_agrawal2013thompson}
Shipra Agrawal and Navin Goyal.
\newblock {Thompson Sampling for Contextual Bandits with Linear Payoffs}.
\newblock In \emph{Proc. ICML}, pages 127--135, 2013.

\bibitem[Filippi et~al.(2010)Filippi, Cappe, Garivier, and
  Szepesv{\'a}ri]{NIPS10_filippi2010parametric}
Sarah Filippi, Olivier Cappe, Aur{\'e}lien Garivier, and Csaba Szepesv{\'a}ri.
\newblock {Parametric Bandits: The Generalized Linear Case}.
\newblock In \emph{Proc. NeurIPS}, pages 586--594, 2010.

\bibitem[Li et~al.(2017)Li, Lu, and Zhou]{ICML17_li2017provably}
Lihong Li, Yu~Lu, and Dengyong Zhou.
\newblock {Provably Optimal Algorithms for Generalized Linear Contextual
  Bandits}.
\newblock In \emph{Proc. ICML}, pages 2071--2080, 2017.

\bibitem[Jun et~al.(2017)Jun, Bhargava, Nowak, and
  Willett]{NIPS17_jun2017scalable}
Kwang-Sung Jun, Aniruddha Bhargava, Robert Nowak, and Rebecca Willett.
\newblock {Scalable Generalized Linear Bandits: Online Computation and
  Hashing}.
\newblock In \emph{Proc. NeurIPS}, pages 99--109, 2017.

\bibitem[Valko et~al.(2013)Valko, Korda, Munos, Flaounas, and
  Cristianini]{UAI13_valko2013finite}
Michal Valko, Nathan Korda, R{\'e}mi Munos, Ilias Flaounas, and Nello
  Cristianini.
\newblock {Finite-time Analysis of Kernelised Contextual Bandits}.
\newblock In \emph{Proc. UAI}, pages 654--663, 2013.

\bibitem[Chowdhury and Gopalan(2017)]{ICML17_chowdhury2017kernelized}
Sayak~Ray Chowdhury and Aditya Gopalan.
\newblock {On Kernelized Multi-armed Bandits}.
\newblock In \emph{Proc. ICML}, pages 844--853, 2017.

\bibitem[Alon et~al.(2015)Alon, Cesa-Bianchi, Dekel, and
  Koren]{COLT15_alon2015online}
Noga Alon, Nicolo Cesa-Bianchi, Ofer Dekel, and Tomer Koren.
\newblock {Online Learning with Feedback Graphs: Beyond Bandits}.
\newblock In \emph{Proc. COLT}, pages 23--35, 2015.

\bibitem[Wu et~al.(2015)Wu, Gy{\"o}rgy, and
  Szepesv{\'a}ri]{NIPS15_wu2015online}
Yifan Wu, Andr{\'a}s Gy{\"o}rgy, and Csaba Szepesv{\'a}ri.
\newblock {Online Learning with Gaussian Payoffs and Side Observations}.
\newblock In \emph{Proc. NeurIPS}, pages 1360--1368, 2015.

\bibitem[Lavenberg and Welch(1981)]{MS81_lavenberg1981perspective}
Stephen~S Lavenberg and Peter~D Welch.
\newblock {A Perspective on the Use of Control Variables to Increase the
  Efficiency of Monte Carlo Simulations}.
\newblock \emph{Management Science}, pages 322--335, 1981.

\bibitem[Lavenberg et~al.(1982)Lavenberg, Moeller, and
  Welch]{OR82_lavenberg1982statistical}
Stephen~S Lavenberg, Thomas~L Moeller, and Peter~D Welch.
\newblock {Statistical Results on Control Variables with Application to
  Queueing Network Simulation}.
\newblock \emph{Operations Research}, pages 182--202, 1982.

\bibitem[Verma and Hanawal(2021)]{NeurIPS21_verma2021stochastic}
Arun Verma and Manjesh~K Hanawal.
\newblock {Stochastic Multi-Armed Bandits with Control Variates}.
\newblock In \emph{Proc. NeurIPS}, pages 27592--27603, 2021.

\bibitem[Nelson(1990)]{OR90_nelson1990control}
Barry~L Nelson.
\newblock {Control Variate Remedies}.
\newblock \emph{Operations Research}, pages 974--992, 1990.

\bibitem[Kreutzer et~al.(2017)Kreutzer, Sokolov, and
  Riezler]{ACL17_kreutzer2017bandit}
Julia Kreutzer, Artem Sokolov, and Stefan Riezler.
\newblock {Bandit Structured Prediction for Neural Sequence-to-Sequence
  Learning}.
\newblock In \emph{Proc. ACL}, pages 1503--1513, 2017.

\bibitem[Sutton and Barto(2018)]{Book_sutton2018reinforcement}
Richard~S Sutton and Andrew~G Barto.
\newblock \emph{{Reinforcement learning: An introduction}}.
\newblock MIT press, 2018.

\bibitem[Vlassis et~al.(2021)Vlassis, Chandrashekar, Gil, and
  Kallus]{NeurIPS21_vlassis2021control}
Nikos Vlassis, Ashok Chandrashekar, Fernando~Amat Gil, and Nathan Kallus.
\newblock {Control Variates for Slate Off-Policy Evaluation}.
\newblock In \emph{Proc. NeurIPS}, pages 3667--3679, 2021.

\bibitem[Caron et~al.(2012)Caron, Kveton, Lelarge, and
  Bhagat]{UAI12_caron2012leveraging}
Stephane Caron, Branislav Kveton, Marc Lelarge, and S~Bhagat.
\newblock {Leveraging Side Observations in Stochastic Bandits}.
\newblock In \emph{Proc. UAI}, pages 142--151, 2012.

\bibitem[Mannor and Shamir(2011)]{NIPS11_mannor2011bandits}
Shie Mannor and Ohad Shamir.
\newblock {From Bandits to Experts: On the Value of Side-Observations}.
\newblock In \emph{Proc. NeurIPS}, pages 684--692, 2011.

\bibitem[Koc{\'a}k et~al.(2014)Koc{\'a}k, Neu, Valko, and
  Munos]{NIPS14_kocak2014efficient}
Tom{\'a}{\v{s}} Koc{\'a}k, Gergely Neu, Michal Valko, and R{\'e}mi Munos.
\newblock {Efficient learning by implicit exploration in bandit problems with
  side observations}.
\newblock In \emph{Proc. NeurIPS}, pages 613--621, 2014.

\bibitem[Alon et~al.(2017)Alon, Cesa-Bianchi, Gentile, Mannor, Mansour, and
  Shamir]{SJC17_alon2017nonstochastic}
Noga Alon, Nicolo Cesa-Bianchi, Claudio Gentile, Shie Mannor, Yishay Mansour,
  and Ohad Shamir.
\newblock {Nonstochastic Multi-Armed Bandits with Graph-Structured Feedback}.
\newblock \emph{SIAM Journal on Computing}, pages 1785--1826, 2017.

\bibitem[Verma et~al.(2019)Verma, Hanawal, Szepesv{\'a}ri, and
  Saligrama]{AISTATS19_verma2019online}
Arun Verma, Manjesh~K Hanawal, Csaba Szepesv{\'a}ri, and Venkatesh Saligrama.
\newblock {Online Algorithm for Unsupervised Sensor Selection}.
\newblock In \emph{Proc. AISTATS}, pages 3168--3176, 2019.

\bibitem[Verma et~al.(2020{\natexlab{a}})Verma, Hanawal, and
  Hemachandra]{ACML20_verma2020thompson}
Arun Verma, Manjesh~K Hanawal, and Nandyala Hemachandra.
\newblock {Thompson Sampling for Unsupervised Sequential Selection}.
\newblock In \emph{Proc. ACML}, pages 545--560, 2020{\natexlab{a}}.

\bibitem[Verma et~al.(2020{\natexlab{b}})Verma, Hanawal, Szepesv{\'a}ri, and
  Saligrama]{NeurIPS20_verma2020online}
Arun Verma, Manjesh~K Hanawal, Csaba Szepesv{\'a}ri, and Venkatesh Saligrama.
\newblock {Online Algorithm for Unsupervised Sequential Selection with
  Contextual Information}.
\newblock In \emph{Proc. NeurIPS}, pages 778--788, 2020{\natexlab{b}}.

\bibitem[James(1985)]{JORS85_james1985variance}
B.~A.~P. James.
\newblock {Variance Reduction Techniques}.
\newblock \emph{Journal of the Operational Research Society}, pages 525--530,
  1985.

\bibitem[Nelson(1989)]{EJOR89_nelson1989batch}
Barry~L Nelson.
\newblock {Batch size effects on the efficiency of control variates in
  simulation}.
\newblock \emph{European Journal of Operational Research}, pages 184--196,
  1989.

\bibitem[Botev and Ridder(2017)]{Willy17_botev2017variance}
Zdravko Botev and Ad~Ridder.
\newblock {Variance Reduction}.
\newblock \emph{Wiley statsRef: Statistics reference online}, pages 1--6, 2017.

\bibitem[Chen and Ghahramani(2016)]{ICML16_chen2016scalable}
Yutian Chen and Zoubin Ghahramani.
\newblock {Scalable Discrete Sampling as a Multi-Armed Bandit Problem}.
\newblock In \emph{Proc. ICML}, pages 2492--2501, 2016.

\bibitem[Gorodetsky et~al.(2020)Gorodetsky, Geraci, Eldred, and
  Jakeman]{JCP20_gorodetsky2020generalized}
Alex~A Gorodetsky, Gianluca Geraci, Michael~S Eldred, and John~D Jakeman.
\newblock {A Generalized Approximate Control Variate Framework for
  Multifidelity Uncertainty Quantification}.
\newblock \emph{Journal of Computational Physics}, page 109257, 2020.

\bibitem[Pham and Gorodetsky(2022)]{JUQ22_pham2022ensemble}
Trung Pham and Alex~A Gorodetsky.
\newblock {Ensemble Approximate Control Variate Estimators: Applications to
  MultiFidelity Importance Sampling}.
\newblock \emph{Journal on Uncertainty Quantification}, pages 1250--1292, 2022.

\bibitem[Auer et~al.(2002)Auer, Cesa-Bianchi, and Fischer]{ML02_auer2002finite}
Peter Auer, Nicolo Cesa-Bianchi, and Paul Fischer.
\newblock {Finite-time Analysis of the Multiarmed Bandit Problem}.
\newblock \emph{Machine Learning}, pages 235--256, 2002.

\bibitem[Capp{\'e} et~al.(2013)Capp{\'e}, Garivier, Maillard, Munos, and
  Stoltz]{AS13_cappe2013kullback}
Olivier Capp{\'e}, Aur{\'e}lien Garivier, Odalric-Ambrym Maillard, R{\'e}mi
  Munos, and Gilles Stoltz.
\newblock {Kullback-Leibler Upper Confidence Bounds for Optimal Sequential
  Allocation}.
\newblock \emph{The Annals of Statistics}, pages 1516--1541, 2013.

\bibitem[Hayashi(2000)]{Book_hayashi2000econometrics}
F~Hayashi.
\newblock \emph{{Econometrics}}.
\newblock Princeton University Press, 2000.

\bibitem[Van De~Geer(2005)]{ESBS05_van2005estimation}
Sara~A Van De~Geer.
\newblock {Estimation}.
\newblock \emph{Encyclopedia of Statistics in Behavioral Science}, pages
  549--553, 2005.

\bibitem[Schmeiser(1982)]{OR82_schmeiser1982batch}
Bruce Schmeiser.
\newblock {Batch Size Effects in the Analysis of Simulation Output}.
\newblock \emph{Operations Research}, pages 556--568, 1982.

\end{thebibliography}

	\newpage
	\appendix
	\section{Supplementary material}
	\label{sec:appendix}

\subsection{Missing proofs related to auxiliary feedback}

\subsection*{Results from linear regression}
We first state results for linear regression that we will use in the subsequent proofs. Consider the following regression problem with $t$ samples and $q$ features:
\eqs{
	z_s = \bm{x}_s^\top\bm{\theta} + \epsilon_s, ~~~ i \in \{1,2, \ldots, t\}
}
where $z_s \in \R$ is the $s^{\text{th}}$ target variable, $\bm{x}_s  = (x_{s1},\ldots, x_{sq})\in \R^q$ is the $s^{\text{th}}$ feature vector, $\bm{\theta} \in \R^q$ is the unknown regression parameters, and $\epsilon_s$ is a normally distributed noise with mean $0$ and constant variance $\sigma^2$. The values of noise $\epsilon_s$ form an IID sequence and are independent of $\bm{x}_s$. Let

$$
\bm{Z}_t = \begin{pmatrix}
	z_1 \\
	\vdots \\
	z_t
\end{pmatrix},
\qquad
\bm{X}_t = \begin{pmatrix}
	x_{11}~~ \ldots~~ x_{1q}\\
	\vdots  ~~~~~~ \cdots ~~~~~~\vdots\\
	X_{t1}~~\ldots ~~x_{tq}
\end{pmatrix}, \text{ and}
\quad
\bm{\epsilon}_t = \begin{pmatrix}
	\epsilon_1 \\
	\vdots \\
	\epsilon_t
\end{pmatrix}.
$$

Then, the best linear unbiased estimator of $\bm{\theta}$ is $\bm{\hat\theta}_t = (\bm{X}_t^\top\bm{X}_t)^{-1}\bm{X}_t^\top\bm{Z}_t$, which has the following finite sample properties.

\begin{fact}
	\label{fact:estmProp}
	The following are the finite sample properties of the least square estimator $\bm{\hat\theta}_t$:
	\als{
		&1.~\EE{\bm{\hat\theta}_t | \bm{X}_t} = \bm{\theta},&~\text{(unbiased estimator)} \\
		&2.~\text{Var}(\bm{\hat\theta}_t | \bm{X}_t) = \sigma^2(\bm{X}_t^\top\bm{X}_t)^{-1}, \text{ and} ~& \text{(expression for the variance)} \\
		&3.~\text{Var}(\bm{\hat\theta}_{ti} | \bm{X}_t) = \sigma^2(\bm{X}_t^\top\bm{X}_t)_{ii}^{-1}, ~& \text{(element-wise variance)} 
	}
	where $(\bm{X}_t^\top\bm{X})_{ii}^{-1}$ is the $ii-$element of the matrix  $(\bm{X}_t^\top\bm{X})^{-1}$.
\end{fact}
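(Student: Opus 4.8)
The plan is to establish the three properties directly from the closed-form expression $\bm{\hat\theta}_t = (\bm{X}_t^\top\bm{X}_t)^{-1}\bm{X}_t^\top\bm{Z}_t$, treating $\bm{X}_t$ as fixed throughout since every claim is stated conditionally on $\bm{X}_t$ (and assuming $\bm{X}_t^\top\bm{X}_t$ is invertible so the estimator is well defined). First I would substitute the generative model $\bm{Z}_t = \bm{X}_t\bm{\theta} + \bm{\epsilon}_t$ into this formula and simplify using $(\bm{X}_t^\top\bm{X}_t)^{-1}(\bm{X}_t^\top\bm{X}_t) = I$ to obtain the error decomposition
$$
\bm{\hat\theta}_t - \bm{\theta} = (\bm{X}_t^\top\bm{X}_t)^{-1}\bm{X}_t^\top\bm{\epsilon}_t.
$$
This single identity drives all three claims, so deriving it cleanly is the first step.

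For Property~1, I would take the conditional expectation of this identity given $\bm{X}_t$. Since the noise is zero-mean and independent of the features, $\EE{\bm{\epsilon}_t \mid \bm{X}_t} = 0$, while the matrix prefactor is deterministic given $\bm{X}_t$; hence $\EE{\bm{\hat\theta}_t \mid \bm{X}_t} = \bm{\theta}$, which is exactly unbiasedness.

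For Property~2, I would compute $\text{Var}(\bm{\hat\theta}_t \mid \bm{X}_t) = \EE{(\bm{\hat\theta}_t - \bm{\theta})(\bm{\hat\theta}_t - \bm{\theta})^\top \mid \bm{X}_t}$ by plugging in the error decomposition and pulling the deterministic factors outside the expectation. The key input is $\EE{\bm{\epsilon}_t\bm{\epsilon}_t^\top \mid \bm{X}_t} = \sigma^2 I$, which holds because the noise components are IID with variance $\sigma^2$ and independent of $\bm{X}_t$. The resulting sandwich $(\bm{X}_t^\top\bm{X}_t)^{-1}\bm{X}_t^\top(\sigma^2 I)\bm{X}_t(\bm{X}_t^\top\bm{X}_t)^{-1}$ then collapses to $\sigma^2(\bm{X}_t^\top\bm{X}_t)^{-1}$. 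Property~3 follows immediately, as the element-wise variance $\text{Var}(\hat\theta_{ti}\mid\bm{X}_t)$ is simply the $ii$-th diagonal entry of the covariance matrix obtained in Property~2.

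There is essentially no serious obstacle, since this is the textbook finite-sample calculation for ordinary least squares; the only point requiring care is the justification of $\EE{\bm{\epsilon}_t\bm{\epsilon}_t^\top \mid \bm{X}_t} = \sigma^2 I$, which relies jointly on the homoscedastic-IID assumption on the noise and its independence from the design matrix. The accompanying ``best linear unbiased estimator'' assertion is the Gauss--Markov theorem, which I would cite as a classical result rather than re-derive.
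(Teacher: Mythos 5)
Your proposal is correct and complete: the error decomposition $\bm{\hat\theta}_t - \bm{\theta} = (\bm{X}_t^\top\bm{X}_t)^{-1}\bm{X}_t^\top\bm{\epsilon}_t$, the conditional expectation using $\EE{\bm{\epsilon}_t \mid \bm{X}_t} = 0$, and the covariance sandwich collapsing via $\EE{\bm{\epsilon}_t\bm{\epsilon}_t^\top \mid \bm{X}_t} = \sigma^2 I$ are precisely the standard finite-sample OLS argument. The paper itself offers no proof of this fact---it cites Proposition 1.1 of Hayashi (2000) for the first two properties and a separate reference for the element-wise variance---so your derivation is exactly the textbook argument underlying those citations, with the one point of care (justifying $\EE{\bm{\epsilon}_t\bm{\epsilon}_t^\top \mid \bm{X}_t} = \sigma^2 I$ from the IID, homoscedastic, design-independent noise assumption) correctly identified and handled.
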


In the above result, the first two properties are from Proposition 1.1 of \citet{Book_hayashi2000econometrics}, whereas the third property is from \citet{ESBS05_van2005estimation}. 
The following result gives the finite sample properties of the estimator of noise variance $\sigma^2$.
\begin{fact}{\cite[Proposition 1.2]{Book_hayashi2000econometrics}}
	\label{fact:varEst}
	Let $\hat\sigma^2_t = \frac{1}{t-q} \sum_{s=1}^t(z_s -\bm{x}_s^\top\bm{\hat\theta}_t)^2$ be estimator of $\sigma^2$ and $t>q$ (so that $\hat\sigma^2_t$ is well defined). 
	Then, $\hat\sigma^2_t$ is an unbiased estimator of $\sigma^2$, i.e., $\EE{\hat\sigma^2_t | \bm{X}_t} = \sigma^2$. 
\end{fact}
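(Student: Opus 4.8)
The plan is to follow the standard residual-maker argument for the ordinary least squares variance estimator. First I would collect the residuals into the vector $\hat{\bm{\epsilon}}_t = \bm{Z}_t - \bm{X}_t\bm{\hat\theta}_t$, so that the sum of squared residuals is $\sum_{s=1}^t(z_s - \bm{x}_s^\top\bm{\hat\theta}_t)^2 = \hat{\bm{\epsilon}}_t^\top\hat{\bm{\epsilon}}_t$. Substituting the closed form $\bm{\hat\theta}_t = (\bm{X}_t^\top\bm{X}_t)^{-1}\bm{X}_t^\top\bm{Z}_t$ together with the model $\bm{Z}_t = \bm{X}_t\bm{\theta} + \bm{\epsilon}_t$, the leading $\bm{X}_t\bm{\theta}$ term cancels and the residual vector reduces to $\hat{\bm{\epsilon}}_t = \bm{M}_t\bm{\epsilon}_t$, where $\bm{M}_t \doteq \bm{I}_t - \bm{X}_t(\bm{X}_t^\top\bm{X}_t)^{-1}\bm{X}_t^\top$ is the annihilator (residual-maker) matrix.

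The key structural facts are that $\bm{M}_t$ is symmetric and idempotent ($\bm{M}_t^\top = \bm{M}_t$ and $\bm{M}_t^2 = \bm{M}_t$), which I would verify by direct multiplication. Consequently the sum of squared residuals becomes a quadratic form in the noise alone, $\hat{\bm{\epsilon}}_t^\top\hat{\bm{\epsilon}}_t = \bm{\epsilon}_t^\top\bm{M}_t^\top\bm{M}_t\bm{\epsilon}_t = \bm{\epsilon}_t^\top\bm{M}_t\bm{\epsilon}_t$, which no longer involves the unknown $\bm{\theta}$.

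Next I would take the conditional expectation and apply the trace trick. Since the entries of $\bm{\epsilon}_t$ are IID with variance $\sigma^2$ and independent of $\bm{X}_t$, we have $\EE{\bm{\epsilon}_t\bm{\epsilon}_t^\top \mid \bm{X}_t} = \sigma^2\bm{I}_t$, so using the cyclic property of the trace,
\[
\EE{\bm{\epsilon}_t^\top\bm{M}_t\bm{\epsilon}_t \mid \bm{X}_t} = \EE{\text{tr}(\bm{M}_t\bm{\epsilon}_t\bm{\epsilon}_t^\top) \mid \bm{X}_t} = \text{tr}\!\left(\bm{M}_t\,\EE{\bm{\epsilon}_t\bm{\epsilon}_t^\top \mid \bm{X}_t}\right) = \sigma^2\,\text{tr}(\bm{M}_t).
\]
Again by the cyclic property, $\text{tr}(\bm{M}_t) = \text{tr}(\bm{I}_t) - \text{tr}\!\left(\bm{X}_t(\bm{X}_t^\top\bm{X}_t)^{-1}\bm{X}_t^\top\right) = t - \text{tr}\!\left((\bm{X}_t^\top\bm{X}_t)^{-1}\bm{X}_t^\top\bm{X}_t\right) = t - \text{tr}(\bm{I}_q) = t - q$. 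Dividing by $t-q$ (well-defined since $t>q$) yields $\EE{\hat\sigma^2_t \mid \bm{X}_t} = \sigma^2$.

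As for obstacles, this is a classical result, so there is no genuine difficulty; the only points requiring care are the algebraic cancellation that produces $\hat{\bm{\epsilon}}_t = \bm{M}_t\bm{\epsilon}_t$ (relying on $\bm{M}_t\bm{X}_t = \bm{0}$) and the justification for pulling the expectation inside the trace, which rests on the independence of the noise from the design matrix $\bm{X}_t$ — hence the need to condition on $\bm{X}_t$ throughout. Everything else is routine linear algebra.
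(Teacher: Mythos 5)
Your proof is correct and is essentially the same argument as the one the paper relies on: the paper states this fact by citing Proposition~1.2 of Hayashi's \emph{Econometrics}, whose proof is precisely the annihilator-matrix computation you give (residuals $= \bm{M}_t\bm{\epsilon}_t$, symmetry and idempotency of $\bm{M}_t$, the trace trick with $\EE{\bm{\epsilon}_t\bm{\epsilon}_t^\top \mid \bm{X}_t} = \sigma^2 \bm{I}_t$, and $\text{tr}(\bm{M}_t) = t-q$). You also correctly identify the two points of care, namely $\bm{M}_t\bm{X}_t = \bm{0}$ and conditioning on $\bm{X}_t$ to justify pulling the expectation inside the trace, so nothing is missing.
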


Using the Schur complement, we have the following results about the inverse of the block matrix.
\begin{fact}
	\label{fact:Matrix}	
	Let $G = \begin{pmatrix} t ~~ B\\ C ~~ D\end{pmatrix}$ be a block matrix, where $t \in \R\setminus\{0\}$, $B$, $C$, $D$ are respectively $1 \times q$, $q \times 1$, and $q \times q$ matrices  of real numbers. Then, $G^{-1}_{11} = t^{-1} + t^{-1}B ({tD-CB})^{-1}C$.
\end{fact}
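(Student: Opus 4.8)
The plan is to recognize this as an instance of the standard block-matrix inversion identity and to derive it via the Schur complement taken with respect to the pivot block $t$. The crucial modeling choice is to pivot on the scalar $t$ rather than on $D$: since $t \in \R\setminus\{0\}$ the $(1,1)$ block is automatically invertible, whereas $D$ is \emph{not} assumed invertible, so any argument that would need $D^{-1}$ (for instance reading off equations from $G G^{-1}=I$ and solving the $(2,1)$ block) must be avoided.

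First I would record the classical partitioned-inverse formula. For $\begin{pmatrix} A & B \\ C & D\end{pmatrix}$ with $A$ invertible and Schur complement $S \doteq D - C A^{-1} B$ also invertible, the inverse equals
$$
\begin{pmatrix} A^{-1} + A^{-1}B S^{-1} C A^{-1} & -A^{-1}B S^{-1} \\ -S^{-1}C A^{-1} & S^{-1}\end{pmatrix}.
$$
Reading off the top-left block and substituting the scalar pivot $A = t$, $A^{-1} = t^{-1}$, gives $G^{-1}_{11} = t^{-1} + t^{-2}\,B S^{-1} C$ with $S = D - t^{-1} C B$. The remaining step is pure scalar bookkeeping: writing $S = t^{-1}(tD - CB)$ yields $S^{-1} = t\,(tD-CB)^{-1}$, so that $t^{-2} B S^{-1} C = t^{-1} B (tD-CB)^{-1} C$, and hence $G^{-1}_{11} = t^{-1} + t^{-1} B (tD - CB)^{-1} C$, which is exactly the claimed identity.

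I would also dispatch the well-definedness of the inverses appearing. Because $\det G = t\,\det S$ and $\det(tD-CB) = t^{q}\det S$, the hypotheses that $G$ is invertible and $t\neq 0$ force $S$, and equivalently $tD-CB$, to be invertible; thus every inverse in the argument exists. The main (and only) thing to watch is to keep the factors of $t^{-1}$ straight when converting between the Schur complement $D - t^{-1}CB$ and its cleared form $tD - CB$ — there is no substantive obstacle beyond this arithmetic, and if desired one can close the proof by the alternative route of directly verifying $G \cdot \widehat{G^{-1}} = I$ for the matrix built from these blocks, which again only invokes $t^{-1}$ and $(tD-CB)^{-1}$.
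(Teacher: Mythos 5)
Your proof is correct and takes exactly the route the paper intends: the paper states this fact with the bare justification ``using the Schur complement,'' and your derivation — pivoting on the scalar block $t$, reading off the $(1,1)$ entry of the partitioned-inverse formula, and clearing the factor $t^{-1}$ from $S = D - t^{-1}CB$ to get $(tD-CB)^{-1} = t^{-1}S^{-1}$ — fills in precisely those details, with the determinant identity $\det(tD-CB)=t^{q}\det S$ correctly settling the existence of all inverses. No gaps; nothing further is needed.
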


\subsection*{Control variates theory}
Let $y$ be the random variable of interest with unknown mean $\mu$. There are $q$ control variates correlated with $y$, where $i^{\text{th}}$ control variate has mean $\omega_i$ and its $s^{\text{th}}$ observation is denoted by $w_{s,i}$. For any $s \in \{1, \ldots, t\}$, we define a variable $z_s$ using $s^{\text{th}}$ observation of $y_s$ and its control variates as follows:
$$
	z_s= y_s - (\bm{w}_s - \bm{\omega})\bm{\beta},
$$
where $\bm{w}_s = (w_{s,1}, \ldots, w_{s,q})$ and $\bm{\omega} = (\omega_1, \ldots, \omega_q)$. The above equation can be re-written as:
$$
	y_s= z_s + (\bm{w}_s - \bm{\omega})\bm{\beta}.
$$

Under the assumption of $z_s$ being a unbiased estimator of $\mu$, we can write $y_s$ as follows:
$$
	y_s=  \mu + (\bm{w}_s - \bm{\omega})\bm{\beta} + \epsilon_{z,s}.
$$
where $\epsilon_{z,1}, \ldots, \epsilon_{z,t}$ are IID and have zero mean Gaussian noise with variance $(1-\rho^2)\sigma^2$. Let
$$
\bm{\overline{Y}}_t = \begin{pmatrix}
	y_1 \\
	\vdots \\
	y_t
\end{pmatrix},
~
\bm{\overline{W}}_t = \begin{pmatrix}
	1 &\bm{w}_1 - \bm{\omega}\\
	\vdots &\vdots \\
	1 &\bm{w}_t - \bm{\omega}
\end{pmatrix}, 
~
\bm{\gamma} = \begin{pmatrix}
	\mu \\
	\bm{\beta}
\end{pmatrix},\text{ and}
~
\bm{\epsilon}_{z,t} = \begin{pmatrix}
	\epsilon_{z,1} \\
	\vdots \\
	\epsilon_{z,t}
\end{pmatrix}.
$$

The best linear unbiased estimator of $\bm{\gamma}$ is $\bm{\hat\gamma}= (\bm{\overline{W}}_t^\top\bm{\overline{W}}_t)^{-1}\bm{\overline{W}}_t^\top\bm{\overline{Y}}_t$. To get $\hat\mu_{z,t}$ and $\hat\beta^\star$, we expand $\bm{\hat\gamma}$ as follows:  
\als{
	\bm{\hat\gamma} &= \left( \begin{pmatrix}
			1 &\bm{w}_1 - \bm{\omega}\\
			\vdots &\vdots \\
			1 &\bm{w}_t - \bm{\omega}
		\end{pmatrix}^{\top}  
		\begin{pmatrix}
			1 &\bm{w}_1 - \bm{\omega}\\
			\vdots &\vdots \\
			1 &\bm{w}_t - \bm{\omega}
		\end{pmatrix} \right)^{-1}
		 \begin{pmatrix}
			1 &\bm{w}_1 - \bm{\omega}\\
			\vdots &\vdots \\
			1 &\bm{w}_t - \bm{\omega}
		\end{pmatrix}^{\top} 
		\begin{pmatrix}
			y_1 \\
			\vdots \\
			y_t
		\end{pmatrix}  \\
	& =  \left( \begin{pmatrix}
			1 & \ldots & 1 \\
			\bm{w}_1 - \bm{\omega} & \ldots &\bm{w}_t - \bm{\omega}
		\end{pmatrix}  \begin{pmatrix}
			1 &\bm{w}_1 - \bm{\omega}\\
			\vdots &\vdots \\
			1 &\bm{w}_t - \bm{\omega}
		\end{pmatrix} \right)^{-1}
		\begin{pmatrix}
			1 & \ldots & 1 \\
			\bm{w}_1 - \bm{\omega} &\ldots & \bm{w}_t - \bm{\omega}
		\end{pmatrix}
		\begin{pmatrix}
			y_1 \\
			\vdots \\
			y_t
		\end{pmatrix}  \\
	&=	\begin{pmatrix}
		t & \sum_{s=1}^t (\bm{w}_s - \bm{\omega}) \\
		 \sum_{s=1}^t (\bm{w}_s - \bm{\omega})  & \sum_{s=1}^t (\bm{w}_s - \bm{\omega})^\top (\bm{w}_s - \bm{\omega})
		\end{pmatrix}^{-1}
		\begin{pmatrix}
			\sum_{s=1}^t y_s \\
			\sum_{s=1}^t (\bm{w}_s - \bm{\omega})y_s
		\end{pmatrix}  \\
}

After taking first matrix from RHS to LHS and using $\bm{\hat\gamma} = \begin{pmatrix}
	\hat\mu_{z,t} \\
	\bm{\hat\beta}_t
\end{pmatrix}$, we have
\al{
	\label{eq:betaEstStep}
	\begin{pmatrix}
		t & \sum_{s=1}^t (\bm{w}_s - \bm{\omega}) \\
		\sum_{s=1}^t (\bm{w}_s - \bm{\omega})  & \sum_{s=1}^t (\bm{w}_s - \bm{\omega})^\top (\bm{w}_s - \bm{\omega})
	\end{pmatrix}
	\begin{pmatrix}
	\hat\mu_{z,t} \\
	\bm{\hat\beta}_t
	\end{pmatrix} &=\begin{pmatrix}
			\sum_{s=1}^t y_s \\
			\sum_{s=1}^t (\bm{w}_s - \bm{\omega})y_s 
		\end{pmatrix}.
}

From above, we get the following equation:
\als{
	&t\hat\mu_{z,t}  + \left(\sum_{s=1}^t (\bm{w}_s - \bm{\omega})\right)\bm{\hat\beta}_t  = \sum_{s=1}^t y_s \\
	\implies & \hat\mu_{z,t}  = \frac{1}{t}\sum_{s=1}^t y_s -  \left(\frac{1}{t}\sum_{s=1}^t (\bm{w}_s - \bm{\omega})\right)\bm{\hat\beta}_t.
}
Using $\hat\mu_{y,t} =  \frac{1}{t}\sum_{s=1}^t y_s$ and $\bm{\hat\omega}_{t} =  \frac{1}{t}\sum_{s=1}^t \bm{w}_s$, we get
\eq{
	\label{eq:meanEst}
 	\hat\mu_{z,t} = \hat\mu_{y,t} -  (\bm{\hat\omega}_t - \bm{\omega})\bm{\hat\beta}_t.
}

Similarly, we have another equation as follows:
\als{
	&\hat\mu_{z,t}\sum_{s=1}^t (\bm{w}_s -\bm{\omega})  +  \left(\sum_{s=1}^t (\bm{w}_s - \bm{\omega})^\top (\bm{w}_s - \bm{\omega})) \right) \bm{\hat\beta}_t = \sum_{s=1}^t (\bm{w}_s - \bm{\omega})y_s \\
	\implies&\bm{\hat\beta}_t =  \left(\sum_{s=1}^t (\bm{w}_s - \bm{\omega})^\top (\bm{w}_s - \bm{\omega})) \right)^{-1} \left( \sum_{s=1}^t (\bm{w}_s - \bm{\omega}) (y_s - \hat\mu_{z,t})\right).
}

Using $\bm{{W}}_t = \begin{pmatrix}
	\bm{w}_1 - \bm{\omega}\\
	\vdots \\
	\bm{w}_t - \bm{\omega}
\end{pmatrix}$ and $\bm{Y}_t = \begin{pmatrix}
	y_1 -  \hat\mu_{z,t} \\
	\vdots \\
	y_t -  \hat\mu_{z,t}.
\end{pmatrix}$, we have
\eq{
	\label{eq:beta}
	\implies \bm{\hat\beta}_t = (\bm{W}_t^\top \bm{W}_t)^{-1} \bm{W}_t^\top \bm{Y}_t.
}

In the following, we first state the fundamental results from the control variates theory.
\begin{fact}{\citep[Theorem 1]{OR90_nelson1990control}}
	\label{fact:normalEstProp}
	Let $O_s = (Y_s, W_{s,1}, \ldots, W_{s,q})^\top$ follow a $(q+1)-$variate normal distribution with mean vector $(\mu, \bm{\omega})$ and $\{O_1, \ldots, O_t\}$ be a IID sequence. Assume $\hat\mu_{z,t} = \sum_{s=1}^t z_s$, where  $z_s= y_s - (\bm{w}_s - \bm{\omega})\bm{\hat\beta}_t$ and $\bm{\hat\beta}_t$ used here is given by \cref{eq:beta}, then
	\als
	{
		&\EE{\hat\mu_{z,t}} = \mu  \text{ and} \\
		& \Var{\hat\mu_{z,t}} = \left(1 + \frac{q}{t-q-2} \right)(1-{\rho^2}) \Var{\hat\mu_{y,t}},
	}
	where $\sigma_{Y\bm{W}}\Sigma_{\bm{W}\bm{W}}^{-1}\sigma_{Y\bm{W}}^\top/\sigma^2$ is the square of the multiple correlation coefficient, $\sigma^2 = \Var{Y}$, and $\sigma_{Y\bm{W}} = (\text{Cov}(Y,W_{1}), \ldots, \text{Cov}(Y,W_{q}))$ (we have dropped the subscript $s$ as observations are IID). 
	
\end{fact}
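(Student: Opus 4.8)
The plan is to lift the argument behind \cref{thm:instRegret} from the concrete OFUL-AF algorithm to an arbitrary AFC algorithm $\kA$, using only the two defining AFC properties as structural input and substituting the hybrid-reward variance supplied by \cref{thm:estProp} and \cref{thm:estApproxProp} for the noise variance $\sigma^2$. The whole proof rests on the observation that the noise level enters the AFC confidence bound $|f_t^{\kA}(x) - f(x)| \le \sigma h(x,\cO_t) + l(x,\cO_t)$ only through the factor $\sigma$ multiplying $h$, so replacing observed rewards by hybrid rewards amounts to replacing $\sigma$ by the smaller standard deviation of the hybrid reward.

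First I would establish the confidence bound for $\kA$-AF. By \cref{thm:estProp} (case KF) and \cref{thm:estApproxProp} (cases IS and MF), every hybrid reward $z_{e,s,q}$ fed into $\kA$ is an unbiased estimator of $f(x_s)$ whose variance is at most $\sigma^2$ on the event $\bar\nu_{e,z,t} < \sigma^2$. Since $\kA$ is AFC, property (i) lets it accept these correlated hybrid samples and property (ii) then yields, with probability $1-\delta$ and uniformly over $x \in \cX$, a bound of the same form with $\sigma$ replaced by the effective hybrid-reward standard deviation; because the true variance is unknown we use the empirical surrogate and take $\min(\sigma,(\bar\nu_{e,z,t})^{1/2})$, which also guarantees $\kA$-AF is never worse than $\kA$ (when $t \le q+2$ or $\bar\nu_{e,z,t}\ge\sigma^2$ the algorithm sets $\bm{\hat\beta}_{e,t}=0$ and the $\min$ equals $\sigma$). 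Validity of $\bar\nu_{e,z,t}$ as an upper bound on the true variance is a second high-probability event, coming from the chi-squared tail in its definition and holding with probability $1-\delta$; a union bound over these two events produces the $1-2\delta$ in the statement.

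On the intersection of the two good events I would run the standard optimism decomposition. Writing $c_t(x) \doteq \min(\sigma,(\bar\nu_{e,z,t})^{1/2})h(x,\cO_t) + l(x,\cO_t)$, the UCB selection rule gives $f_t^{\kA\textnormal{-AF}}(x_t) + c_t(x_t) \ge f_t^{\kA\textnormal{-AF}}(x^\star) + c_t(x^\star)$, and combining this with the uniform confidence bound at both $x^\star$ and $x_t$ yields the first claim $r_t = f(x^\star) - f(x_t) \le 2\min(\sigma,(\bar\nu_{e,z,t})^{1/2})h(x_t,\cO_t) + l(x_t,\cO_t)$. For the expected bound I specialize to $t>q+2$ with $\bar\nu_{e,z,t}<\sigma^2$, where \cref{thm:estProp}/\cref{thm:estApproxProp} with $t$ replaced by $t-1$ give a hybrid-reward variance equal to $\left(1+\frac{a(e)q}{t-q-3}\right)(1-\rho_e^2)\sigma^2 = \frac{t-(1-a(e))q-3}{t-q-3}(1-\rho_e^2)\sigma^2$. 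Taking expectations in the high-probability bound, applying Jensen's inequality $\EE{(\bar\nu_{e,z,t-1})^{1/2}} \le (\EE{\bar\nu_{e,z,t-1}})^{1/2}$, and using that $\bar\nu_{e,z,t-1}$ is, up to the deterministic chi-squared percentile factor absorbed into $\widetilde{O}$, an expectation-surrogate for this variance, I reach $\EE{r_t(\kA\textnormal{-AF})} \le \widetilde{O}\left(\left(\frac{t-(1-a(e))q-3}{t-q-3}(1-\rho_e^2)\right)^{1/2}\right)\sigma h(x_t,\cO_t) + l(x_t,\cO_t)$; comparing against $r_t(\kA) \le 2\sigma h(x_t,\cO_t)+l(x_t,\cO_t)$ and folding the lower-order term $l$ and all constants into $\widetilde{O}$ gives the stated ratio (and $a(\text{KF})=1$ recovers \cref{thm:instRegret}).

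The main obstacle I anticipate is the first step: justifying that substituting $\sigma$ by the hybrid-reward standard deviation preserves the AFC confidence bound even though the hybrid rewards are correlated through the shared estimate $\bm{\hat\beta}_{e,t}$. This is precisely what AFC property (i) is designed to permit, and it is why algorithms whose confidence constructions require independent samples (such as UCB1 or kl-UCB) are excluded; the correlation is already accounted for inside the variance formulas of \cref{thm:estProp} and \cref{thm:estApproxProp}, so the remaining care lies only in tracking the two high-probability events cleanly and in controlling $\EE{\bar\nu_{e,z,t-1}}$ against the true variance.
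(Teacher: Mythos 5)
Your proposal does not prove the statement at hand. The statement is \cref{fact:normalEstProp}, the classical control-variates result (Theorem 1 of Nelson, 1990): for IID $(q+1)$-variate \emph{normal} observations, the estimator $\hat\mu_{z,t}$ built with the \emph{estimated} coefficient vector $\bm{\hat\beta}_t$ from \cref{eq:beta} remains unbiased, and its variance is exactly $\left(1+\frac{q}{t-q-2}\right)(1-\rho^2)\Var{\hat\mu_{y,t}}$. What you have written instead is a proof sketch of \cref{thm:instRegretGen} --- the instantaneous-regret bound for a generic AFC bandit algorithm --- complete with the UCB optimism decomposition, the union bound yielding $1-2\delta$, and the substitution of $\min(\sigma,(\bar\nu_{e,z,t})^{1/2})$ into the confidence width. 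None of that bears on the Fact: nowhere do you establish $\EE{\hat\mu_{z,t}}=\mu$, and nowhere do you derive the inflation factor $\frac{t-2}{t-q-2}$ that quantifies the price of estimating $\bm{\beta}^\star$.

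A proof of the Fact must confront exactly the difficulty your sketch takes as given: $\bm{\hat\beta}_t$ is computed from the same sample as $\hat\mu_{y,t}$, so $\hat\mu_{z,t}=\hat\mu_{y,t}-(\bm{\hat\omega}_t-\bm{\omega})\bm{\hat\beta}_t$ is a nonlinear function of the data and neither unbiasedness nor the variance formula is automatic. The standard route conditions on the control-variate observations: under joint normality, $y_s$ given $\bm{w}_s$ follows the linear model $y_s=\mu+(\bm{w}_s-\bm{\omega})\bm{\beta}^\star+\epsilon_{z,s}$ with noise variance $(1-\rho^2)\sigma^2$, so \cref{fact:estmProp} gives conditional unbiasedness, $\EE{\hat\mu_{z,t}\mid\bm{\overline{W}}_t}=\mu$ (hence unconditional unbiasedness by the tower rule), and conditional variance $(1-\rho^2)\sigma^2\,(\bm{\overline{W}}_t^\top\bm{\overline{W}}_t)^{-1}_{11}$; the factor $\left(1+\frac{q}{t-q-2}\right)\frac{1}{t}$ then emerges from computing $\EE{(\bm{\overline{W}}_t^\top\bm{\overline{W}}_t)^{-1}_{11}}$, a Wishart-type moment calculation in which the $t-q-2$ degrees of freedom appear --- this is where normality of the $W$'s (not merely of the reward noise) is genuinely used. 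Note also that invoking \cref{thm:estProp} and \cref{thm:estApproxProp}, as your sketch does, would be circular here: in the paper those theorems are themselves proved by adapting \cref{fact:normalEstProp}, so the Fact sits logically upstream of every ingredient in your argument.
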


\subsection*{Auxiliary feedback as control variates}

\estBeta*
\begin{proof}
	Recall \cref{eq:af_multi} for $s^{\text{th}}$ hybrid reward with known auxiliary functions, i.e.,
	$
	z_{s,q} = y_s - (\bm{w}_s - \bm{g}_s)\bm{\beta},
	$	
	which can be re-written as $y_s = z_{s,q} + (\bm{w}_s - \bm{g}_s)\bm{\beta}$. By definition, $z_{s,q} =  f(x_s) + \epsilon_s^z$ for optimal $\beta$, where $\epsilon_{z,s}$ is zero-mean Gaussian noise with variance $(1-\rho^2)\sigma^2$. Then, 
	$
		y_s = f(x_s) + (\bm{w}_s - \bm{g}_s)\bm{\beta} + \epsilon_{z,s}.
	$
	Let $\phi$ be an unknown function that maps every $x$ to a space where $f(x) = \phi(x)^\top f$ holds. Then we can re-write the above equation as follows:
	$$
		y_s = f^\top \phi(x_s) + (\bm{w}_s - \bm{g}_s)\bm{\beta} + \epsilon_{z,s}.
	$$
	
	Adapting \cref{eq:betaEstStep} to our setting, we have
	\als{
		\begin{pmatrix}
			\sum_{s=1}^t  \phi(x_s)^\top \phi(x_s)  & \sum_{s=1}^t  \phi(x_s)^\top(\bm{w}_s - \bm{g}_s) \\
			\sum_{s=1}^t (\bm{w}_s - \bm{g}_s)^\top  \phi(x_s)  & \sum_{s=1}^t (\bm{w}_s - \bm{g}_s)^\top (\bm{w}_s - \bm{g}_s)
		\end{pmatrix}
		\begin{pmatrix}
			f_{t} \\
			\bm{\hat\beta}_t
		\end{pmatrix} &=\begin{pmatrix}
			\sum_{s=1}^t \phi(x_s)y_s \\
			\sum_{s=1}^t (\bm{w}_s - \bm{g}_s)y_s 
		\end{pmatrix}.
	}
	Let $f_t$ is the estimated $f$ using available information, i.e., $\{x_s, y_s, \bm{w}_s\}_{s=1}^t$. 
	To get best linear unbiased estimator for $\bm{\beta}^\star$, we use the following equation from above matrix,
	\als{
		&\left(\sum_{s=1}^t (\bm{w}_s - \bm{g}_s)^\top  \phi(x_s)\right) f_{t}  + \left(\sum_{s=1}^t (\bm{w}_s - \bm{g}_s)^\top (\bm{w}_s - \bm{g}_s)\right)\bm{\hat\beta}_t = \sum_{s=1}^t (\bm{w}_s - \bm{g}_s)y_s \\
		\implies &\left(\sum_{s=1}^t (\bm{w}_s - \bm{g}_s)^\top (\bm{w}_s - \bm{g}_s)\right) \bm{\hat\beta}_t =  \sum_{s=1}^t (\bm{w}_s - \bm{g}_s)y_s - \sum_{s=1}^t (\bm{w}_s - \bm{g}_s)^\top  \left(\phi(x_s)^\top f_{t}\right) \\
		\implies &\bm{\hat\beta}_t =  \left(\sum_{s=1}^t (\bm{w}_s - \bm{g}_s)^\top (\bm{w}_s - \bm{g}_s)\right) ^{-1} \sum_{s=1}^t (\bm{w}_s - \bm{g}_s) \left(y_s - \phi(x_s)^\top f_{t}\right) \\
		\implies &\bm{\hat\beta}_t =  \left(\sum_{s=1}^t (\bm{w}_s - \bm{g}_s)^\top (\bm{w}_s - \bm{g}_s)\right) ^{-1} \sum_{s=1}^t (\bm{w}_s - \bm{g}_s) \left(y_s -f_t(x_s)\right) 
	}
	Using definition $f_t(x_s) = \phi(x_s)^\top f_{t}$, $\bm{W}_t = \begin{pmatrix}
		\bm{w}_1 - \bm{g}_s\\
		\vdots \\
		\bm{w}_t - \bm{g}_s
	\end{pmatrix}$, and $\bm{Y}_t = \begin{pmatrix}
		y_1 - f_t(x_1)  \\
		\vdots \\
		y_t - f_t(x_t) .
	\end{pmatrix}$, we get
	\eqs{
		\implies \bm{\hat\beta}_t = (\bm{W}_t^\top \bm{W}_t)^{-1} \bm{W}_t^\top \bm{Y}_t. \qedhere
	}
\end{proof}

Since the reward and its auxiliary feedback observations are functions of the selected action, we can not directly use the control variate theory due to parameterized mean values of the reward and its auxiliary feedback. To overcome this challenge, we centered the observations by its function value and defined new centered variables as follows:
$$
	y_s^c = y_s - f(x_s), ~~~ \bm{w}_s^c = \bm{w}_s- \bm{g}_s, \text{ and } ~~~z_{s,q}^c = z_{s,q} - f(x_s).
$$
In our setting, these centered variables ($y_s^c,  \bm{w}_s^c, \text{ and } z_s^c, $) follow zero mean Gaussian distributions with variance $\sigma^2$, $\bm{\sigma}^2_w = (\sigma^2_{w,1}, \ldots, \sigma^2_{w,q})$, and $(1-\rho^2)\sigma^2$, respectively.

\estProp*
\begin{proof}
	The sequence $(y_s^c, \bm{w}_s^c)_{s=1}^t$ is an IID sequence and follows a Gaussian distribution with mean $0$.  We now define $z_{s,q}^c = y_s^c - \bm{w}_s^c\bm{\beta} = y_s^c - (\bm{w}_s- \bm{g}_s)\bm{\beta}$, which can be re-written as $y_{s,q}^c = z_s^c + (\bm{w}_s- \bm{g}_s) \bm{\beta}$. Let $f_t$ be the estimated $f$ using available information, i.e., $\{x_s, y_s, \bm{w}_s\}_{s=1}^t$ and hence we can write estimated $z_{s,q}$ as $\hat{z}_{s,q} = f_t(x_s)$ and hence $\hat{z}_{s,q}^c = f_t(x_s) - f(x_s)$. Now, adapting \cref{eq:beta} to our setting and replacing estimated mean in $\bm{Y}_t$ by $\hat{z}_{s,q}^c$, $s^{\text{th}}$ value of $\bm{Y}_t$ is $y_s^c - \hat{z}_{s,q}^c = y_s - f(x_s)  -  (f_t(x_s) - f(x_s)) = y_s - f_t(x_s)$. With these manipulations, we get the following best linear unbiased estimator for $\bm{\beta}^\star$:
	$$
		\bm{\hat\beta}_t = (\bm{W}_t^\top \bm{W}_t)^{-1} \bm{W}_t^\top \bm{Y}_t,
	$$
	which is the same as defined in \cref{lem:estBeta}. 
	
	By adapting \cref{fact:normalEstProp} for a single sample (i.e., $z_{s,q}$) while using $\bm{\hat\beta}_t$ to define hybrid reward, we have 
	\als
	{
		&\EE{z_{s,q}^c} = 0  \text{ and} \\
		& \Var{z_{s,q}^c} = \left(1 + \frac{q}{t-q-2} \right)(1-{\rho^2}) \Var{y_{s}^c},
	}

	By extending the definition of $\EE{z_{s,q}^c}$ we have,
	$$\EE{z_{s,q} - f(x_s)} =  0 \implies \EE{z_{s,q}} - f(x_s) =  0  \implies \EE{z_{s,q}} = f(x_s)$$
	This proofs the hybrid reward is an unbiased estimator of reward. 
	
	Since variance is invariant to constant change, we have
	\als{
		  \Var{z_{s,q}} &= \Var{z_{s,q} - f(x_s)} \\
		  &=  \Var{z_{s,q}^c} \\
		  &= \left(1 + \frac{q}{t-q-2} \right)(1-{\rho^2}) \Var{y_{s}^c} \\
		  &= \left(1 + \frac{q}{t-q-2} \right)(1-{\rho^2}) \Var{y_{s} - f(x_s)}\\
		  &=\left(1 + \frac{q}{t-q-2} \right)(1-{\rho^2}) \Var{y_{s}}.
	}
	
	Since $\Var{y_{s}} = \sigma^2$, we have
	$
		\Var{z_{s,q}}  = \left(1 + \frac{q}{t-q-2} \right)(1-{\rho^2})\sigma^2. 
	$	
\end{proof}

\estApproxBeta*
\begin{proof}
	Recall the $s^{\text{th}}$ hybrid reward defined in \cref{eq:eaf_multi} using sampling strategy $e$ as
	$
		z_{e,s,q}^e = y_s - (\bm{w}_s - \bm{g}_{e,s})\bm{\beta}_{e},
	$
	which can be re-written as 	
	$
		y_s = z_{e,s,q}^e + (\bm{w}_s - \bm{g}_{e,s})\bm{\beta}_{e}. 
	$
	Following similar steps as of \cref{lem:estBeta}, we can re-write the above equation as
	$
		y_s = f^\top \phi(x_s) + (\bm{w}_s - \bm{g}_{e,s})\bm{\beta}_{e} + \epsilon_{z,s}.
	$
	
	Using $\bm{W}_{e,t} = \begin{pmatrix}
		\bm{w}_1 - \bm{g}_{e,s}\\
		\vdots \\
		\bm{w}_t - \bm{g}_{e,s}
	\end{pmatrix}$, and $\bm{Y}_t = \begin{pmatrix}
		y_1 - f_t(x_1)  \\
		\vdots \\
		y_t - f_t(x_t) .
	\end{pmatrix}$, we get
	$
		\bm{\hat\beta}_{e,t} = (\bm{W}_{e,t}^\top \bm{W}_{e,t})^{-1} \bm{W}_{e,t}^\top \bm{Y}_t. 
	$
	From Appendix D and E of \cite{JCP20_gorodetsky2020generalized}, we have $
		\bm{W}_{e,t}^\top \bm{W}_{e,t} = \bm{W}_{t}^\top \bm{W}_{t}  \circ \bm{F}_e$ and $ \bm{W}_{e,t}^\top \bm{Y}_t =  \text{diag}\left(\bm{F}_e\right) \circ \bm{W}_{t}^\top \bm{Y}_t. 
	$
	Using these two equality, we have 
	\eqs{ 
		\bm{\hat\beta}_{e,t} = (\bm{W}_{t}^\top \bm{W}_{t}  \circ \bm{F}_e)^{-1} ( \text{diag}\left(\bm{F}_e\right) \circ \bm{W}_{t}^\top \bm{Y}_t). \qedhere
	}
\end{proof}

\estApproxProp*
\begin{proof}
	The proof follows the similar steps as \cref{thm:estProp} except we adapt the part (b.) of Theorem 4 from \citet{JUQ22_pham2022ensemble} instead of using \cref{fact:normalEstProp} to show the variance reduction when sampling strategy (IS or MF) is used for estimating auxiliary feedback functions.
\end{proof}

\subsection{Unbiased estimate of variance}
\label{ssec:var_est_hybrid_reward}

Consider the following regression problem with target variable $y_s$, which is defined as follows:
$$
y_s=  \mu + (\bm{w}_s - \bm{\omega})\bm{\beta} + \epsilon_{z,s}.
$$
where $\epsilon_{z,1}, \ldots, \epsilon_{z,t}$ are IID and have zero mean Gaussian noise with variance $(1-\rho^2)\sigma^2$. Let
$$
\bm{\overline{Y}}_t = \begin{pmatrix}
	y_1 \\
	\vdots \\
	y_t
\end{pmatrix},
~
\bm{\overline{W}}_t = \begin{pmatrix}
	1 &\bm{w}_1 - \bm{\omega}\\
	\vdots &\vdots \\
	1 &\bm{w}_t - \bm{\omega}
\end{pmatrix}, 
~
\bm{\gamma} = \begin{pmatrix}
	\mu \\
	\bm{\beta}
\end{pmatrix},\text{ and}
~
\bm{\epsilon}_{z,t} = \begin{pmatrix}
	\epsilon_{z,1} \\
	\vdots \\
	\epsilon_{z,t}
\end{pmatrix}.
$$

Now, using \cref{fact:estmProp}, we have
$
\Var{\hat\mu_{z,t}} = \sigma^2 (\bm{\overline{W}}_t^\top\bm{\overline{W}}_t)^{-1}_{11},
$
where $(\bm{Y}^\top\bm{Y})_{11}^{-1}$ is the upper left most element of matrix $(\bm{Y}^\top\bm{Y})^{-1}$ \citep{OR82_schmeiser1982batch}.  Then after $t$ observations, the unbiased variance estimator of $\Var{\hat\mu_{z,t}}$ is given by
$$
\hat\nu_{z,t} =  \hat\sigma^2_{z,t} (\bm{\overline{W}}_t^\top\bm{\overline{W}}_t)^{-1}_{11},
$$
where $ \hat\sigma^2_{z,t}= \frac{1}{t-q-1}\sum_{s=1}^{t} (y_s - \hat\mu_{z,t})^2$ \citep{OR90_nelson1990control}, which is also an unbiased variance estimator of $\sigma^2$ (from \cref{fact:varEst}). Further, $\hat\nu_{z,t} $ is also an unbiased estimator of $\Var{\hat\mu_{z,t}}$, i.e., $\EE{\hat\nu_{z,t}} = \Var{\hat\mu_{z,t}}$ \citep[Theorem 1]{OR90_nelson1990control}. We can adapt this approach to our setting. However when noise variance ($\sigma$) is unknown, computing $(\bm{\overline{W}}_t^\top\bm{\overline{W}}_t)^{-1}_{11}$ may not be possible to general function $f$ as $\phi$ function may not be known. Though the setting in which $(\bm{\overline{W}}_t^\top\bm{\overline{W}}_t)^{-1}_{11}$ can be computed, we have to use the upper bound of variance to construct the confidence bound for reward function $f$ as random sample variance estimate can be small and leads to invalid confidence bounds. Given $t$ observations, the upper bound of the sample variance is given by $\bar\nu_{z,t} = \frac{(t-2)\hat\nu_{z,t}}{\chi^2_{1-\delta,t}}$, where $\chi^2_{1-\delta, t}$ denotes $100(1-\delta)^{\text{th}}$ percentile value of the chi-squared distribution with $t-2$ degrees of freedom.
	
		\subsection{Missing proofs related to regret analysis}
		\label{ssec:regret_analysis}

\instRegret*
\begin{proof}
	When only observed rewards are used for estimating underlying unknown parameters in the linear bandit setting, i.e., $\hat\theta_t = \overline{V}_t^{-1} \sum_{s=1}^t x_{s}y_{s}$, then with probability $1-\delta$, the confidence bound \cite[Theorem 1]{NIPS11_abbasi2011improved} is 
	\eq{
		\label{eq:oful_theta}
		\norm{\hat\theta_t - \theta^\star}_{\bar{V}_t} \le \sigma\sqrt{d\log\left( \frac{1+tL^2/\lambda}{\delta}\right)} + \lambda^{1/2}S,
	}
	where $\sigma^2$ is the variance of observed rewards given action (since the noise variance is $\sigma^2$). To ensure the performance of \ref{alg:OFUL-AF} is as good as OFUL, we only used hybrid reward samples for estimation when the upper bound on the variance of hybrid rewards is smaller than the variance of rewards, i.e., $\bar\nu_{z,t-1} < \sigma^2$.
	At the beginning of round $t$, the variance upper bound of hybrid rewards is computed using $t-1$ observations and given by $\bar\nu_{z,t-1} = \frac{(t-2)\hat\nu_{z,t-1}}{\chi^2_{1-\delta,t}}$, where $\hat\nu_{z,t-1}$ is an unbiased sample variance estimate of hybrid rewards using $t-1$ observations and $\chi^2_{1-\delta, t}$ (implying the variance upper bound holds with at least probability of $1-\delta$) denotes $100(1-\delta)^{\text{th}}$ percentile value of the chi-squared distribution with $t-2$ degrees of freedom.
	When $\bar\nu_{z,t} < \sigma^2$, we replace rewards $\{y_{s}\}_{s=1}^t$ with its respective hybrid rewards, i.e., $\{z_{s,q}\}_{s=1}^t$ to estimate underlying parameter and use it for next round. 
	After using hybrid rewards to estimate the unknown parameter, we replace $\sigma^2$ in \cref{eq:oful_theta} with the variance upper bound of hybrid rewards. Then we get the following upper bound which holds with a probability of $1-2\delta$.
	\als{
		\norm{\hat\theta_t - \theta^\star}_{\bar{V}_t} & \le  \sqrt{\min\left(\sigma^2, \bar\nu_{z,t-1}\right)}\sqrt{d\log\left( \frac{1+tL^2/\lambda}{\delta}\right)} + \lambda^{1/2}S \\
		& = \sqrt{\min\left(\sigma^2, \bar\nu_{z,t-1}\right)}\alpha_t  + \lambda^{1/2}S \\
		\implies \norm{\hat\theta_t - \theta^\star}_{\bar{V}_t} &= \alpha_t^\sigma  + \lambda^{1/2}S,
	}
	where $\alpha_t^\sigma = \sqrt{\min\left(\sigma^2, \bar\nu_{z,t-1}\right)}\alpha_t $ and $\alpha_t = \sqrt{d\log\left( \frac{1+tL^2/\lambda}{\delta}\right)}$.
	
	Let action $x_t$ be selected in the round $t$. Then, the instantaneous regret is given as follows:
	\als{
		r_t &= \max_{x \in \cX} x^\top \theta^\star - x_t^\top \theta^\star \\
		&= {x^\star}^\top \theta^\star - x_t^\top \theta^\star \hspace{2cm} (\text{as } x^\star = \max_{x \in \cX} x^\top \theta^\star) \\
		&= (x^\star - x_t)^\top \theta^\star \\
		&= (x^\star - x_t)^\top \theta^\star + (x^\star - x_t)^\top \hat\theta_t - (x^\star - x_t)^\top \hat\theta_t \\
		&= (x^\star - x_t)^\top \hat\theta_t - (x^\star - x_t)^\top (\hat\theta_t - \theta^\star). \\
		\intertext{A sub-optimal action is only selected when its upper confidence bound is larger than the optimal action. Then, if $\norm{\hat\theta_t - \theta^\star}_{\overline{V}_t} = \alpha_t^\sigma + \lambda^{1/2}S$, then we have}
		r_t&\le \alpha_t\norm{x_t}_{\overline{V}_{t}^{-1}} - \alpha_t\norm{x^\star}_{\overline{V}_{t}^{-1}} - (x^\star - x_t)^\top (\hat\theta_t - \theta^\star) \\
		&\le \alpha_t\norm{x_t}_{\overline{V}_{t}^{-1}} - \alpha_t\norm{x^\star}_{\overline{V}_{t}^{-1}} - \norm{x^\star - x_t}_{\overline{V}_{t}^{-1}} \norm{\hat\theta_t - \theta^\star}_{\overline{V}_{t}} \\
		&\le \alpha_t\norm{x_t}_{\overline{V}_{t}^{-1}} - \alpha_t\norm{x^\star}_{\overline{V}_{t}^{-1}} + \alpha_t\norm{x^\star - x_t}_{\overline{V}_{t}^{-1}} \\
		&= \alpha_t(\norm{x_t}_{\overline{V}_{t}^{-1}} - \norm{x^\star}_{\overline{V}_{t}^{-1}} + \norm{x^\star - x_t}_{\overline{V}_{t}^{-1}}) \\
		&\le \alpha_t(\norm{x_t}_{\overline{V}_{t}^{-1}} - \norm{x^\star}_{\overline{V}_{t}^{-1}} + \norm{X_{t,a_t^\star}}_{\overline{V}_{t}^{-1}} + \norm{x_t}_{\overline{V}_{t}^{-1}}) \\
		&= 2\alpha_t\norm{x_t}_{\overline{V}_{t}^{-1}} \\
		\implies r_t& \le 2(\alpha_t^\sigma + \lambda^{1/2}S)\norm{x_t}_{\overline{V}_{t}^{-1}}. 
	}
	
	Let $ \bm{X}_t = \{x_s\}_{s=1}^t$. For $t>q+2$ and $\bar\nu_{z,t} < \sigma^2$,  the expected instantaneous regret of \ref{alg:OFUL-AF} is 
	\als{
		\EE{r_t(\text{\textnormal{\ref{alg:OFUL-AF}}})}  &\le \EE{2(\alpha_t^\sigma + \lambda^{1/2}S)\norm{x_t}_{\overline{V}_{t}^{-1}}}\\	
		&= \EE{2(\sqrt{\bar\nu_{z,t}}\alpha_t + \lambda^{1/2}S)\norm{x_t}_{\overline{V}_{t}^{-1}}}\\	
		&= 2\EE{\EE{(\sqrt{\bar\nu_{z,t}}\alpha_t + \lambda^{1/2}S)\norm{x_t}_{\overline{V}_{t}^{-1}}| \bm{X}_t}}\\	
		&= 2\EE{\alpha_t \norm{x_t}_{\overline{V}_{t}^{-1}}\EE{\sqrt{\bar\nu_{z,t}}| \bm{X}_t} + \lambda^{1/2}S\norm{x_t}_{\overline{V}_{t}^{-1}}}\\	
		&\le  2\alpha_t \norm{x_t}_{\overline{V}_{t}^{-1}}\EE{\EE{\sqrt{ \frac{(t-2)\hat\nu_{z,t-1}}{\chi^2_{1-\delta,t}}}| \bm{X}_t}} + 2\lambda^{1/2}S\norm{x_t}_{\overline{V}_{t}^{-1}}\\	
		&=  2\alpha_t \norm{x_t}_{\overline{V}_{t}^{-1}}\sqrt{\frac{(t-2)}{\chi^2_{1-\delta,t}}}\EE{\sqrt{\hat\nu_{z,t-1}}} + 2\lambda^{1/2}S\norm{x_t}_{\overline{V}_{t}^{-1}}.
	}
	Since $\hat\nu_{z,t-1}$ is an unbiased estimator of the sample variance of hybrid rewards, $\EE{\hat\nu_{z,t-1}} = \Var{z_{s,q}}$ for $s \in \{1, \ldots, t\}$. Using \cref{thm:estProp}, we have  $\Var{z_{s,q}} =  \left(1 + \frac{q}{t-q-3} \right)(1-{\rho^2})\sigma^2 =  \left(\frac{(t-3)(1-{\rho^2})}{t-q-3} \right)\sigma^2$ as $t^{\text{th}}$ observation is not available at the beginning of the round $t$. With increasing $t$, $C_t = \sqrt{\frac{(t-2)}{\chi^2_{1-\delta,t}}}$ tends to 1. With all these observations, we have
	
	\als{
		\EE{r_t(\text{\textnormal{\ref{alg:OFUL-AF}}})}  &\le 2C_t\left(\frac{(t-3)(1-{\rho^2})}{t-q-3} \right)^{\frac{1}{2}}\sigma \alpha_t \norm{x_t}_{\overline{V}_{t}^{-1}} + 2\lambda^{1/2}S\norm{x_t}_{\overline{V}_{t}^{-1}} \\
		&=2\left(C_t\left(\frac{(t-3)(1-{\rho^2})}{t-q-3} \right)^{\frac{1}{2}}\sigma \alpha_t + \lambda^{1/2}S \right)\norm{x_t}_{\overline{V}_{t}^{-1}}
	}
	
	Let $r_t(\text{\textnormal{OFUL}})$ be the upper bound on instantaneous regret for OFUL algorithm, i.e., $r_t(\text{\textnormal{OFUL}}) = 2\left(\sigma \alpha_t + \lambda^{1/2}S \right)\norm{x_t}_{\overline{V}_{t}^{-1}}$. Then, we have
	
	\als{
		\EE{r_t(\text{\textnormal{\ref{alg:OFUL-AF}}})}  &\le 2C_t\left(\frac{(t-3)(1-{\rho^2})}{t-q-3} \right)^{\frac{1}{2}} \left(\sigma \alpha_t + \lambda^{1/2}S \right)\norm{x_t}_{\overline{V}_{t}^{-1}} \\
		&\qquad +  2\left(1 - C_t\left(\frac{(t-3)(1-{\rho^2})}{t-q-3} \right)^{\frac{1}{2}} \right)\lambda^{1/2}S\norm{x_t}_{\overline{V}_{t}^{-1}}  \\
		&\le C_t\left(\frac{(t-3)(1-{\rho^2})}{t-q-3} \right)^{\frac{1}{2}}  r_t(\text{\textnormal{OFUL}}) \\
		&\qquad +  2\left(1 - C_t\left(\frac{(t-3)(1-{\rho^2})}{t-q-3} \right)^{\frac{1}{2}} \right)\lambda^{1/2}S\norm{x_t}_{\overline{V}_{t}^{-1}}.
	}
	\eqs{
	 	\implies \EE{r_t(\text{\textnormal{\ref{alg:OFUL-AF}}})}  \le \widetilde{O}\left( \left(\frac{(t-3)(1-\rho^2)}{t-q-3}\right)^{\frac{1}{2}} r_t(\text{\textnormal{OFUL}}) \right). \qedhere
	}
\end{proof}

\instRegretGen*
\begin{proof}
	Let $\kA$ be an AFC bandit algorithm with $|f^\kA_t(x) - f(x)|  \le \sigma h(x, \cO_t) + l(x, \cO_t)$ and $\bar\nu_{e,z,t}$ be the upper bound on sample variance of hybrid reward. After $\kA$ uses hybrid rewards for estimating function $f$, then, with probability at least $1-2\delta$,
	\eq{
		\label{eq:genConfBound}
		|f^\kA_t(x) - f(x)|  \le \min(\sigma, (\bar\nu_{e,z,t})^{\frac{1}{2}}) h(x, \cO_t) + l(x, \cO_t) 
	}
	
	The proof follows similar steps as the first part of the proof of \cref{thm:instRegret}. The only key difference is the upper bound of variance of hybrid rewards, which depends on the underlying sampling strategy based on whether auxiliary functions are known or unknown. The upper bound on sample variance is given by $\bar\nu_{e,z,t} = \frac{(t-2)\hat\nu_{e,z,t-1}}{\chi^2_{1-\delta, t}}$, where $\hat\nu_{e, z,t-1}$ is an unbiased sample variance estimate of hybrid rewards using $t-1$ observations with sampling strategy $e$ and $\chi^2_{1-\delta, t}$ (implying the variance upper bound holds with at least probability of $1-\delta$) denotes $100(1-\delta)^{\text{th}}$ percentile value of the chi-squared distribution with $t-2$ degrees of freedom.
	
	Let action $x_t$ be selected in the round $t$. Then, the instantaneous regret is given as follows:
	\als{
		r_t &= \max_{x \in \cX}f(x) - f(x_{t}) = f(x^\star) - f(x_{t})  \hspace{2cm} (\text{as } x^\star = \max_{x \in \cX} f(x)) \\
		&\le \left|f^\kA_{t}(x^\star) + \min(\sigma, (\bar\nu_{e,z,t})^{\frac{1}{2}}) h(x^\star, \cO_t) + l(x^\star, \cO_t)  - f(x_{t})\right| \\
		&\le \left|f^\kA_{t}(x_t) + \min(\sigma, (\bar\nu_{e,z,t})^{\frac{1}{2}}) h(x_t, \cO_t) + l(x_t, \cO_t)  - f(x_{t})\right| \\
		&\le \left|f^\kA_{t}(x_t) - f(x_{t})\right|  +  \min(\sigma, (\bar\nu_{e,z,t})^{\frac{1}{2}}) h(x_t, \cO_t) + l(x_t, \cO_t) \\
		&\le 2\min(\sigma, (\bar\nu_{e,z,t})^{\frac{1}{2}}) h(x_t, \cO_t) + l(x_t, \cO_t), 
	}
	in which the first and last inequalities have used the upper bound given in \cref{eq:genConfBound}, and the second inequality follows because actions are selected using the upper confidence bounds. The remaining proof will follow the similar steps as the second part of \cref{thm:instRegret} except using  \cref{thm:estApproxProp} instead of \cref{thm:estProp} for quantifying the variance reduction due to hybrid rewards when IS or MF sampling strategy is used for estimating auxiliary feedback function. 
\end{proof}
	
	    \subsection{Auxiliary feedback in contextual bandits}
		\label{ssec:contextual}

Many real-life applications have some additional information readily available for the learner before selecting an action, e.g., users' profile information is known to the online platform before making any recommendations. 
Such information is treated as contextual information in bandit literature, and the bandit problem having contextual information is refereed as contextual bandits \citep{WWW10_li2010contextual}.
Since the value of the reward function also depends on the context, the learner's goal is to use contextual information to select a better action. 

We extend our results for the contextual bandits problem. In this setting, we assume that a learner has been given an action set denoted by $\cA$.
In round $t$, the environment generates a vector $\left(x_{t,a}, y_{t,a}, \left\{w_{t,a,i}\right\}_{i=1}^q\right)$ for each action ${a \in \cA}$.
Here, $x_{t,a}$ is the context-action $d$-dimensional feature vector of observed context in round $t$ and action $a$, $y_{t,a}$ is the stochastic reward received for context-action pair $x_{t,a}$, and $w_{t,a,i}$ is the $i^{\text{th}}$ auxiliary feedback associated with the reward $y_{t,a}$. 
We assume that the reward is a function of the context-action pair $x_{t,a}$, which is given as $y_{t,a} = f(x_{t,a}) + \epsilon_{t}$, where $f: \R^d \rightarrow \R$ is an unknown function and $\epsilon_{t}$ is a zero-mean Gaussian noise with variance $\sigma^2$.
The auxiliary feedback is also assumed to be a function of the context-action pair $x_{t,a}$, given as $W_{t,a,i} = g_i(x_{t,a}) + \epsilon_{t,i}^w$, where $g_i:\R^d \rightarrow \R$ and $\epsilon_{t,i}^w$ is a zero-mean Gaussian noise with variance $\sigma_{w}^2$. 
The correlation coefficient between reward and associated auxiliary feedback is denoted by $\rho$.

We denote the optimal action for a context observed in the round $t$ as $a_t^\star = \argmax_{a \in \cA} f(x_{t,a})$. 
The interaction between a learner and its environment is given as follows. 
At the beginning of round $t$, the environment generates a context, and then the learner selects an action $a_t$ from action set $\cA$ for that context using past information of context-actions feature vector, observed rewards and its associated auxiliary feedback until round $t-1$. After selecting action $a_t$, the learner receives a reward $(y_{t,a_t})$ with its associated auxiliary feedback and incurs a penalty (or instantaneous regret) $r_t$, where $r_t = f(x_{t,a_t^\star}) - f(x_{t,a_t})$.
We aim to learn a sequential policy that selects actions to minimize the total penalty and evaluate the performance of such policy through {\em regret}, which is the sum of the penalty incurred by the learner. Formally, for $T$ contexts, the regret of a policy $\pi$ that selects action $a_t$ for a context observed in round $t$ is given by
\eq{
	\label{eqn:regretContx}
	\Regret_T(\pi) = \sum_{t=1}^{T} \left( f(x_{t,a_t^\star}) - f(x_{t,a_t})\right).
}
A policy $\pi$ is a good policy when it has sub-linear regret. This implies that the policy will eventually learn to recommend the best action for every context.
Similar to the parameterized bandit problem case, we can use the existing contextual bandit algorithms, which are AFC bandit algorithms. 
Depending on the problem, an appropriate AFC contextual bandit algorithm is selected that uses hybrid rewards to estimate reward function. The smaller variance of hybrid rewards leads to tighter upper confidence bound of the unknown reward function and hence smaller regret.
	
		\subsection{More details about experiments}
		\label{ssec:more_experiment}

To demonstrate the performance gain from using auxiliary feedback, we have considered three different bandit settings: linear bandits, linear contextual bandits, and non-linear contextual bandits. The details of the problem instance used in our experiments are as follows.

\begin{figure*}
	\centering
	\subfloat[Biased Estimator]{\label{fig:oful_biased}
		\includegraphics[scale=0.31]{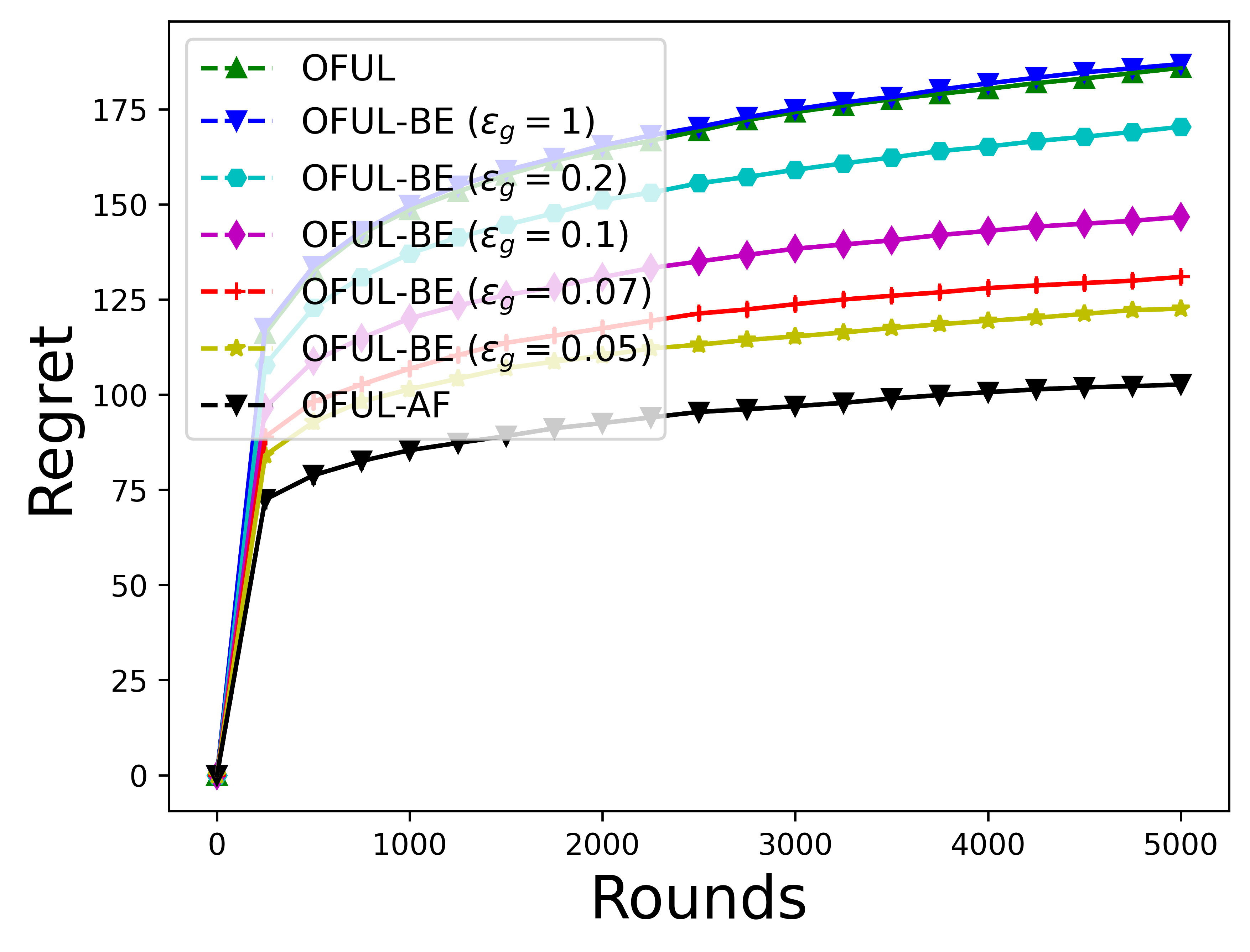}}
	\subfloat[Estimator from historical data]{\label{fig:oful_history}
		\includegraphics[scale=0.31]{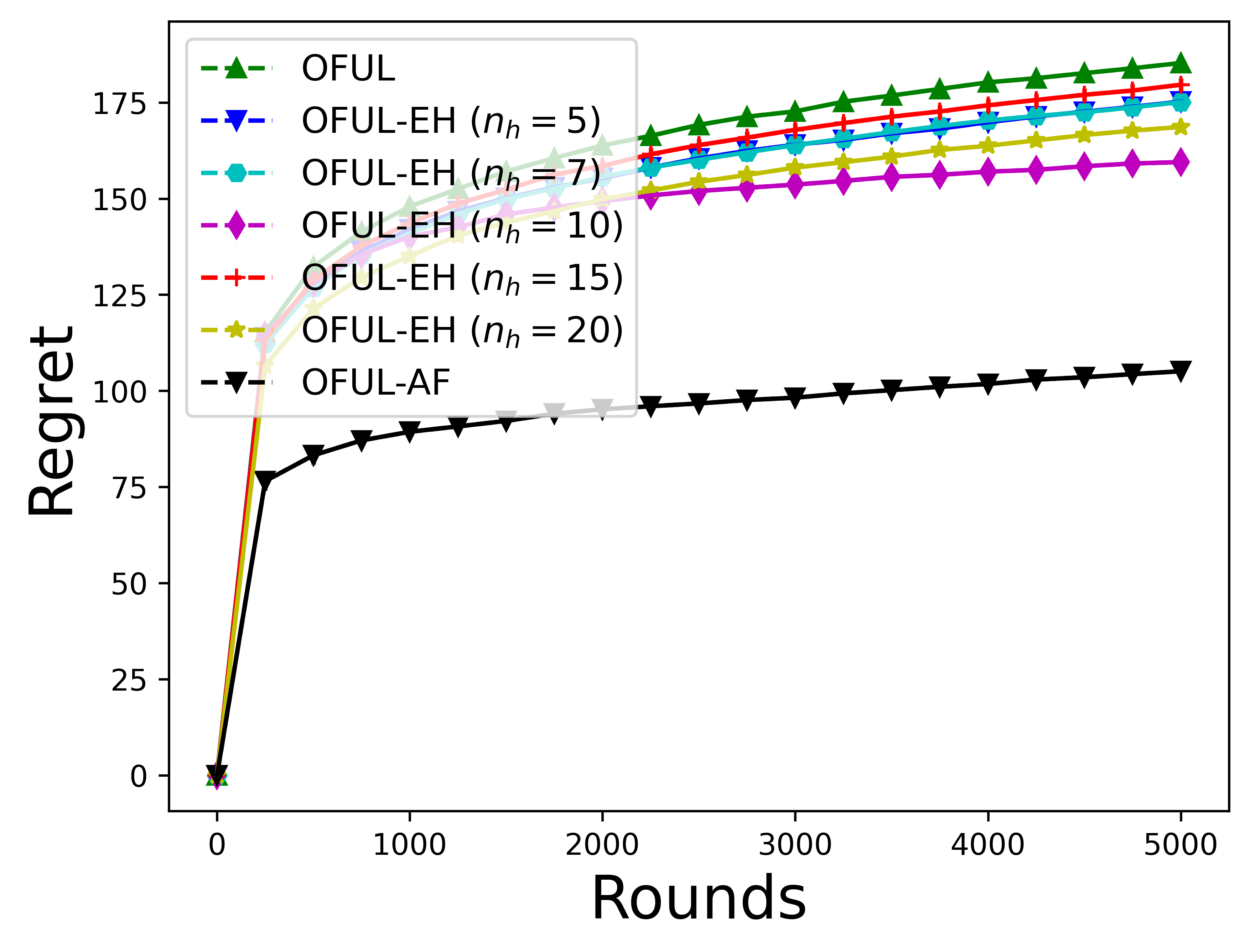}}
	\subfloat[Influence of correlation]{\label{fig:oful_correlation}
		\includegraphics[scale=0.31]{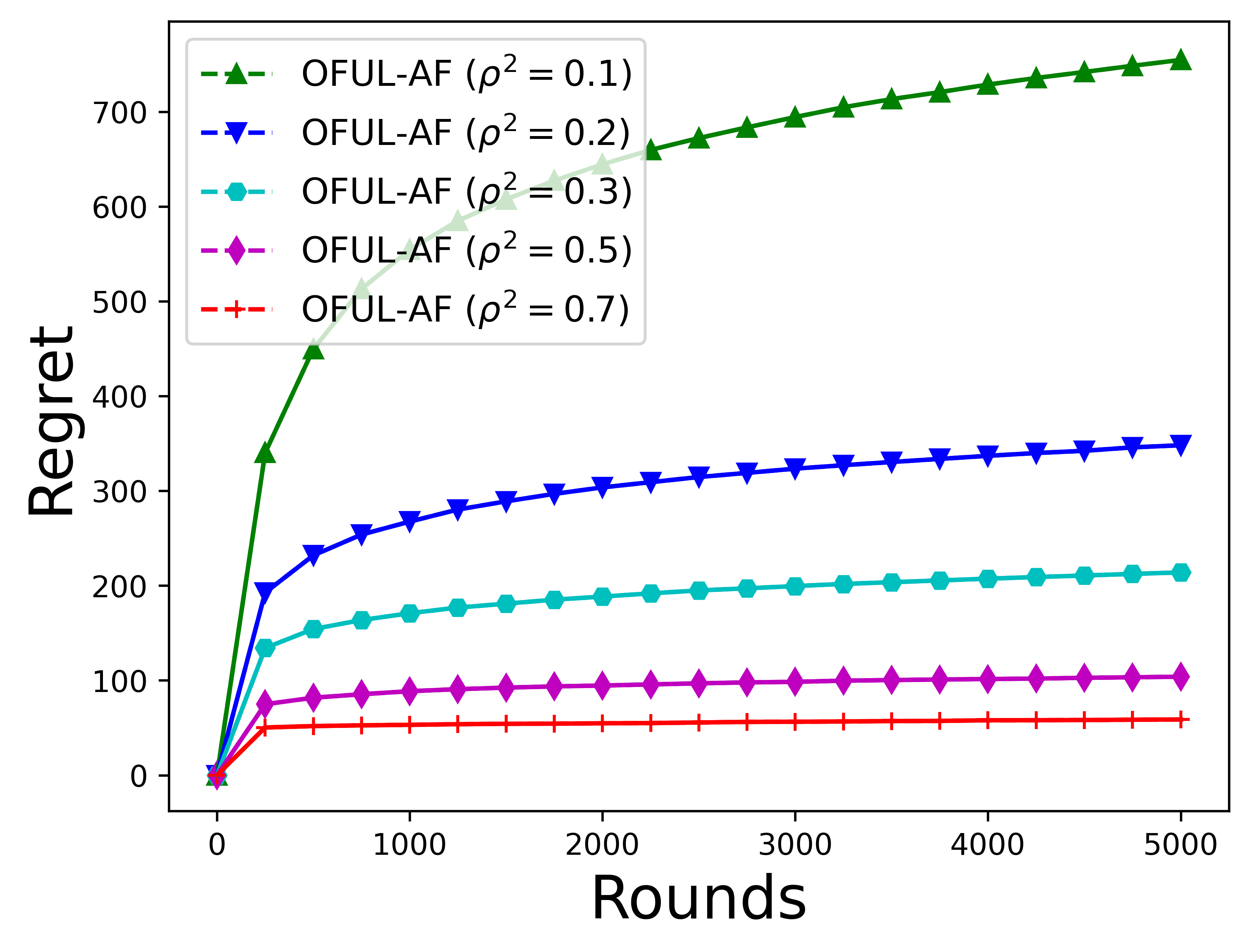}} \\
	\subfloat[Biased Estimator]{\label{fig:non_lin_biased}
		\includegraphics[scale=0.31]{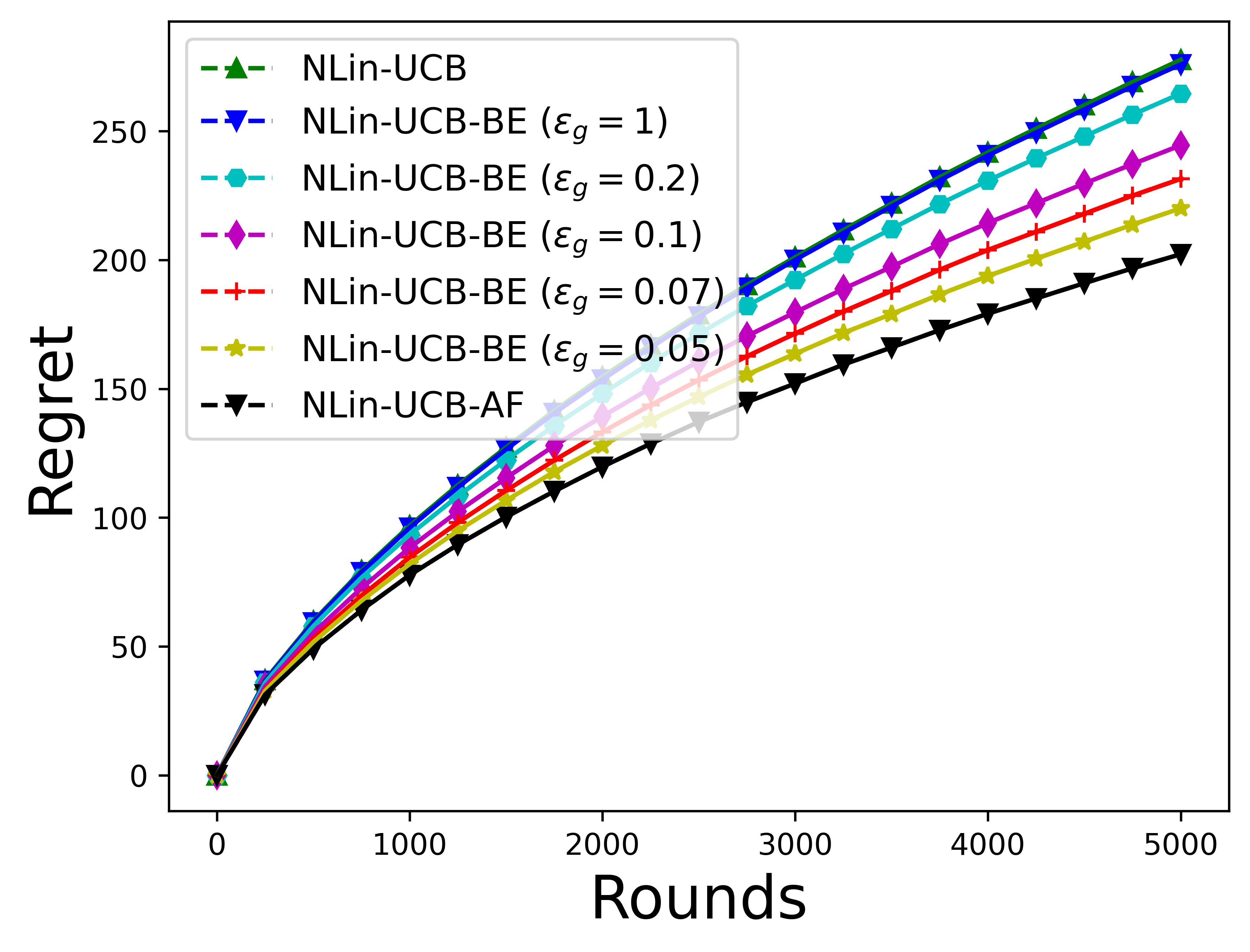}}
	\subfloat[Estimator from historical data]{\label{fig:non_lin_history}
		\includegraphics[scale=0.31]{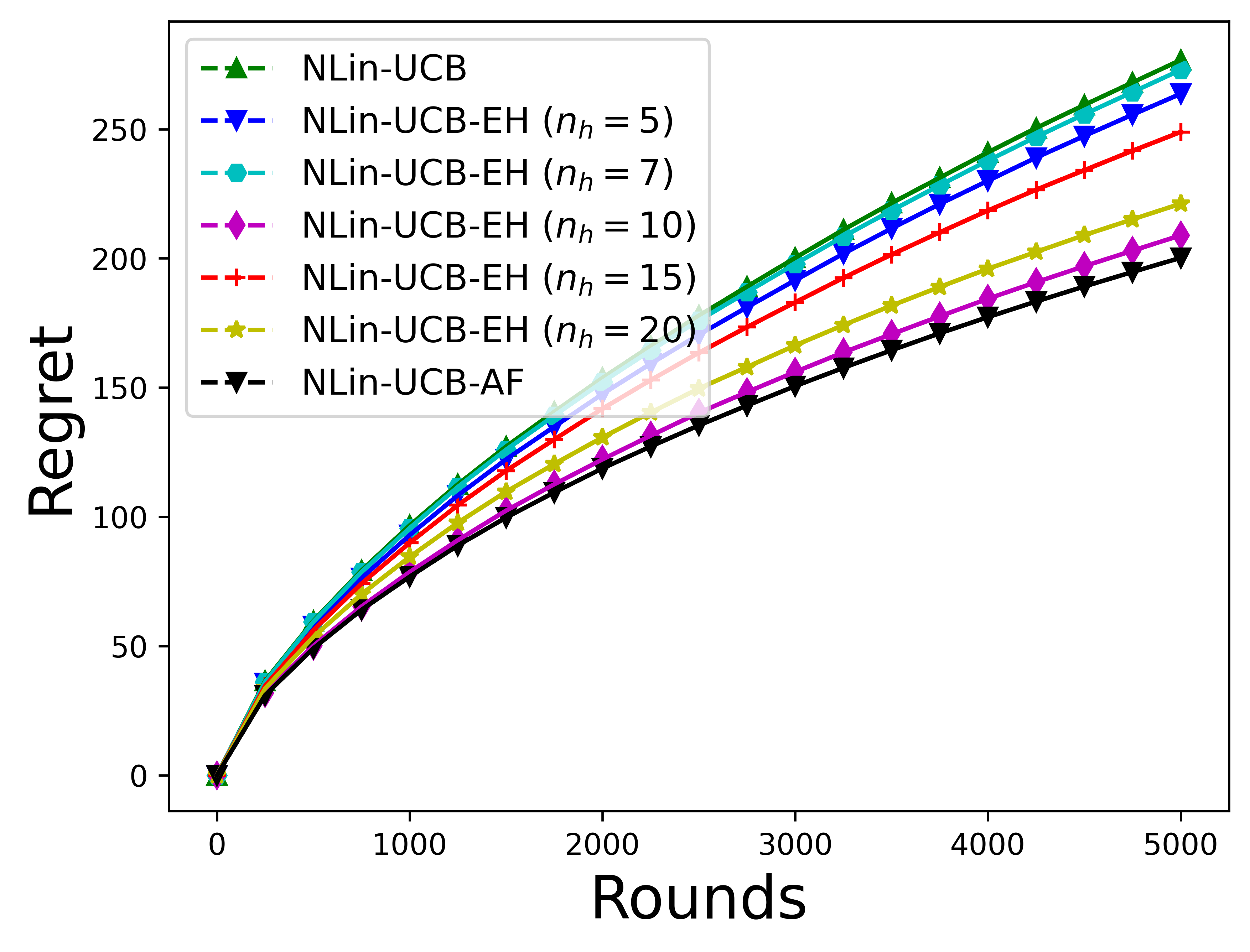}}
	\subfloat[Influence of correlation]{\label{fig:non_lin_correlation}
		\includegraphics[scale=0.31]{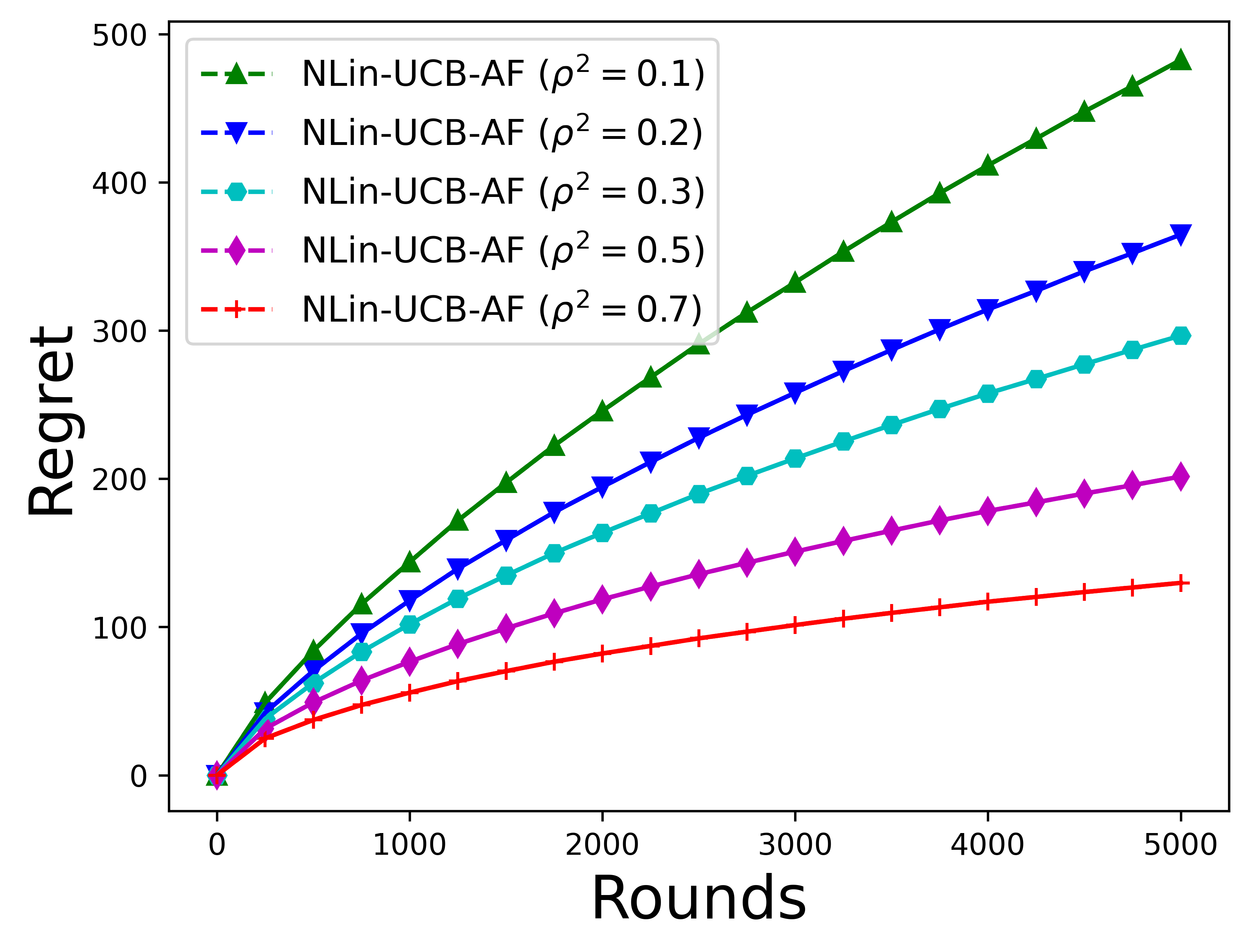}}
	\caption{
		\textbf{Top row:} Experiment using linear bandits problem instance. 
		\textbf{Bottom row:} Experiment using non-linear contextual bandits problem instance. 
		\textbf{Left to right:}: Regret vs. different biases (left figure), regret vs. number of historical samples of auxiliary feedback (middle figure), and regret vs. varying correlation coefficients of reward and its auxiliary feedback (right figure). 
	}
	\label{fig:different_benaviour}
\end{figure*}

\paragraph{Linear bandits:}
We use a $5$-dimensional space in which each sample is represented by $x=(x_1, \ldots, x_5)$, where the value of $x_j$ is restricted in $(-3, 3)$. We randomly select a $5$-dimensional vector $\theta^\star$ with a unit norm whose each value is restricted in $(0, 1)$. In all linear bandits experiments, we use $\lambda = 0.01$, $L = 2.236$, $S=1$, and $\delta=0.05$. In round $t$, the reward for selected action $x_{t}$ is 
\eqs{
	y_t= v_{t} + w_{t},
}
where $v_{t} = x_{t}^\top \theta_v^\star + \epsilon_t^v$ and $w_{t} = x_{t}^\top \theta_w^\star + \epsilon_t^w$. We set $\theta_v^\star = (0,\theta_2^\star,0,\theta_4^\star, 0)$ and $\theta_w^\star = (\theta_1^\star,0,\theta_3^\star,0, \theta_5^\star)$. As we treat $w_{t}$ as auxiliary feedback, $\theta_w^\star$ may be assumed to be known in some experiments. The random noise $\epsilon_t^v$ is zero-mean Gaussian noise with variance $\sigma_{v}^2$. Whereas $\epsilon_t^w$ is also zero-mean Gaussian noise, but the variance is $\sigma_{w}^2$. We assumed that $\sigma^2 = \sigma_{v}^2 + \sigma_{w}^2$ is known, but not the $\sigma_{v}^2$ and $\sigma_{w}^2$. The default value of $\sigma_{v}^2 = 0.01$ and $\sigma_{w}^2 = 0.01$. It can be easily shown that the correlation coefficient of $y_t$ and $w_t$ is $\rho = \sqrt{\sigma_{w}^2/(\sigma_{v}^2+\sigma_{w}^2)}$. We run each experiment for $5000$ rounds.

\paragraph{Linear contextual bandits:}
We first generate a $2$-dimensional synthetic dataset with $5000$ data samples. Each sample is represented by $x=(x_1, x_2)$, where the value of $x_j$ is drawn uniformly at random from $(-1, 1)$. Our action set $\cA$ has four actions: $\{(x_1, x_2), (x_1, -x_2), (-x_1, x_2), (-x_1, -x_2)\}$. We uniformly generate a $\theta^\star$ such that its norm is $1$. In all experiments, the data samples are treated as contexts, and we use $\lambda = 0.01$, $L = 1.41$, $S=1$, and $\delta=0.05$.
The observed reward for a context-action feature vector has two components. We treated one of the components as auxiliary feedback. In round $t$, the reward context-action feature vector $x_{t,a}$ is given as follows: 
\eqs{
	y_{t,a_t} = v_{t,a_t} + w_{t,a_t},
}
where $v_{t,a_t} = x_{t,a}^\top \theta_v^\star + \epsilon_t^v$ and $w_{t,a_t} = x_{t,a}^\top \theta_w^\star + \epsilon_t^w$. We set $\theta_v^\star = (0,\theta_2^\star,0,\theta_4^\star)$ and $\theta_w^\star = (\theta_1^\star,0,\theta_3^\star,0)$. As we treat $w_{t,a_t}$ as auxiliary feedback, $\theta_w^\star$ is known for some experiments. The random noise $\epsilon_t^v$ is zero-mean Gaussian noise with variance $\sigma_{v}^2$. Whereas $\epsilon_t^w$ is also zero-mean Gaussian noise, but the variance is $\sigma_{w}^2$. We assumed that $\sigma^2 = \sigma_{v}^2 + \sigma_{w}^2$ is known, but not the $\sigma_{v}^2$ and $\sigma_{w}^2$. The default value of $\sigma_{v}^2 = 0.01$ and $\sigma_{w}^2 = 0.01$. It can be easily shown that the correlation coefficient of $y_{t,a}$ and $w_{t,a}$ is $\rho = \sqrt{\sigma_{w}^2/(\sigma_{v}^2+\sigma_{w}^2)}$.

\paragraph{Non-linear contextual bandits:}
This problem instance is adapted from the linear contextual bandits problem instance. We first generate a $2$-dimensional synthetic dataset with $5000$ data samples. Each sample is represented by $x=(x_1, x_2)$, where the value of $x_j$ is drawn uniformly at random from $(-1, 1)$. We then use a polynomial kernel with degree $2$ to have a non-linear transformation of samples. We removed (i.e., bias) the first (i.e., $1$) and last value $(i.e., x_2^2)$ from the transformed samples, which reduced the dimensional of each transformed sample to $4$ and represented as $(x_1, x_2, x_1^2, x_1x_2)$, which is used as context. For this setting, the action set $\cA$ has six actions: $\{(x_1, x_2, -x_1^2, -x_1x_2), (x_1, -x_2, x_1^2, -x_1x_2), (-x_1, x_2, x_1^2, -x_1x_2), (x_1, -x_2, -x_1^2, x_1x_2),$ $ (-x_1, x_2, -x_1^2, x_1x_2),(-x_1, -x_2, x_1^2, x_1x_2),\}$. We uniformly generate a $\theta^\star$ such that its norm is $1$. In all experiments, we use $\lambda = 0.01$, $L = 2$, $S=1$, and $\delta=0.05$.
The observed reward for a context-action feature vector has two components. We treated one of the components as auxiliary feedback. In round $t$, the reward context-action feature vector $x_{t,a}$ is given as follows: 
\eqs{
	y_{t,a_t} = v_{t,a_t} + w_{t,a_t},
}
where $v_{t,a_t} = x_{t,a}^\top \theta_v^\star + \epsilon_t^v$ and $w_{t,a_t} = x_{t,a}^\top \theta_w^\star + \epsilon_t^w$. We set $\theta_v^\star = (0,\theta_2^\star,0,\theta_4^\star, 0,\theta_6^\star,0,\theta_8^\star)$ and $\theta_w^\star = (\theta_1^\star,0,\theta_3^\star,0,\theta_5^\star,0,\theta_7^\star,0)$. As we treat $w_{t,a_t}$ as auxiliary feedback, $\theta_w^\star$ is known for some experiments. The random noise $\epsilon_t^v$ is zero-mean Gaussian noise with variance $\sigma_{v}^2$. Whereas $\epsilon_t^w$ is also zero-mean Gaussian noise, but the variance is $\sigma_{w}^2$. We assumed that $\sigma^2 = \sigma_{v}^2 + \sigma_{w}^2$ is known, but not the $\sigma_{v}^2$ and $\sigma_{w}^2$. The default value of $\sigma_{v}^2 = 0.01$ and $\sigma_{w}^2 = 0.01$. It can be easily shown that the correlation coefficient of $y_{t,a}$ and $w_{t,a}$ is $\rho = \sqrt{\sigma_{w}^2/(\sigma_{v}^2+\sigma_{w}^2)}$.

\begin{figure*}
	\centering
	\subfloat[Linear Bandit]{\label{fig:oful_erh}
		\includegraphics[scale=0.31]{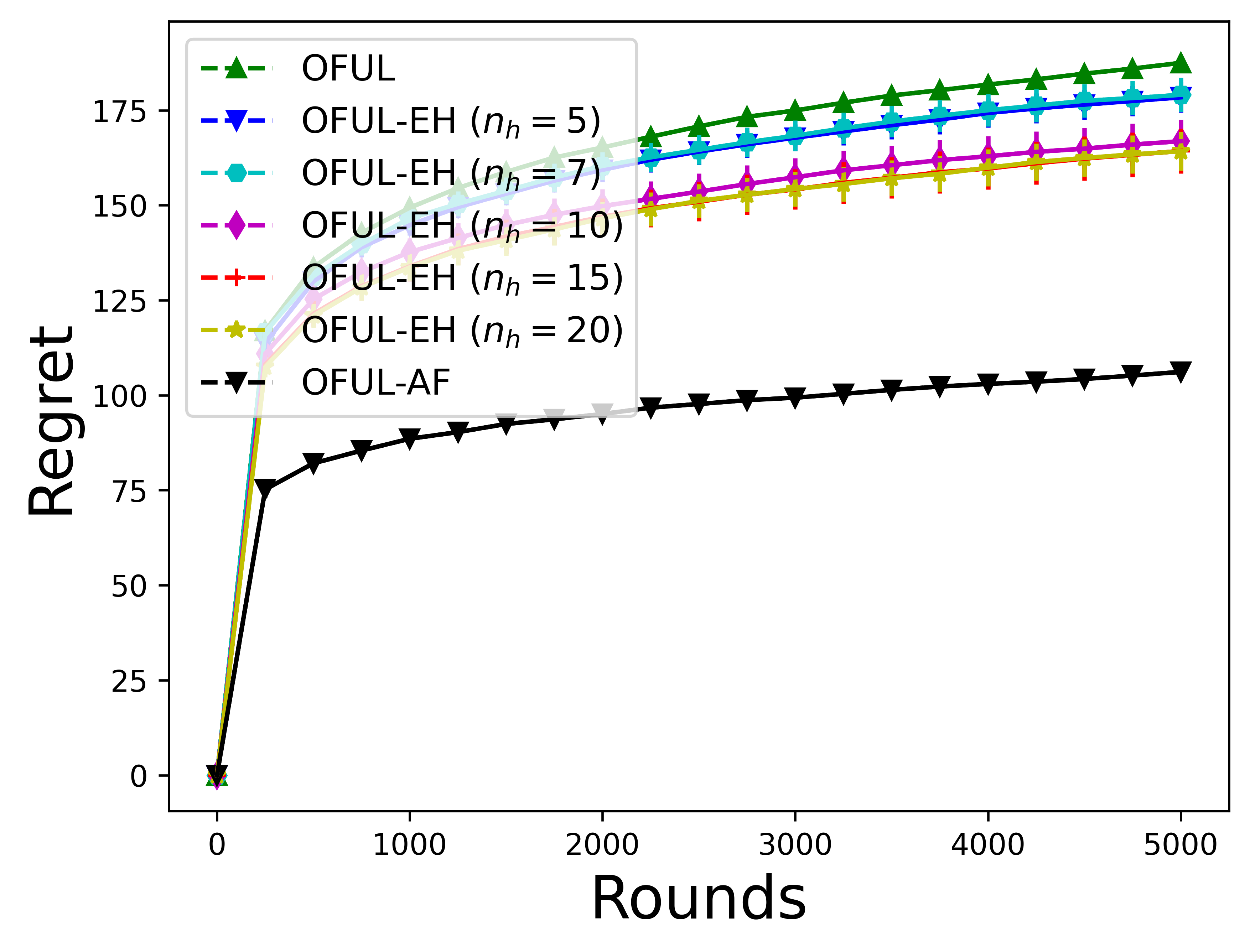}}
	\subfloat[Linear Contextual Bandits]{\label{fig:lin_ucb_erh}
		\includegraphics[scale=0.31]{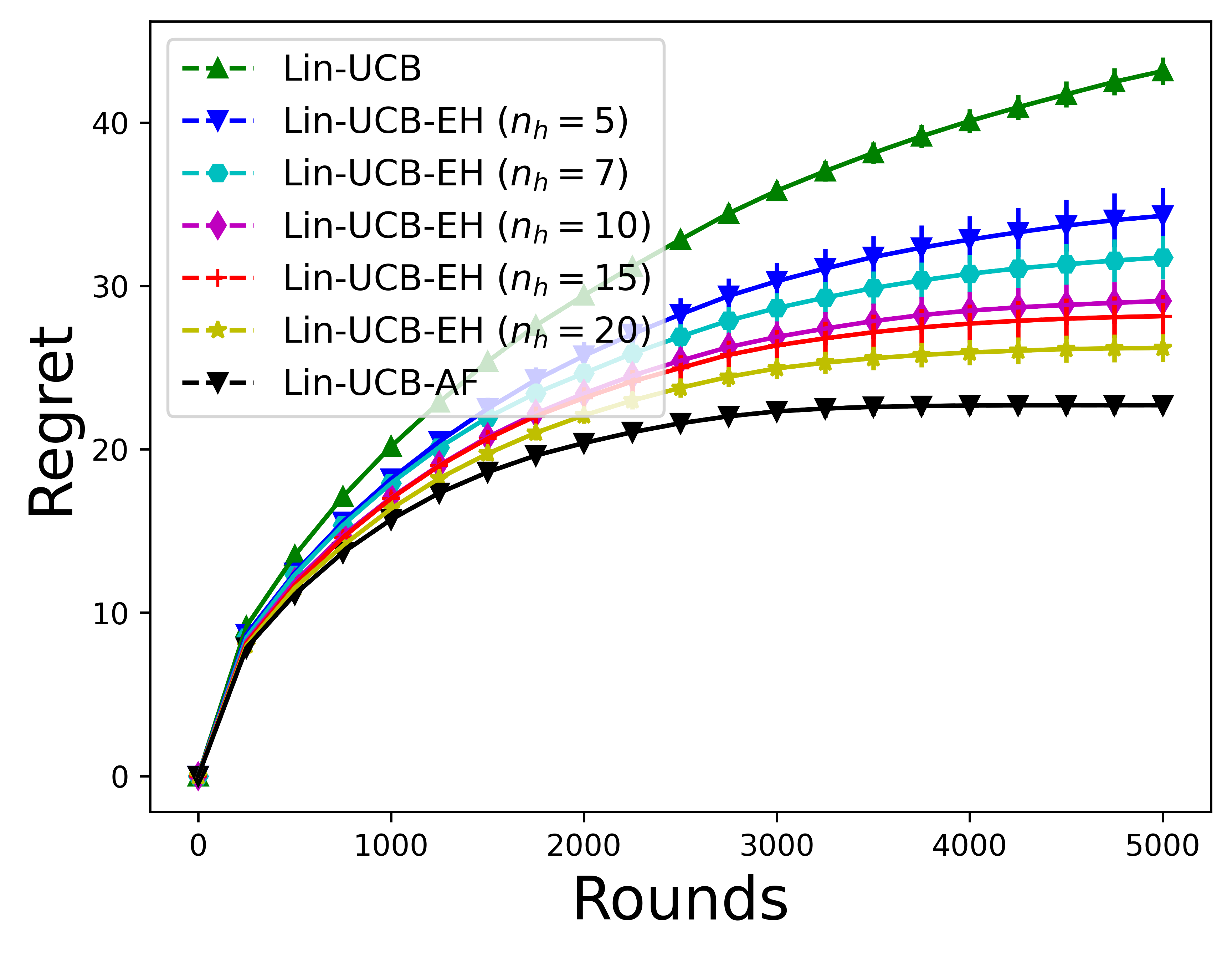}}
	\subfloat[Non-linear Contextual Bandits]{\label{fig:nlin_ucb_erh}
		\includegraphics[scale=0.31]{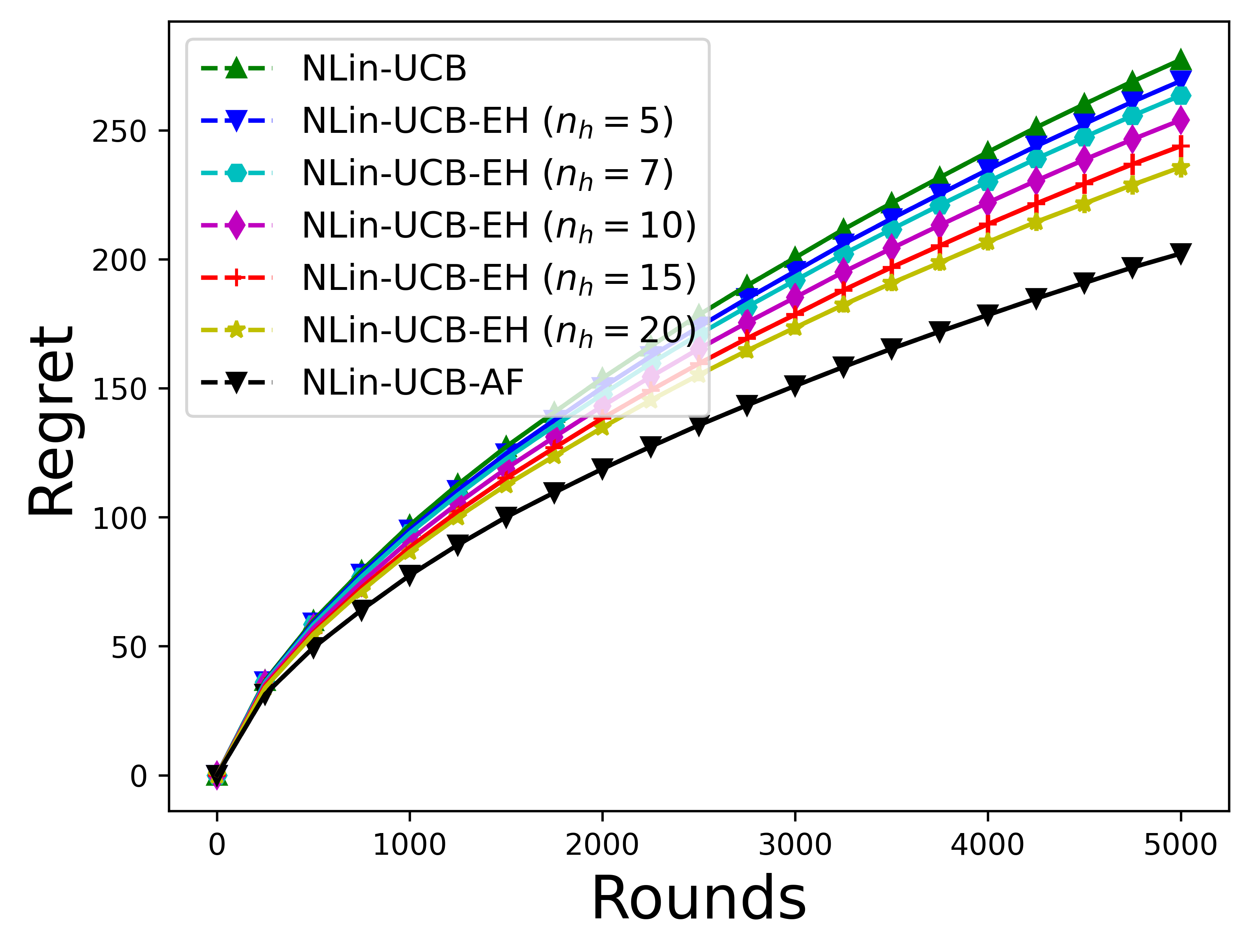}}
	\caption{
		Comparing regret vs. the number of historical samples of auxiliary feedback in different settings. In this experiment, historical samples for each run are randomly generated as compared to \cref{fig:history}, \cref{fig:oful_history}, and \cref{fig:non_lin_history} where history is kept fixed across the runs.
	}
	\label{fig:random_history}
\end{figure*}

\paragraph{Regret with varying correlation coefficient:}
As the correlation coefficient of reward and auxiliary feedback is $\rho = \sqrt{\sigma_{w}^2/(\sigma_{v}^2+\sigma_{w}^2)}$, we varied $\sigma_{v}$ over the values $\{0.3, 0.2, 0.1528, 0.1, 0.0655\}$ to obtain problem instances with different correlation coefficient for all problem instances. 

\paragraph{Variance estimation:} Since the value of $\sigma^2$ is know in all our experiments, we directly estimate the correlation coefficient ($\rho$) as $\hat\rho = \text{Cov}(y,w)/(\sqrt{\Var{w}}\sigma)$. Then, use it to set $\bar\nu_{e,z,t}) = (1 - \hat\rho^2)\sigma^2$.

	\vspace{3mm}
	\hrule
\end{document}